\documentclass{article}

\usepackage[final]{neurips_2019}

\usepackage[utf8]{inputenc} %
\usepackage[T1]{fontenc}    %
\usepackage{hyperref}       %
\usepackage{url}            %
\usepackage{booktabs}       %
\usepackage{amsfonts}       %
\usepackage{nicefrac}       %
\usepackage{microtype}      %
\usepackage{amsmath}
\usepackage{amssymb}
\usepackage{enumerate}
\usepackage{graphicx}
\usepackage{subcaption}
\usepackage{algorithm}
\usepackage{algorithmic}
\usepackage{bm}
\usepackage{xcolor}
\usepackage{color}
\usepackage{amsthm}
\usepackage{tikz}
\usepackage[font=small,labelfont=bf]{caption}
\newcommand*\circled[1]{\tikz[baseline=(char.base)]{
   \node[shape=circle,draw,inner sep=1pt] (char) {#1};}}

\definecolor{mydarkblue}{rgb}{0,0.1,0.45}
\hypersetup{
    colorlinks=true,
    linkcolor=mydarkblue,
    citecolor=mydarkblue,
    filecolor=mydarkblue,
    urlcolor=mydarkblue}

\newtheorem{thm}{Theorem}
\newtheorem{lem}{Lemma}

\newtheorem{deff}{Definition}
\newtheorem{rem}{Remark}
\newtheorem{con}{Condition}
\newtheorem{ass}{Assumption}

\newcommand{\real}{\mathbb{R}}
\newcommand{\sphere}{\mathbb{S}}
\newcommand{\prob}{\mathbb{P}}
\newcommand{\expect}{\mathbb{E}}
\newcommand{\indicator}{\mathbb{I}}
\newcommand{\rade}{\mathcal{R}}
\newcommand{\loss}{\mathcal{L}}
\newcommand{\bigo}{\mathcal{O}}
\newcommand{\normal}{\mathcal{N}}
\newcommand{\hilbert}{\mathcal{H}}
\newcommand{\data}{\mathcal{D}}
\newcommand{\tdata}{\mathcal{S}}
\newcommand{\fisher}{\mathbf{F}}

\newcommand{\hessian}{\mathbf{H}}
\newcommand{\jacobian}{\mathbf{J}}
\newcommand{\gram}{\mathbf{G}}
\newcommand{\iden}{\mathbf{I}}
\newcommand{\weight}{\mathbf{w}}
\newcommand{\params}{\bm{\theta}}
\newcommand{\inputs}{\mathbf{x}}

\newcommand{\pred}{\mathbf{u}}
\newcommand{\target}{\mathbf{y}}
\newcommand{\error}{\mathbf{e}}
\newcommand{\lr}{\eta}
\newcommand{\vars}{\nu}
\newcommand{\relu}{\phi}
\newcommand{\trace}{\mathrm{tr}}
\newcommand{\bz}{\mathbf{z}}
\newcommand{\bX}{\mathbf{X}}
\newcommand{\bS}{\mathbf{S}}
\newcommand{\bA}{\mathbf{A}}
\newcommand{\bB}{\mathbf{B}}
\newcommand{\bZ}{\mathbf{Z}}
\newcommand{\bT}{\mathbf{T}}
\newcommand{\ba}{\mathbf{a}}

\title{Fast Convergence of Natural Gradient Descent \\ for Overparameterized Neural Networks}

\author{
Guodong Zhang${}^{1, 2}$,  James Martens${}^{3}$,  Roger Grosse${}^{1, 2}$\\
University of Toronto${}^{1}$,  Vector Institute${}^{2}$,  DeepMind${}^{3}$\\
\texttt{\{gdzhang, rgrosse\}@cs.toronto.edu}, \texttt{jamesmartens@google.com}
}

\begin{document}

\maketitle

\begin{abstract}
Natural gradient descent has proven effective at mitigating the effects of pathological curvature in neural network optimization, but little is known theoretically about its convergence properties, especially for \emph{nonlinear} networks. In this work, we analyze for the first time the speed of convergence of natural gradient descent on nonlinear neural networks with squared-error loss. We identify two conditions which guarantee efficient convergence from random initializations: (1) the Jacobian matrix (of network's output for all training cases with respect to the parameters) has full row rank, and (2) the Jacobian matrix is stable for small perturbations around the initialization. For two-layer ReLU neural networks, we prove that these two conditions do in fact hold throughout the training, under the assumptions of nondegenerate inputs and overparameterization. We further extend our analysis to more general loss functions. Lastly, we show that K-FAC, an approximate natural gradient descent method, also converges to global minima under the same assumptions, and we give a bound on the rate of this convergence.
\end{abstract}

\section{Introduction}

Because training large neural networks is costly, there has been much interest in using second-order optimization to speed up training \citep{becker1989improving,martens2010deep,martens2015optimizing}, and in particlar natural gradient descent~\citep{amari1998natural, amari1997neural}. Recently, scalable approximations to natural gradient descent have shown practical success in a variety of tasks and architectures~\citep{martens2015optimizing,grosse2016kronecker,wu2017scalable,zhang2018noisy,martens2018kronecker}. Natural gradient descent has an appealing interpretation as optimizing over a Riemannian manifold using an intrinsic distance metric; this implies the updates are invariant to transformations such as whitening \citep{ollivier2015riemannian,luk2018coordinate}. It is also closely connected to Gauss-Newton optimization, suggesting it should achieve fast convergence in certain settings~\citep{pascanu2013revisiting,martens2014new,botev2017practical}.

Does this intuition translate into faster convergence? \citet{amari1998natural} provided arguments in the affirmative, as long as the cost function is well approximated by a convex quadratic.
However, it remains unknown whether natural gradient descent can optimize neural networks faster than gradient descent --- a major gap in our understanding. The problem is that the optimization of neural networks is both nonconvex and non-smooth, making it difficult to prove nontrivial convergence bounds. In general, finding a global minimum of a general non-convex function is an NP-complete problem, and neural network training in particular is NP-complete~\citep{Blum1992TrainingA3}.

However, in the past two years, researchers have finally gained substantial traction in understanding the dynamics of gradient-based optimization of neural networks. Theoretically, it has been shown that gradient descent starting from a random initialization is able to find a global minimum if the network is wide enough~\citep{li2018learning, du2018gradient, du2018gradient-2, zou2018stochastic, allen2018convergence, oymak2019towards}. The key technique of those works is to show that neural networks become well-behaved if they are largely overparameterized in the sense that the number of hidden units is polynomially large in the size of the training data. However, most of these works have focused on standard gradient descent, leaving open the question of whether similar statements can be made about other optimizers.

Most convergence analysis of natural gradient descent has focused on simple convex quadratic objectives (e.g.~\citep{martens2014new}).
Very recently, the convergence properties of NGD were studied in the context of linear networks~\citep{bernacchia2018exact}. While the linearity assumption simplifies the analysis of training dynamics~\citep{saxe2013exact}, linear networks are severely limited in terms of their expressivity, and it’s not clear which conclusions will generalize from linear to nonlinear networks.

In this work, we analyze natural gradient descent for \emph{nonlinear networks}. We give two simple and generic conditions on the Jacobian matrix which guarantee efficient convergence to a global minimum. We then apply this analysis to a particular distribution over two-layer ReLU networks which has recently been used to analyze the convergence of gradient descent~\citep{li2018learning, du2018gradient-2, oymak2019towards}. We show that for sufficiently high network width, NGD will converge to the global minimum. We give bounds on the convergence rate of two-layer ReLU networks that are much better than the analogous bounds that have been proven for gradient descent~\citep{du2018gradient, wu2019global, oymak2019towards}, while allowing for much higher learning rates. Moreover, in the limit of infinite width, and assuming a squared error loss, we show that NGD converges in just \emph{one iteration}.
The main contributions of our work are summarized as follows:
\begin{itemize}
    \setlength\itemsep{0.05em}
    \item We provide the first convergence result for natural gradient descent in training randomly-initialized overparameterized neural networks where the number of hidden units is polynomially larger than the number of training samples. We show that natural gradient descent gives an $\bigo(\lambda_\mathrm{min}(\gram^\infty))$ improvement in convergence rate given the same learning rate as gradient descent, where $\gram^\infty$ is a Gram matrix that depends on the data.
    \item We show that natural gradient enables us to use a much larger step size, resulting in an even faster convergence rate. Specifically, the maximal step size of natural gradient descent is $\bigo\left(1 \right)$ for (polynomially) wide networks.
    \item We show that K-FAC \citep{martens2015optimizing}, an approximate natural gradient descent method, also converges to global minima with linear rate, although this result requires a higher level of overparameterization compared to GD and exact NGD.
    \item We analyze the generalization properties of NGD, showing that the improved convergence rates \emph{provably} don't come at the expense of worse generalization.
\end{itemize}

\section{Related Works}

Recently, there have been many works studying the optimization problem in deep learning, i.e., why in practice many neural network architectures reliably converge to global minima (zero training error). One popular way to attack this problem is to analyze the underlying loss surface~\citep{hardt2016identity, kawaguchi2016deep, kawaguchi2018depth, nguyen2017loss, soudry2016no}. The main argument of those works is that there are no bad local minima. It has been proven that gradient descent can find global minima~\citep{ge2015escaping, lee2016gradient} if the loss surface satisfies: (1) all local minima are global and (2) all saddle points are strict in the sense that there exists at least one negative curvature direction. Unfortunately, most of those works rely on unrealistic assumptions (e.g., linear activations~\citep{hardt2016identity, kawaguchi2016deep}) and cannot generalize to practical neural networks. Moreover, \citet{yun2018small} shows that small nonlinearity in shallow networks can create bad local minima.

Another way to understand the optimization of neural networks is to directly analyze the optimization dynamics. Our work also falls within this category. However, most work in this direction focuses on gradient descent. \citet{bartlett2019gradient, arora2018a} studied the optimization trajectory of deep linear networks and showed that gradient descent can find global minima under some assumptions. Previously, the dynamics of linear networks have also been studied by \citet{saxe2013exact, advani2017high}. For nonlinear neural networks, a series of papers~\citep{tian2017analytical, brutzkus2017globally, du2017convolutional, li2017convergence, zhang2018learning} studied a specific class of shallow two-layer neural networks together with strong assumptions on input distribution as well as realizability of labels, proving global convergence of gradient descent. Very recently, there are some works proving global convergence of gradient descent~\citep{li2018learning, du2018gradient, du2018gradient-2, allen2018convergence, zou2018stochastic, gao2019learning} or adaptive gradient methods~\citep{wu2019global} on overparameterized neural networks. More specifically, \citet{li2018learning, allen2018convergence, zou2018stochastic} analyzed the dynamics of weights and showed that the gradient cannot be small if the objective value is large. On the other hand, \citet{du2018gradient, du2018gradient-2, wu2019global} studied the dynamics of the outputs of neural networks, where the convergence properties are captured by a Gram matrix. Our work is very similar to~\citet{du2018gradient, wu2019global}. We note that these papers all require the step size to be sufficiently small to guarantee the global convergence, leading to slow convergence.

To our knowledge, there is only one paper~\citep{bernacchia2018exact} studying the global convergence of natural gradient for neural networks. However, \citet{bernacchia2018exact} only studied deep linear networks with infinitesimal step size and squared error loss functions. In this sense, our work is the first one proving global convergence of natural gradient descent on nonlinear networks. 

There have been many attempts to understand the generalization properties of neural networks since \citet{zhang2016understanding}'s seminal paper. Researchers have proposed norm-based generalization bounds~\citep{neyshabur2015norm, neyshabur2017pac, bartlett2002rademacher, bartlett2017spectrally, golowich2017size}, compression bounds~\citep{arora2018stronger} and PAC-Bayes bounds~\citep{dziugaite2017computing, dziugaite2018data, zou2018stochastic}. Recently, overparameterization of neural networks together with good initialization has been believed to be one key factor of good generalization. \citet{neyshabur2018the} empirically showed that wide neural networks stay close to the initialization, thus leading to good generalization. Theoretically, researchers did prove that overparameterization as well as linear convergence jointly restrict the weights to be close to the initialization~\citep{du2018gradient, du2018gradient-2, allen2018convergence, zou2018stochastic, arora2019fine}. The most closely related paper is \citet{arora2019fine}, which shows that the optimization and generalization phenomenon can be explained by a Gram matrix. The main difference is that our analysis is based on natural gradient descent, which converges faster and provably generalizes as well as gradient descent. 

Concurrently and independently, \citet{cai2019gram} showed that natural gradient descent (they call it Gram-Gauss-Newton) enjoys quadratic convergence rate guarantee for overparameterized networks on regression problems. Additionally, they showed that it is much cheaper to precondition the gradient in the output space when the number of data points is much smaller than the number of parameters.

\section{Convergence Analysis of Natural Gradient Descent}\label{sec:ngd}
We begin our convergence analysis of natural gradient descent -- under appropriate conditions -- for the neural network optimization problem. Formally, we consider a generic neural network $f(\params, \inputs)$ with a single output and squared error loss $\ell(u, y) = \frac{1}{2}(u - y)^2$ for simplicity\footnote{It is easy to extend to multi-output networks and other loss functions, here we focus on single-output and quadratic just for notational simplicity.}, where $\params \in \real^m$ denots all parameters of the network (i.e.~weights and biases). Given a training dataset $\left\{\left(\inputs_i, y_i \right) \right\}_{i=1}^n$, we want to minimize the following loss function:
\begin{equation}
    \loss(\params) = \frac{1}{n} \sum_{i=1}^n \ell\left(f(\params, \inputs_i), y_i\right) = \frac{1}{2n} \sum_{i=1}^n \left(f(\params, \inputs_i) - y_i \right)^2 .
\end{equation}
One main focus of this paper is to analyze the following procedure:
\begin{equation}\label{eq:ngd-update}
    \params(k+1) = \params(k) - \lr \fisher(\params(k))^{-1} \frac{\partial\loss(\params(k))}{\partial \params(k)} ,
\end{equation}
where $\lr >0$ is the step size, and $\fisher$ is the Fisher information matrix associated with the network's predictive distribution over $y$ (which is implied by its loss function and is $\normal(f(\params, \inputs_i), 1)$ for the squared error loss) and the dataset's distribution over $\inputs$.

As shown by~\citet{martens2014new}, the Fisher $\fisher$ is equivalent to the generalized Gauss-Newton matrix, defined as $\expect_{\inputs_i} \left[\jacobian_i^\top \hessian_\ell \jacobian_i \right]$ if the predictive distribution is in the exponential family, such as categorical distribution (for classification) or Gaussian distribution (for regression). $\jacobian_i$ is the Jacobian matrix of $\pred_i$ with respect to the parameters $\params$ and $\hessian_\ell$ is the Hessian of the loss $\ell(\pred, \target)$ with respect to the network prediction $\pred$ (which is $\iden$ in our setting). Therefore, with the squared error loss, the Fisher matrix can be compactly written as $\fisher = \expect \left[\jacobian_i^\top \jacobian_i \right] = \frac{1}{n} \jacobian^\top \jacobian$ (which coincides with classical Gauss-Newton matrix), where $\jacobian = [\jacobian_1^\top, ..., \jacobian_n^\top]^\top$ is the Jacobian matrix for the whole dataset. 
In practice, when the number of parameters $m$ is larger than number of samples $n$ we have, the Fisher information matrix $\fisher = \frac{1}{n} \jacobian^\top \jacobian$ is surely singular. In that case, we take the generalized inverse~\citep{bernacchia2018exact} $\fisher^{\dagger} = n \jacobian^\top \gram^{-1} \gram^{-1} \jacobian$ with $\gram = \jacobian \jacobian^\top$, which gives the following update rule:
\begin{equation}
    \params(k+1) = \params(k) - \lr \jacobian^\top \left(\jacobian \jacobian^\top \right)^{-1} (\pred - \target) ,
\end{equation}
where $\pred = [\pred_1, ..., \pred_n]^\top = [f(\params, \inputs_1), ..., f(\params, \inputs_n)]^\top$ and $\target = [y_1, ..., y_n]^\top$.

We now introduce two conditions on the network $f_{\params}$ that suffice for proving the global convergence of NGD to a minimizer which achieves zero training loss (and is therefore a global minimizer). To motivate these two conditions we make the following observations. First, the global minimizer is characterized by the condition that the gradient in the output space is zero for each case (i.e. $\nabla_{\pred}\loss(\params) = \mathbf{0}$). Meanwhile, local minima are characterized by the condition that the gradient with respect to the parameters $\nabla_{\params} \loss(\params)$ is zero. Thus, one way to avoid finding local minima that aren't global is to ensure that the parameter gradient is zero if and only if the output space gradient (for each case) is zero. It's not hard to see that this property holds as long as $\gram$ remains non-singular throughout optimization (or equivalently that $\jacobian$ always has full row rank). The following two conditions ensure that this happens, by first requiring that this property hold at initialization time, and second that $\jacobian$ changes slowly enough that it remains true in a big enough neighborhood around $\params(0)$.
\begin{con}[Full row rank of Jacobian matrix]\label{con:full-row-rank}
    The Jacobian matrix $\jacobian(0)$ at the initialization has full row rank, or equivalently, the Gram matrix $\gram(0) = \jacobian(0) \jacobian(0)^\top$ is positive definite.
\end{con}
\begin{rem}
    Condition~\ref{con:full-row-rank} implies that $m \leq n$, which means the Fisher information matrix is singular and we have to use the generalized inverse except in the case where $m = n$.
\end{rem}
\begin{con}[Stable Jacobian]\label{con:stable-jaco}
    There exists $0 \leq C < \frac{1}{2}$ such that for all parameters $\params$ that satisfy $\|\params - \params(0) \|_2 \leq \frac{3\|\target - \pred(0)\|_2}{\sqrt{\lambda_{\mathrm{min}}(\gram(0))}}$, we have
    \begin{equation}
        \|\jacobian(\theta) - \jacobian(0)\|_2 \leq \frac{C}{3} \sqrt{\lambda_{\mathrm{min}}(\gram(0))} .
    \end{equation}
\end{con}
This condition shares the same spirit with the Lipschtiz smoothness assumption in classical optimization theory. 
It implies (with small $C$) that the network is close to a linearized network~\citep{lee2019wide} around the initialization and therefore natural gradient descent update is close to the gradient descent update in the output space.
Along with Condition~\ref{con:full-row-rank}, we have the following theorem.
\begin{thm}[Natural gradient descent]\label{thm:ngd}
Let Condition~\ref{con:full-row-rank} and~\ref{con:stable-jaco} hold.  Suppose we optimize with NGD using a step size $\lr \leq \frac{1-2C}{(1+C)^2}$. Then for $k = 0, 1, 2, ...$ we have
\begin{equation}
    \left\| \pred(k) - \target \right\|_2^2 \leq (1 - \lr)^k \left\| \pred(0) - \target \right\|_2^2 .
\end{equation}
\end{thm}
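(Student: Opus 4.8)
The plan is to track the output residual $\error(k) = \pred(k) - \target$ directly and show it contracts by a factor $(1-\lr)$ in squared norm at every step, via an induction that simultaneously controls how far the iterates drift from $\params(0)$. The starting point is a first-order expansion of the output along the update direction. Writing $\Delta\params = \params(k+1) - \params(k) = -\lr\,\jacobian(k)^\top(\jacobian(k)\jacobian(k)^\top)^{-1}\error(k)$ and using the fundamental theorem of calculus (valid since $f$ is locally Lipschitz in $\params$, hence absolutely continuous along the segment), I get
\begin{equation}
\pred(k+1) - \pred(k) = \left(\int_0^1 \jacobian(\params(k) + s\Delta\params)\,ds\right)\Delta\params =: \bar\jacobian\,\Delta\params .
\end{equation}
Since $\jacobian(k)^\top(\jacobian(k)\jacobian(k)^\top)^{-1}$ is the pseudoinverse $\jacobian(k)^\dagger$ and $\jacobian(k)\jacobian(k)^\dagger = \iden$ under Condition~\ref{con:full-row-rank}, this gives the residual recursion
\begin{equation}
\error(k+1) = \left(\iden - \lr\,\bar\jacobian\,\jacobian(k)^\dagger\right)\error(k) = \left((1-\lr)\iden - \lr\,\mathbf{E}_k\right)\error(k), \qquad \mathbf{E}_k := (\bar\jacobian - \jacobian(k))\jacobian(k)^\dagger .
\end{equation}
In the idealized linearized regime $\bar\jacobian = \jacobian(k)$ one has $\mathbf{E}_k = 0$ and the residual contracts \emph{exactly} by $(1-\lr)$; the whole argument reduces to showing the perturbation $\mathbf{E}_k$ stays small.

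The quantitative core is to bound $\|\mathbf{E}_k\|_2$ whenever the iterates lie in the ball of Condition~\ref{con:stable-jaco}. If $\params(k)$ and the entire segment to $\params(k+1)$ stay in that ball, then every Jacobian along the path is within $\frac{C}{3}\sqrt{\lambda_{\mathrm{min}}(\gram(0))}$ of $\jacobian(0)$, so $\|\bar\jacobian - \jacobian(k)\|_2 \le \frac{2C}{3}\sqrt{\lambda_{\mathrm{min}}(\gram(0))}$, while a Weyl-type singular value perturbation bound gives $\sigma_{\min}(\jacobian(k)) \ge (1 - \tfrac{C}{3})\sqrt{\lambda_{\mathrm{min}}(\gram(0))}$ and hence $\|\jacobian(k)^\dagger\|_2 \le [(1-\tfrac{C}{3})\sqrt{\lambda_{\mathrm{min}}(\gram(0))}]^{-1}$. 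Multiplying these yields $\|\mathbf{E}_k\|_2 \le \frac{2C}{3-C} \le C$, the last step using $C < \tfrac12$. Then expanding the squared norm of the recursion,
\begin{equation}
\|\error(k+1)\|_2^2 \le \left(1 - 2\lr(1-\|\mathbf{E}_k\|_2) + \lr^2(1+\|\mathbf{E}_k\|_2)^2\right)\|\error(k)\|_2^2 ,
\end{equation}
and noting that $x \mapsto \frac{1-2x}{(1+x)^2}$ is decreasing on $[0,\tfrac12]$, the step-size hypothesis $\lr \le \frac{1-2C}{(1+C)^2} \le \frac{1-2\|\mathbf{E}_k\|_2}{(1+\|\mathbf{E}_k\|_2)^2}$ is precisely what forces the bracket below $1-\lr$, delivering the one-step contraction $\|\error(k+1)\|_2^2 \le (1-\lr)\|\error(k)\|_2^2$.

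The remaining work is to certify the ``stays in the ball'' hypothesis, and this is where the main obstacle lies: the contraction rate is needed to bound the distance traveled, yet that distance bound (through the Jacobian estimates above) is needed to prove the contraction. I would break this circularity with a single induction on $k$ whose hypothesis is $\|\error(j)\|_2^2 \le (1-\lr)^j\|\error(0)\|_2^2$ for all $j \le k$. Under the hypothesis each step satisfies $\|\params(j+1)-\params(j)\|_2 \le \lr\|\jacobian(j)^\dagger\|_2\|\error(j)\|_2 \le \frac{\lr(1-\lr)^{j/2}\|\error(0)\|_2}{(1-C/3)\sqrt{\lambda_{\mathrm{min}}(\gram(0))}}$, and summing the geometric series (using $\frac{\lr}{1-\sqrt{1-\lr}} = 1+\sqrt{1-\lr}\le 2$) bounds the displacement of the whole path by $\frac{2\|\error(0)\|_2}{(1-C/3)\sqrt{\lambda_{\mathrm{min}}(\gram(0))}}$, which is at most $\frac{3\|\target-\pred(0)\|_2}{\sqrt{\lambda_{\mathrm{min}}(\gram(0))}}$ exactly because $\frac{2}{1-C/3}\le 3 \iff C\le 1$. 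Thus every point required at step $k$ — including the interpolating segment used to define $\bar\jacobian$ — lies inside the ball, the bounds of the previous paragraph apply, and the contraction propagates to $j=k+1$. The delicate part is this geometric-sum/radius bookkeeping: it is what pins down the specific radius $3\|\target-\pred(0)\|_2/\sqrt{\lambda_{\mathrm{min}}(\gram(0))}$ appearing in Condition~\ref{con:stable-jaco} and makes the self-referential induction consistent, whereas the analytic estimates ($\|\mathbf{E}_k\|_2 \le C$ and the quadratic-in-$\lr$ inequality) are comparatively routine once the ball is fixed.
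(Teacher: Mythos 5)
Your proof is correct and takes essentially the same route as the paper's: the same averaged-Jacobian decomposition of the residual recursion into a $(1-\lr)$ contraction plus a perturbation bounded by $\lr C\|\error(k)\|_2$, the same quadratic expansion yielding the step-size condition $\lr \leq \frac{1-2C}{(1+C)^2}$, and the same geometric-series displacement bound to keep the iterates in the ball (the paper resolves the circularity by contradiction in its Lemma~\ref{lem:contradiction} rather than by your forward induction, but the two are equivalent).
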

To be noted, $\left\| \pred(k) - \target \right\|_2^2$ is the squared error loss up to a constant. 
Due to space constraints we only give a short sketch of the proof here. The full proof is given in Appendix \ref{app:main_proof}.

\textbf{Proof Sketch}. \emph{Our proof relies on the following insights. First, if the Jacobian matrix has full row rank, this guarantees linear convergence for infinitesimal step size. The linear convergence property restricts the parameters to be close to the initialization, which implies the Jacobian matrix is always full row rank throughout the training, and therefore natural gradient descent with infinitesimal step size converges to global minima. Furthermore, given the network is close to a linearized network (since the Jacobian matrix is stable with respect to small perturbations around the initialization), we are able to extend the proof to discrete time with a large step size.}

In summary, we prove that NGD exhibits linear convergence to the global minimizer of the neural network training problem, under Conditions~\ref{con:full-row-rank} and~\ref{con:stable-jaco}. We believe our arguments in this section are general (i.e., architecture-agnostic), and can serve as a recipe for proving global convergence of natural gradient descent in other settings.

\subsection{Other Loss Functions}
We note that our analysis can be easily extended to more general loss function class. Here, we take the class of functions that are $\mu$-strongly convex with $L$-Lipschitz gradients as an example. Note that strongly convexity is a very mild assumption since we can always add $L_2$ regularization to make the convex loss strongly convex. Therefore, this function class includes regularized cross-entropy loss (which is typically used in classification) and squared error (for regression). For this type of loss, we need a strong version of Condition~\ref{con:stable-jaco}.
\begin{con}[Stable Jacobian]\label{con:stable-jaco-2}
	There exists $0 \leq C < \frac{1}{1+\kappa}$ such that for all parameters $\params$ that satisfy $\|\params - \params(0) \|_2 \leq \frac{3(1 + \kappa)\|\target - \pred(0)\|_2}{2\sqrt{\lambda_{\mathrm{min}}(\gram(0))}}$ where $\kappa = \frac{L}{\mu}$
    \begin{equation}
        \|\jacobian(\theta) - \jacobian(0)\|_2 \leq \frac{C}{3} \sqrt{\lambda_{\mathrm{min}}(\gram(0))} .
    \end{equation}
\end{con}
\begin{thm}\label{thm:general-loss}
Under Condition~\ref{con:full-row-rank} and~\ref{con:stable-jaco-2}, but with $\mu$-strongly convex loss function $\ell(\cdot, \cdot)$ with $L$-Lipschitz gradient ($\kappa = \frac{L}{\mu}$), and we set the step size $\lr \leq \frac{2}{\mu + L} \frac{1 - (1 + \kappa)C}{(1+C)^2}$, then we have for $k = 0, 1, 2, ...$
\begin{equation}
    \|\pred(k) - \target \|_2^2 \leq \left(1 - \frac{2\lr\mu L}{\mu + L} \right)^k \| \pred(0) - \target\|_2^2 .
\end{equation}
\end{thm}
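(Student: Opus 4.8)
The plan is to reduce the parameter-space NGD dynamics to gradient descent in the output space on the separable objective $\Phi(\pred) = \sum_{i} \ell(\pred_i, y_i)$, and then to invoke the classical contraction for gradient descent on a $\mu$-strongly convex, $L$-smooth function, treating the network's nonlinearity as a perturbation kept small by Condition~\ref{con:stable-jaco-2}. Writing $g(k)$ for the output-space gradient of $\loss$ at $\params(k)$ (whose $i$-th entry is $\ell'(\pred_i(k), y_i)$, and which vanishes exactly at $\pred = \target$), the generalized-inverse NGD step reads $\params(k+1) - \params(k) = -\lr\, \jacobian(k)^\top \gram(k)^{-1} g(k)$. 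First I would express the induced output change exactly via the mean-value form $\pred(k+1) - \pred(k) = \widetilde{\jacobian}(k)\bigl(\params(k+1)-\params(k)\bigr)$, where $\widetilde{\jacobian}(k) = \int_0^1 \jacobian\bigl(\params(k) + s(\params(k+1)-\params(k))\bigr)\, ds$. Substituting the step gives $\pred(k+1) - \pred(k) = -\lr\, g(k) - \lr\, E(k)\, g(k)$ with $E(k) = \bigl(\widetilde{\jacobian}(k) - \jacobian(k)\bigr)\jacobian(k)^\top \gram(k)^{-1}$; when the Jacobian is constant $E(k)=0$ and the output performs pure gradient descent $\pred(k+1)=\pred(k)-\lr g(k)$.

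The core of the argument is the one-step contraction. Since each $\ell(\cdot, y_i)$ is $\mu$-strongly convex with $L$-Lipschitz gradient, the separable $\Phi$ inherits these properties (its Hessian is diagonal with entries in $[\mu, L]$), and co-coercivity of the gradient yields
\begin{equation}
\langle g(k),\, \pred(k) - \target\rangle \geq \frac{\mu L}{\mu + L}\|\pred(k) - \target\|_2^2 + \frac{1}{\mu+L}\|g(k)\|_2^2 .
\end{equation}
Expanding $\|\pred(k) - \lr g(k) - \target\|_2^2$ and applying this bound shows that the pure gradient-descent part contracts by the factor $1 - \frac{2\lr\mu L}{\mu+L}$ provided $\lr \leq \frac{2}{\mu+L}$, since the remaining term $(\lr^2 - \frac{2\lr}{\mu+L})\|g(k)\|_2^2$ is then nonpositive. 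To fold in the nonlinear correction I would use $\|g(k)\|_2 = \|g(k) - g(\target)\|_2 \leq L\|\pred(k) - \target\|_2$ together with the operator-norm bounds $\|\jacobian(k)^\top\gram(k)^{-1}\|_2 \leq \lambda_{\mathrm{min}}(\gram(k))^{-1/2}$ and $\|\widetilde{\jacobian}(k) - \jacobian(k)\|_2 \leq \frac{2C}{3}\sqrt{\lambda_{\mathrm{min}}(\gram(0))}$ (the latter from Condition~\ref{con:stable-jaco-2} via the triangle inequality, since $\params(k)$, $\params(k+1)$ and the whole segment between them lie in the prescribed ball). This bounds $\|E(k)\|_2$ by a constant multiple of $C$, and the extra factor $\frac{1-(1+\kappa)C}{(1+C)^2}$ in the step-size restriction is precisely the slack required for the perturbed update to still contract by $1 - \frac{2\lr\mu L}{\mu+L}$.

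Everything above is conditional on the iterates remaining in the ball of Condition~\ref{con:stable-jaco-2}, so the final step is an induction that simultaneously proves the contraction and the containment. Assuming the contraction for all $j \le k$, I would bound the cumulative parameter movement by $\|\params(k+1) - \params(0)\|_2 \le \sum_{j=0}^{k} \|\params(j+1) - \params(j)\|_2 \le \sum_{j=0}^{k} \frac{\lr L \|\pred(j) - \target\|_2}{\sqrt{\lambda_{\mathrm{min}}(\gram(j))}}$; summing the resulting geometric series in $\bigl(1 - \frac{2\lr\mu L}{\mu+L}\bigr)^{1/2}$, and using that $\gram(j)$ stays close to $\gram(0)$ (again from the stable-Jacobian bound, which keeps $\lambda_{\mathrm{min}}(\gram(j))$ bounded away from zero and hence $\jacobian(j)$ full row rank throughout), shows the total displacement never exceeds the radius $\frac{3(1+\kappa)\|\target - \pred(0)\|_2}{2\sqrt{\lambda_{\mathrm{min}}(\gram(0))}}$, closing the induction.

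I expect the main obstacle to be the coupled bookkeeping in this induction: one must simultaneously keep $\lambda_{\mathrm{min}}(\gram(j))$ bounded below, ensure $\|E(j)\|_2$ is small enough that the step-size slack $\frac{1-(1+\kappa)C}{(1+C)^2}$ genuinely absorbs it, and verify that the geometric sum of displacements stays inside the ball, all with constants made consistent across Conditions~\ref{con:full-row-rank} and~\ref{con:stable-jaco-2}. Relative to Theorem~\ref{thm:ngd}, the only genuinely new ingredient is replacing the squared-error identity $\pred(k) - \lr(\pred(k) - \target) - \target = (1-\lr)(\pred(k)-\target)$ with the co-coercivity inequality for general strongly convex, smooth losses; the condition number $\kappa$ then propagates through the radius and step-size constants, and the consistency check that $\mu = L = 1,\ \kappa = 1$ recovers Theorem~\ref{thm:ngd} exactly is a useful guard against algebra errors.
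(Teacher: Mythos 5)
Your proposal follows essentially the same route as the paper's proof: the paper likewise decomposes the output update as $\pred(k+1)-\pred(k) = \lr\nabla_\pred\loss(\pred(k)) + \lr\mathbf{P}(k)\nabla_\pred\loss(\pred(k))$ with $\mathbf{P}(k)$ identical to your $E(k)$, invokes the same co-coercivity inequality (its Lemma~\ref{lem:coercivity}), bounds $\|\mathbf{P}(k)\|_2 \leq C$ via the stable-Jacobian condition, and closes the same induction by bounding the cumulative parameter displacement with a geometric series using the $L$-Lipschitz gradient and $\nabla_\pred\loss(\target)=\mathbf{0}$. The proposal is correct and matches the paper's argument in both structure and constants.
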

The key step of proving Theorem~\ref{thm:general-loss} is to show if $m$ is large enough, then natural gradient descent is approximately gradient descent in the output space. Thus the results can be easily derived according to standard bounds for convex optimization. Due to the page limit, we defer the proof to the Appendix~\ref{app:gen-loss}.
\begin{rem}
    In Theorem~\ref{thm:general-loss}, the convergence rate depends on the condition number $\kappa = \frac{L}{\mu}$, which can be removed if we take into the curvature information of the loss function. In other words, we expect that the bound has no dependency on $\kappa$ if we use the Fisher matrix rather than the classical Gauss-Newton (assuming Euclidean metric in the output space~\citep{luk2018coordinate}) in Theorem~\ref{thm:general-loss}.
\end{rem}

\section{Optimizing Overparameterized Neural Networks}\label{sec:sec-4}
In Section~\ref{sec:ngd}, we analyzed the convergence properties of natural gradient descent, under the abstract Conditions \ref{con:full-row-rank} and \ref{con:stable-jaco}. In this section, we make our analysis concrete by applying it to a specific type of overparameterized network (with a certain random initialization). We show that Conditions 1 and 2 hold with high probability. We therefore establish that NGD exhibits linear convergence to a global minimizer for such networks.

\subsection{Notation}
We let $[m] = \{1, 2, ..., m \}$. We use $\otimes$, $\odot$ to denote the Kronecker and Hadamard products. And we use $\ast$ and $\star$ to denote row-wise and column-wise Khatri-Rao products, respectively. For a matrix $\mathbf{A}$, we use $\mathbf{A}_{ij}$ to denote its $(i, j)$-th entry. We use $\| \cdot \|_2$ to denote the Euclidean norm of a vector or spectral norm of a matrix and $\| \cdot \|_{\fisher}$ to denote the Frobenius norm of a matrix. We use $\lambda_{\mathrm{max}}(\mathbf{A})$ and $\lambda_{\mathrm{min}}(\mathbf{A})$ to denote the largest and smallest eigenvalue of a square matrix, and $\sigma_{\mathrm{max}}(\mathbf{A})$ and $\sigma_{\mathrm{min}}(\mathbf{A})$ to denote the largest and smallest singular value of a (possibly non-square) matrix. For a positive definite matrix $\mathbf{A}$, we use $\kappa_\mathbf{A}$ to denote its condition number, i.e., $\lambda_\mathrm{max}(\mathbf{A})/\lambda_\mathrm{min}(\mathbf{A})$.
We also use $\langle \cdot, \cdot \rangle$ to denote the standard inner product between two vectors. Given an event $E$, we use $\indicator \{E\}$ to denote the indicator function for $E$.

\subsection{Problem Setup}
Formally, we consider a neural network of the following form:
\begin{equation}\label{eq:network}
    f(\weight, \ba, \inputs) = \frac{1}{\sqrt{m}} \sum_{r=1}^m a_r \relu(\weight_r^\top \inputs) ,
\end{equation}
where $\inputs \in \real^d$ is the input, $\weight = \left[\weight_1^\top, ..., \weight_r^\top \right]^\top \in \real^{md}$ is the weight matrix (formed into a vector) of the first layer,
$a_r \in \real$ is the output weight of hidden unit $r$ and $\relu(\cdot)$ is the ReLU activation function (acting entry-wise for vector arguments). For $r \in [m]$, we initialize the weights of first layer $\weight_r \sim \normal(\mathbf{0}, \vars^2 \iden)$ and output weight $a_r \sim \mathbf{unif}\left[\{-1, +1 \} \right]$.

Following~\citet{du2018gradient, wu2019global}, we make the following assumption on the data.
\begin{ass}\label{ass:data}
    For all i, $\|\inputs_i \|_2 = 1$ and $|y_i| = \bigo \left(1 \right)$. For any $i \neq j$, $\inputs_i \nparallel \inputs_j$.
\end{ass}
This very mild condition simply requires the inputs and outputs have standardized norms, and that different input vectors are distinguishable from each other.  Datasets that do not satisfy this condition can be made to do so via simple pre-processing. %

Following~\citet{du2018gradient, oymak2019towards, wu2019global}, we only optimize the weights of the first layer\footnote{We fix the second layer just for simplicity. Based on the same analysis, one can also prove global convergence for jointly training both layers.}, i.e., $\params = \weight$. 
Therefore, natural gradient descent can be simplified to
\begin{equation}
    \weight(k+1) = \weight(k) - \lr \jacobian^\top(\jacobian \jacobian^\top)^{-1} (\pred - \target) .
\end{equation}
Though this is only a shallow fully connected neural network, the objective is still non-smooth and non-convex~\citep{du2018gradient} due to the use of ReLU activation function. We further note that this two-layer network model has been useful in understanding the optimization and generalization of deep neural networks~\citep{xie2016diverse, li2018learning, du2018gradient, arora2019fine, wu2019global}, and some results have been extended to multi-layer networks~\citep{du2018gradient-2}. 

Following~\citet{du2018gradient, wu2019global}, we define the \emph{limiting Gram matrix} as follows: 
\begin{deff}[Limiting Gram Matrix]
The limiting Gram matrix $\gram^\infty \in \real^{n \times n}$ is defined as follows. For $(i, j)$- entry, we have
    \begin{equation}
        \gram_{ij}^\infty = \expect_{\weight \sim \normal(\mathbf{0}, \vars^2 \iden)} \left[\inputs_i^\top \inputs_j \indicator\left\{\weight^\top \inputs_i \geq 0, \weight^\top \inputs_j \geq 0 \right\} \right] = \inputs_i^\top \inputs_j \frac{\pi - \mathrm{arccos}(\inputs_i^\top \inputs_j)}{2\pi} .
    \end{equation}
\end{deff}
This matrix coincides with neural tangent kernel~\citep{jacot2018neural} for ReLU activation function. As shown by~\citet{du2018gradient}, this matrix is positive definite and we define its smallest eigenvalue $\lambda_0 \triangleq \lambda_{\mathrm{min}}(\gram^\infty) > 0$.
In the same way, we can define its finite version $\gram(t) = \jacobian(t) \jacobian(t)^\top$ with $(i,j)$-entry $\gram_{ij}(t) = \frac{1}{m}  \inputs_i^\top \inputs_j \sum_{r\in[m]} \indicator \left\{ \weight_r(t)^\top\inputs_i\geq 0, \weight_r(t)^\top \inputs_j \geq 0 \right\}$.

\begin{figure}[t]
    \centering
    \vspace{-0.3cm}
    \includegraphics[width=0.95\textwidth]{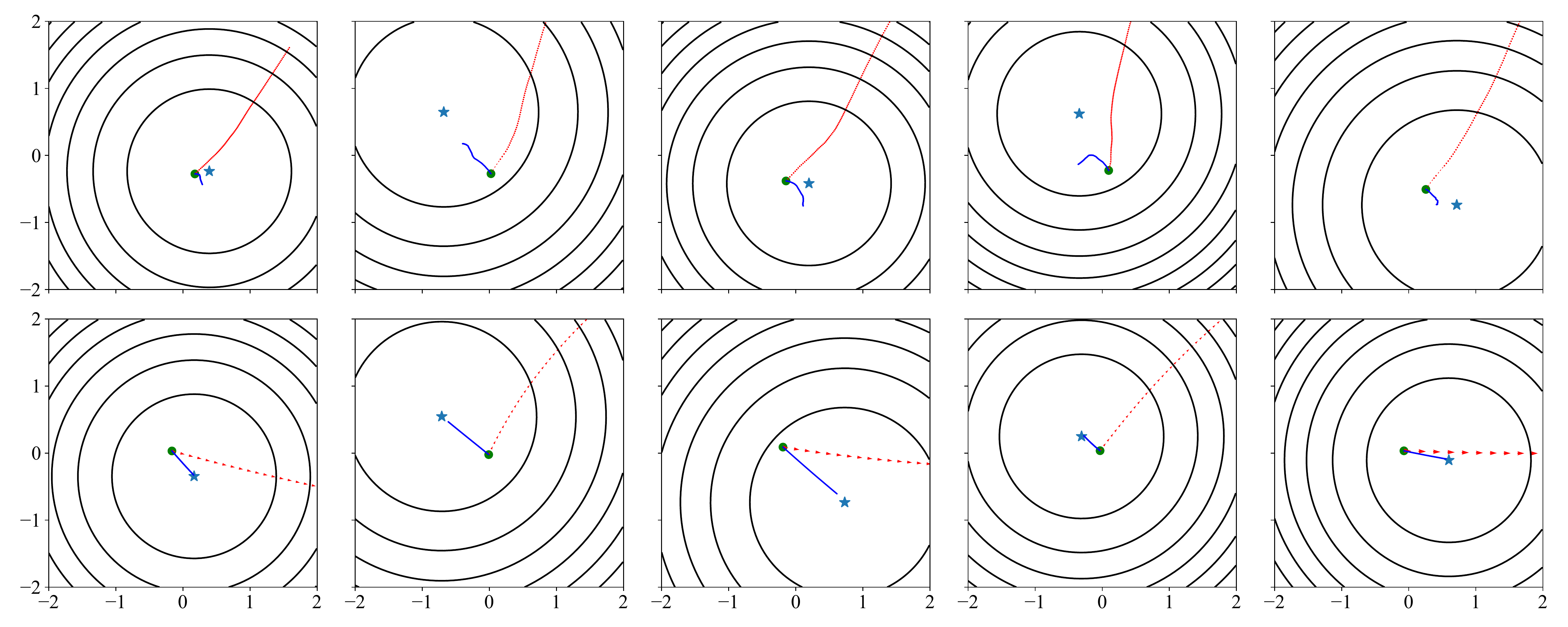}
    \vspace{-0.2cm}
    \caption{Visualization of {\color[rgb]{0,0,0.9} natural gradient} update and {\color[rgb]{0.9,0,0}gradient descent} update in the output space (for a randomly initialized network). We take two classes (4 and 9) from MNIST~\citep{lecun1998gradient} and generate the targets (denoted as star in the figure) by $f(x) = x - 0.5 + 0.3 \times \normal(0, \iden)$ where $x \in \real^2$ is one-hot target. We get natural gradient update by running 100 iterations of conjugate gradient~\citep{martens2010deep}. \textbf{The first row}: a MLP with two hidden layers and 100 hidden units in each layer. \textbf{The second row}: a MLP with two hidden layers and 6000 hidden units in each layer. In both cases, ReLU activation function was used. We interpolate the step size from 0 to 1. For the over-parameterized network (in the second row), natural gradient descent (implemented by conjugate gradient) matches output space gradient well.}
    \label{fig:true_label} 
    \vspace{-0.3cm}
\end{figure}
\subsection{Exact Natural Gradient Descent}
In this subsection, we present our result for this setting. The main difficulty is to show that Conditions~\ref{con:full-row-rank} and~\ref{con:stable-jaco} hold. Here we state our main result.
\begin{thm}[Natural Gradient Descent for overparameterized Networks]\label{thm:d-convergence}
Under Assumption~\ref{ass:data}, if we i.i.d initialize $\weight_r \sim \normal(\mathbf{0}, \vars^2\mathbf{I})$, $a_r \sim \mathrm{unif}[\{-1,+1 \}]$ for $r \in [m]$, we set the number of hidden nodes $m = \Omega \left( \frac{n^4}{\vars^2\lambda_0^4 \delta^3} \right)$, and the step size $\lr = \bigo(1)$, then with probability at least $1 - \delta$ over the random initialization we have for $k = 0, 1, 2, ...$
\begin{equation}
    \|\pred(k) - \target\|_2^2 \leq \left(1 - \lr\right)^k \|\pred(0) - \target\|_2^2 .
\end{equation}
\end{thm}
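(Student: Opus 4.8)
The plan is to reduce the claim to the abstract convergence result of Theorem~\ref{thm:ngd} by verifying that its two hypotheses, Condition~\ref{con:full-row-rank} and Condition~\ref{con:stable-jaco}, both hold with probability at least $1-\delta$ over the random draw of $\weight_r(0)$ and $a_r$. Once both conditions hold with a constant $C<\tfrac{1}{2}$, Theorem~\ref{thm:ngd} applies verbatim with any step size $\lr \le \frac{1-2C}{(1+C)^2} = \bigo(1)$, yielding exactly the stated bound $\|\pred(k)-\target\|_2^2 \le (1-\lr)^k\|\pred(0)-\target\|_2^2$. So the entire argument is about controlling the Gram matrix and the Jacobian at and near initialization.

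First I would establish Condition~\ref{con:full-row-rank}. Each entry of $\gram(0)$ is an average of $m$ i.i.d.\ bounded random variables (one per hidden unit) whose common mean is exactly $\gram^\infty_{ij}$, so Hoeffding's inequality together with a union bound over the $n^2$ entries gives $\|\gram(0)-\gram^\infty\|_2 \le \|\gram(0)-\gram^\infty\|_{\fisher} \le \lambda_0/2$ once $m$ is polynomially large in $n$, $1/\lambda_0$, $1/\delta$. Weyl's inequality then yields $\lambda_{\mathrm{min}}(\gram(0)) \ge \lambda_0/2 > 0$, i.e.\ full row rank. In the same breath I would bound the initial residual: since $a_r = \pm 1$ symmetrically and $|y_i| = \bigo(1)$, the entries of $\pred(0)$ are mean-zero with $\bigo(1)$ variance, so $\|\pred(0)-\target\|_2 = \bigo(\sqrt{n})$ with high probability. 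Combining this with $\lambda_{\mathrm{min}}(\gram(0)) \ge \lambda_0/2$ bounds the radius appearing in Condition~\ref{con:stable-jaco} by $R = \frac{3\|\target-\pred(0)\|_2}{\sqrt{\lambda_{\mathrm{min}}(\gram(0))}} = \bigo\!\big(\sqrt{n/\lambda_0}\big)$.

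The main work, and the main obstacle, is Condition~\ref{con:stable-jaco}: a \emph{uniform} bound on $\|\jacobian(\params)-\jacobian(0)\|_2$ over the entire ball $\|\params-\params(0)\|_2 \le R$. The key structural fact is that for this architecture the Jacobian depends on $\params$ only through the activation patterns, so that $\|\jacobian(\params)-\jacobian(0)\|_2^2 \le \|\jacobian(\params)-\jacobian(0)\|_{\fisher}^2 = \frac{1}{m}\sum_{i,r}\indicator\{\relu'(\weight_r^\top\inputs_i) \ne \relu'(\weight_r(0)^\top\inputs_i)\}$, i.e.\ it is controlled by the number of hidden units whose activation \emph{sign flips}. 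A unit $r$ displaced by $\omega_r := \|\weight_r-\weight_r(0)\|_2$ can flip on $\inputs_i$ only if $|\weight_r(0)^\top\inputs_i| \le \omega_r$, and since $\weight_r(0)^\top\inputs_i \sim \normal(0,\vars^2)$, Gaussian anti-concentration bounds this probability by $\bigo(\omega_r/\vars)$. I would then bound the total number of flips uniformly over the ball subject only to $\sum_r \omega_r^2 \le R^2$, which after a concentration argument (controlling the number of near-boundary units simultaneously for all admissible displacement profiles) gives $\|\jacobian(\params)-\jacobian(0)\|_{\fisher}^2 = \bigo\!\big(nR/(\sqrt{m}\,\vars)\big)$. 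Substituting $R = \bigo(\sqrt{n/\lambda_0})$ and requiring this to be at most $\frac{C^2}{9}\,\lambda_{\mathrm{min}}(\gram(0))$ for a constant $C<\tfrac{1}{2}$ forces $m$ to be polynomially large; tracking the extra factors needed for the simultaneous high-probability control (which is where the $1/\delta^3$ dependence enters) produces the stated requirement $m = \Omega\!\big(n^4/(\vars^2\lambda_0^4\delta^3)\big)$.

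The delicate point throughout is that every estimate must hold \emph{simultaneously} with probability at least $1-\delta$ and, for Condition~\ref{con:stable-jaco}, \emph{uniformly} over an uncountable ball of weight configurations rather than at a single point; handling this supremum over possibly very uneven displacement profiles $(\omega_r)_r$, rather than assuming each unit moves by the same amount, is the technically hardest step and the one that dictates the degree of overparameterization. Once the two conditions are in place I would simply invoke Theorem~\ref{thm:ngd} to conclude.
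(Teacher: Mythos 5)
Your overall architecture is exactly the paper's: verify Condition~\ref{con:full-row-rank} by matrix concentration (the paper applies a matrix Chernoff bound to the sum $\gram(0)=\sum_r \gram(\weight_r)$ rather than entrywise Hoeffding plus a union bound, but either route works), bound $\|\target-\pred(0)\|_2^2 = \bigo(n/\delta)$ by Markov, verify Condition~\ref{con:stable-jaco} by counting activation sign flips via Gaussian anti-concentration, and then invoke Theorem~\ref{thm:ngd}. However, the one step you yourself flag as the hardest --- the bound that must hold uniformly over the ball --- is where your claimed estimate fails. You assert $\|\jacobian(\params)-\jacobian(0)\|_{\fisher}^2 = \bigo\bigl(nR/(\sqrt{m}\,\vars)\bigr)$ uniformly over $\{\sum_r \omega_r^2 \le R^2\}$. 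That is what you get by summing the per-unit flip probabilities $\bigo(\omega_r/\vars)$ and applying Cauchy--Schwarz, and it is the right order for a fixed or evenly spread displacement profile. It is \emph{not} valid uniformly: an adversarial displacement can spend its entire budget on the units whose initial preactivations $|\weight_r(0)^\top\inputs_i|$ are smallest. Since the $j$-th smallest of these magnitudes is of order $j\vars/m$, flipping the $N$ cheapest units costs only about $\vars N^{3/2}/m$ in Euclidean norm, so within the ball of radius $R$ one can flip $N = \Theta\bigl((Rm/\vars)^{2/3}\bigr)$ units on a given input, giving $\|\jacobian-\jacobian(0)\|_{\fisher}^2 = \Theta\bigl(nR^{2/3}/(\vars^{2/3}m^{1/3})\bigr)$ in the worst case. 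In the relevant regime $R \ll \sqrt{m}\,\vars$ this exceeds your claimed bound, so your estimate as stated is false. This is precisely why the paper's Lemma~\ref{lem:jaco-pert} carries the $R^{2/3}m^{-1/3}$ scaling and why its proof (Lemma~\ref{lem:jaco-pert-2}) is organized around the $k$-th smallest order statistic $[\weight_r(0)^\top\inputs_i]_{k-}$ rather than around expected flip counts.

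The good news is that the gap sits in an intermediate exponent, not in the conclusion: the correct (weaker) uniform bound still implies Condition~\ref{con:stable-jaco} with $C=\bigo(m^{-1/6})$ once $m = \Omega(n^4/(\vars^2\lambda_0^4\delta^3))$, which is exactly how the paper reaches the stated overparameterization. Note also an internal inconsistency: your $nR/(\sqrt{m}\,\vars)$ bound would have yielded a requirement of roughly $m=\Omega(n^3/(\vars^2\lambda_0^3\delta))$, not the $n^4$ dependence you quote. Finally, your $R = \bigo(\sqrt{n/\lambda_0})$ silently drops the $\delta^{-1/2}$ factor coming from the Markov bound on $\|\target-\pred(0)\|_2$; that factor is needed to account for the $\delta^3$ in the final condition on $m$.
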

Even though the objective is non-convex and non-smooth, natural gradient descent with a constant step size enjoys a linear convergence rate. For large enough $m$, we show that the learning rate can be chosen up to $1$, so NGD can provably converge within $\bigo\left(1\right)$ steps. Compared to analogous bounds for gradient descent~\citep{du2018gradient-2, oymak2019towards, wu2019global}, we improve the maximum allowable learning rate from $\bigo(1/n)$ to $\bigo(1)$ and also get rid of the dependency on $\lambda_0$. Overall, NGD (Theorem~\ref{thm:d-convergence}) gives an $\bigo(\lambda_0/n)$ improvement over gradient descent.

Our strategy to prove this result will be to show that for the given choice of random initialization, Condition~\ref{con:full-row-rank} and~\ref{con:stable-jaco} hold with high probability. For proving Condition~\ref{con:full-row-rank} hold, we used matrix concentration inequalities. For Condition~\ref{con:stable-jaco}, we show that $\|\jacobian - \jacobian(0) \|_2 = \bigo\left(m^{-1/6} \right)$, which implies the Jacobian is stable for wide networks. For detailed proof, we refer the reader to the Appendix~\ref{app:d-convergence}.

\subsection{Approximate Natural Gradient Descent with K-FAC}
Exact natural gradient descent is quite expensive in terms of computation or memory. In training deep neural networks, K-FAC~\citep{martens2015optimizing} has been a powerful optimizer for leveraging curvature information while retaining tractable computation. The K-FAC update rule for the two-layer ReLU network is given by
\begin{equation}
    \weight(k+1) = \weight(k) - \lr \underbrace{\left[(\bX^\top\bX)^{-1} \otimes (\bS(k)^\top\bS(k))^{-1} \right]}_{\fisher_\mathrm{K-FAC}^{-1}} \jacobian(k)^\top (\pred(k) - \target) .
\end{equation}
where $\bX \in \real^{n \times d}$ denotes the matrix formed from the $n$ input vectors (i.e. $\bX = [\inputs_1, ..., \inputs_n]^\top$), and $\bS = [\phi^\prime(\bX \weight_1), ..., \phi^\prime(\bX \weight_m)] \in \real^{n \times m}$ is the matrix of pre-activation derivatives. 
Under the same argument as the Gram matrix $\gram^\infty$, we get that ${\bS^\infty\bS^\infty}^\top$ is strictly positive definite with smallest eigenvalue $\lambda_{\bS}$ (see Appendix~\ref{app:pd-pre-act} for detailed proof).

We show that for sufficiently wide networks, K-FAC does converge linearly to a global minimizer. We further show, with a particular transformation on the input data, K-FAC does match the optimization performance of exact natural gradient for two-layer ReLU networks. Here we state the main result. 
\begin{thm}[K-FAC]\label{thm:kfac}
    Under the same assumptions as in Theorem~\ref{thm:d-convergence}, plus the additional assumption that $\mathrm{rank}(\bX) = d$, if we set the number of hidden units $m = \bigo\left(\frac{n^4}{\vars^2\lambda_{\bS}^4 \kappa_{\bX^\top\bX}^4 \delta^3} \right)$ and step size $\lr = \bigo \left(\lambda_\mathrm{min}\left(\bX^\top \bX\right) \right)$, then with probability at least $1 - \delta$ over the random initialization, we have for $k = 0, 1, 2, ...$
    \begin{equation}
        \|\pred(k) - \target\|_2^2 \leq \left(1 - \frac{\lr}{\lambda_\mathrm{max}(\bX^\top\bX) }\right)^k \|\pred(0) - \target\|_2^2 .
    \end{equation}
\end{thm}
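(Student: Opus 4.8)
The plan is to follow the same two-stage template as Theorem~\ref{thm:d-convergence} --- verify an initialization condition, then a stability condition, and close the loop by induction --- but with the exact preconditioner $(\jacobian\jacobian^\top)^{-1}$ replaced by the Kronecker-factored $\fisher_\mathrm{K-FAC}^{-1}=(\bX^\top\bX)^{-1}\otimes(\bS^\top\bS)^{-1}$. The new phenomenon, absent for exact NGD, is that the induced output-space operator is no longer the identity. Writing row $i$ of the Jacobian as $\jacobian_i=\tfrac{1}{\sqrt m}\,\inputs_i^\top\otimes\bg_i^\top$, where $\bg_i\in\real^m$ collects the per-hidden-unit sensitivities $a_r\relu'(\weight_r^\top\inputs_i)$, the mixed-product property of the Kronecker product gives for $M:=\jacobian\fisher_\mathrm{K-FAC}^{-1}\jacobian^\top$
\[
M_{ij}\;\propto\;\big[\inputs_i^\top(\bX^\top\bX)^{-1}\inputs_j\big]\,\big[\bg_i^\top(\bS^\top\bS)^{\dagger}\bg_j\big],
\]
so that $M$ is (a constant multiple of) the Hadamard product $\bT\odot\bP$, where $\bT=\bX(\bX^\top\bX)^{-1}\bX^\top$ is the data hat-matrix and $\bP$ is the Gram matrix of the rows of $\bS$ in the $(\bS^\top\bS)^{\dagger}$ metric.

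The crux is the identity $\bS(\bS^\top\bS)^{\dagger}\bS^\top=\iden$, valid whenever $\bS$ has full row rank; tracking the sign weights $\ba$ consistently this collapses $\bP$ to $\iden$, so $M\propto\mathrm{diag}(\bT_{11},\dots,\bT_{nn})$ and the output dynamics decouple across examples. Because $\|\inputs_i\|_2=1$ and $\mathrm{rank}(\bX)=d$ (so $\bX^\top\bX$ is genuinely invertible), the Rayleigh bound yields $\tfrac{1}{\lambda_\mathrm{max}(\bX^\top\bX)}\le\bT_{ii}\le\tfrac{1}{\lambda_\mathrm{min}(\bX^\top\bX)}$. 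This single computation explains both headline quantities: the slowest mode contracts at $1-\lr/\lambda_\mathrm{max}(\bX^\top\bX)$, while the fastest mode forces $\lr=\bigo(\lambda_\mathrm{min}(\bX^\top\bX))$ to avoid overshoot, the $m$- and $n$-dependent prefactors cancelling between the $1/\sqrt m$ in the network and the normalization of the K-FAC factors and being absorbed into $\lr$. Before this I would establish the initialization analog of Condition~\ref{con:full-row-rank}: that at $\weight(0)$ both $\gram(0)=\jacobian(0)\jacobian(0)^\top$ and $\bS(0)\bS(0)^\top$ are positive definite, with least eigenvalues bounded below by constants times $\lambda_0$ and $\lambda_{\bS}$ respectively. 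This is the matrix-concentration argument of Theorem~\ref{thm:d-convergence} applied to $\gram^\infty$ and to $\bS^\infty\bS^{\infty\top}$ (whose positive definiteness is shown in Appendix~\ref{app:pd-pre-act}); it makes the preconditioner well-defined and guarantees $\bP(0)=\iden$.

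The stability analysis and induction come next. Assuming $\|\error(j)\|_2^2\le(1-\lr/\lambda_\mathrm{max}(\bX^\top\bX))^j\|\error(0)\|_2^2$ for $j\le k$ controls the total weight movement $\|\weight(k)-\weight(0)\|_2$ as a geometric sum, keeping $\weight(k)$ inside a ball around $\weight(0)$. Inside that ball I would reuse the bound $\|\jacobian(\weight)-\jacobian(0)\|_2=\bigo(m^{-1/6})$ from the proof of Theorem~\ref{thm:d-convergence} to control the nonlinearity error $\epsilon_k$ in the exact update $\error(k+1)=(\iden-\lr M(k))\error(k)+\epsilon_k$, and a sign-flip/perturbation argument to show $\bS(k)\bS(k)^\top$ stays well-conditioned, so $\bP(k)$ remains $\iden$ and $M(k)$ remains the same favorable diagonal operator throughout. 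The one-step bound $\|\error(k+1)\|_2\le(1-\lr/\lambda_\mathrm{max}(\bX^\top\bX))\|\error(k)\|_2+\|\epsilon_k\|_2$ then closes the induction once $m$ is large enough that $\|\epsilon_k\|_2$ is dominated.

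The hard part will be the control of the preconditioner rather than of the Jacobian. Unlike exact NGD, where $\jacobian\fisher^\dagger\jacobian^\top=\iden$ holds automatically and perturbation enters only through $\epsilon_k$, here $\bS$ enters through the (pseudo)inverse $(\bS^\top\bS)^{\dagger}$, so a small perturbation of $\bS$ is amplified by its conditioning. Keeping $\lambda_\mathrm{min}(\bS(k)\bS(k)^\top)$ bounded away from $0$ under the activation sign changes induced by weight movement --- and simultaneously keeping $M(k)$ pinned to $\mathrm{diag}(\bT_{ii})$ despite the mismatch between $\fisher_\mathrm{K-FAC}$ and the exact Gauss--Newton matrix --- is precisely what forces the higher overparameterization $m=\bigo\!\left(n^4/(\vars^2\lambda_{\bS}^4\kappa_{\bX^\top\bX}^4\delta^3)\right)$, with the extra factor of $\lambda_{\bS}$ arising from inverting $\bS\bS^\top$ and the factor of $\kappa_{\bX^\top\bX}$ from the spread between the step-size scale $\lambda_\mathrm{min}(\bX^\top\bX)$ and the contraction scale $\lambda_\mathrm{max}(\bX^\top\bX)$ entering exactly here.
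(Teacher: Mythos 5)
Your proposal is correct and follows essentially the same route as the paper: the entry-wise Kronecker computation giving $M=\bT\odot\bP$ with $\bP$ collapsing to $\iden$ via $\bS(\bS^\top\bS)^{\dagger}\bS^\top=\iden$ is exactly the paper's Khatri--Rao identity $(\bX\ast\bS)\bigl((\bX^\top\bX)^{-1}\otimes(\bS^\top\bS)^{-1}\bigr)(\bX^\top\star\bS^\top)=\bX(\bX^\top\bX)^{-1}\bX^\top\odot\iden$, and the Rayleigh bounds on the diagonal of the hat matrix are the paper's application of Lemma~\ref{lem:hadamard}. The surrounding structure --- concentration for $\lambda_\mathrm{min}(\bS(0)\bS(0)^\top)$, a strengthened Jacobian-stability condition to control the residual term, a geometric-sum bound on $\|\weight(k)-\weight(0)\|_2$, and closing the induction for large enough $m$ --- matches the paper's argument as well.
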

The key step in proving Theorem~\ref{thm:kfac} is to show 
\begin{equation}
    \pred(k+1) - \pred(k) \approx \left[\left(\bX (\bX^\top \bX)^{-1} \bX^\top \right) \odot \iden \right] \left( \target - \pred(k) \right) .
\end{equation}
\begin{rem}
    The convergence rate of K-FAC is captured by the condition number of the matrix $\bX^\top \bX$, as opposed to gradient descent~\citep{du2018gradient, oymak2019towards}, for which the convergence rate is determined by the condition number of the Gram matrix $\gram$. 
\end{rem}
\begin{rem}
The dependence of the convergence rate on $\kappa_{\bX^\top \bX}$ in Theorem \ref{thm:kfac} may seem paradoxical, as K-FAC is invariant to invertible linear transformations of the data (including those that would change $\kappa_{\bX^\top \bX}$). But we note that said transformations would also make the norms of the input vectors non-uniform, thus violating Assumption~\ref{ass:data} in a way that isn't repairable. Interestingly, there exists an invertible linear transformation which, if applied to the input vectors and followed by normalization, produces vectors that simultaneously satisfy Assumption~\ref{ass:data} and the condition $\kappa_{\bX^\top \bX} = 1$ (thus improving the bound in Theorem \ref{thm:kfac} substantially). See Appendix~\ref{app:forster} for details. Notably, K-FAC is \emph{not} invariant to such pre-processing, as the normalization step is a nonlinear operation.
\end{rem}
To quantify the degree of overparameterization (which is a function of the network width $m$) required to achieve global convergence under our analysis, we must estimate $\lambda_\bS$. To this end, we observe that $\gram = \bX\bX^\top \odot \bS \bS^\top$, and then apply the following lemma:
\begin{lem}\label{lem:hadamard}[\citet{schur1911bemerkungen}]
    For two positive definite matrices $\mathbf{A}$ and $\mathbf{B}$, we have
    \begin{equation}
    \begin{aligned}
        &\lambda_{\mathrm{max}}\left(\mathbf{A} \odot \mathbf{B} \right) \leq \max_{i} \mathbf{A}_{ii} \lambda_\mathrm{max}(\mathbf{B}) \\
        &\lambda_{\mathrm{min}}\left(\mathbf{A} \odot \mathbf{B} \right) \geq \min_{i} \mathbf{A}_{ii} \lambda_\mathrm{min}(\mathbf{B}) 
    \end{aligned}
    \end{equation}
\end{lem}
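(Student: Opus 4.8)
The plan is to reduce both eigenvalue bounds to statements about the Rayleigh quotient $x^\top(\mathbf{A}\odot\mathbf{B})x$ over unit vectors $x$, and to control this quadratic form through a trace identity that exposes the diagonal of $\mathbf{A}$. The central observation I would establish first is
\begin{equation}
    x^\top (\mathbf{A}\odot\mathbf{B}) x = \trace\!\left(D_x\, \mathbf{A}\, D_x\, \mathbf{B}\right), \qquad D_x = \mathrm{diag}(x),
\end{equation}
which follows by writing both sides as $\sum_{i,j} x_i x_j \mathbf{A}_{ij}\mathbf{B}_{ij}$ and using the symmetry of $\mathbf{B}$. This rewriting is the key step, since it converts the entrywise product into an ordinary matrix product sandwiched by the diagonal scaling $D_x$, at which point the spectral bounds on $\mathbf{B}$ can be applied.

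Next I would exploit the fact that $M := D_x \mathbf{A} D_x \succeq 0$ (because $\mathbf{A}\succeq 0$) together with the sandwich $\lambda_{\mathrm{min}}(\mathbf{B})\iden \preceq \mathbf{B}\preceq \lambda_{\mathrm{max}}(\mathbf{B})\iden$. For any $M\succeq 0$ one has $\lambda_{\mathrm{min}}(\mathbf{B})\,\trace(M)\le \trace(M\mathbf{B})\le \lambda_{\mathrm{max}}(\mathbf{B})\,\trace(M)$, which I would justify via the cyclic identity $\trace(M\mathbf{B}) = \trace(M^{1/2}\mathbf{B}M^{1/2})$. Since $\trace(M) = \sum_i x_i^2 \mathbf{A}_{ii}$ and $\min_i\mathbf{A}_{ii}\,\|x\|_2^2 \le \sum_i x_i^2\mathbf{A}_{ii}\le \max_i\mathbf{A}_{ii}\,\|x\|_2^2$, and using that $\lambda_{\mathrm{min}}(\mathbf{B})>0$ preserves the direction of the inequalities, combining these yields
\begin{equation}
    \min_i \mathbf{A}_{ii}\,\lambda_{\mathrm{min}}(\mathbf{B})\,\|x\|_2^2 \;\le\; x^\top(\mathbf{A}\odot\mathbf{B})x \;\le\; \max_i\mathbf{A}_{ii}\,\lambda_{\mathrm{max}}(\mathbf{B})\,\|x\|_2^2
\end{equation}
for every $x$. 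Taking the maximum (resp.\ minimum) of the Rayleigh quotient over unit vectors then delivers both claimed bounds.

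I do not expect a serious obstacle, as the argument is self-contained once the trace identity is in hand; the only subtlety worth flagging is why the sharper diagonal-based bound holds at all. A tempting shortcut is to view $\mathbf{A}\odot\mathbf{B}$ as a principal submatrix of the Kronecker product $\mathbf{A}\otimes\mathbf{B}$ (selecting indices $(i,i)$) and invoke Cauchy interlacing together with $\lambda_{\mathrm{max}}(\mathbf{A}\otimes\mathbf{B}) = \lambda_{\mathrm{max}}(\mathbf{A})\lambda_{\mathrm{max}}(\mathbf{B})$. That route, however, only gives the weaker estimate $\lambda_{\mathrm{max}}(\mathbf{A}\odot\mathbf{B})\le\lambda_{\mathrm{max}}(\mathbf{A})\lambda_{\mathrm{max}}(\mathbf{B})$, whereas the lemma requires $\max_i\mathbf{A}_{ii}$ in place of $\lambda_{\mathrm{max}}(\mathbf{A})$. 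The trace identity is precisely what lets the diagonal entries of $\mathbf{A}$, rather than its top eigenvalue, appear; this is the entire content of the improvement and the reason I would follow the trace route rather than the interlacing one.
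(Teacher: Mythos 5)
Your proof is correct. Note that the paper itself gives no proof of this lemma --- it is cited as a classical result of \citet{schur1911bemerkungen} --- so there is no in-paper argument to compare against; what matters is only whether your self-contained derivation holds, and it does. The identity $x^\top(\mathbf{A}\odot\mathbf{B})x = \trace(D_x\mathbf{A}D_x\mathbf{B})$ is exactly $\sum_{i,j}x_ix_j\mathbf{A}_{ij}\mathbf{B}_{ij}$ once the symmetry of $\mathbf{B}$ is used; $M = D_x\mathbf{A}D_x\succeq 0$ is immediate from $\mathbf{A}\succeq 0$; the two-sided estimate $\lambda_{\mathrm{min}}(\mathbf{B})\trace(M)\le\trace(M\mathbf{B})\le\lambda_{\mathrm{max}}(\mathbf{B})\trace(M)$ via $\trace(M^{1/2}\mathbf{B}M^{1/2})$ is sound; and $\trace(M)=\sum_i x_i^2\mathbf{A}_{ii}$ is what makes the diagonal of $\mathbf{A}$, rather than its spectrum, appear in the final bound. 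Your argument is essentially a repackaging of the standard proof, which expands $\mathbf{A}=\sum_k\mu_k u_ku_k^\top$ and writes $x^\top(\mathbf{A}\odot\mathbf{B})x=\sum_k\mu_k(x\odot u_k)^\top\mathbf{B}(x\odot u_k)\le\lambda_{\mathrm{max}}(\mathbf{B})\sum_i x_i^2\mathbf{A}_{ii}$; the trace formulation avoids the explicit eigendecomposition of $\mathbf{A}$ and handles both bounds symmetrically. Your closing remark is also accurate: the interlacing route through $\mathbf{A}\otimes\mathbf{B}$ only yields $\lambda_{\mathrm{max}}(\mathbf{A})\lambda_{\mathrm{max}}(\mathbf{B})$, which would be strictly weaker here since the paper applies the lemma with $\mathbf{A}=\bX\bX^\top$ whose diagonal entries are all $1$ while $\lambda_{\mathrm{max}}(\bX\bX^\top)$ can be as large as $n/d$.
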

The diagonal entries of $\bX\bX^\top$ are all $1$ since the inputs are normalized. Therefore, we have $\lambda_0 \geq \lambda_\bS$ according to Lemma~\ref{lem:hadamard}, and hence K-FAC requires a slightly higher degree of overparameterization than exact NGD under our analysis.

\subsection{Bounding $\lambda_0$}
As pointed out by~\citet{allen2018convergence}, it is unclear if $1/\lambda_0$ is small or even polynomial. Here, we bound $\lambda$ using matrix concentration inequalities and harmonic analysis. To leverage harmonic analysis, we have to assume the data $\inputs_i$ are drawn i.i.d.~from the unit sphere\footnote{This assumption is not too stringent since the inputs are already normalized. Moreover, we can relax the assumption of unit sphere input to separable input, which is used in~\citet{li2018learning, allen2018convergence, zou2018stochastic}. See~\citet{oymak2019towards} (Theorem I.1) for more details.}.
\begin{thm}\label{thm:bound-lambda}
Under this assumption on the training data, with probability $1 - n \exp (-n^\beta/4)$,
\begin{equation}
    \lambda_0 \triangleq \lambda_{\mathrm{min}}(\gram^\infty) \geq n^\beta/2, \; \text{where} \; \beta \in (0, 0.5)
\end{equation}
\end{thm}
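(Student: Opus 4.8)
The plan is to exploit the rotational invariance of the kernel $g(t) = t\,\frac{\pi - \arccos t}{2\pi}$, which makes $\gram^\infty$ a dot-product kernel matrix on $\sphere^{d-1}$ and therefore amenable to harmonic analysis. First I would rewrite $g$ in a form that exposes nonnegative frequency content: using $\arccos t = \pi/2 - \arcsin t$ gives $g(t) = \tfrac14 t + \tfrac{1}{2\pi}\, t\arcsin t$, and the Taylor series of $t\arcsin t$ has only nonnegative coefficients. Hence $g(t) = \sum_{p\geq 1}\alpha_p t^p$ with $\alpha_p \geq 0$, and correspondingly $\gram^\infty = \sum_{p\geq 1}\alpha_p\,(\bX\bX^\top)^{\odot p}$ is a nonnegative combination of PSD Hadamard powers. (Equivalently, the Funk--Hecke/Gegenbauer decomposition writes $\gram^\infty = \sum_k w_k\,P_k(\bX\bX^\top)$ as a nonnegative combination over spherical-harmonic eigenspaces, with weights $w_k\geq 0$ and $P_k$ the normalized Gegenbauer polynomials.) Since every summand is PSD, I can lower bound $\lambda_{\min}(\gram^\infty)$ by retaining a single, carefully chosen order $p^\ast$ (frequency $k^\ast$) and discarding the rest, i.e.\ $\gram^\infty \succeq \alpha_{p^\ast}(\bX\bX^\top)^{\odot p^\ast}$.

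Next I would analyze the retained matrix $M := (\bX\bX^\top)^{\odot p^\ast}$, whose entries are $\langle \inputs_i, \inputs_j\rangle^{p^\ast}$ (or $P_{k^\ast}(\langle \inputs_i, \inputs_j\rangle)$ in the harmonic picture). Its diagonal is deterministic, while for $\inputs_i$ drawn i.i.d.\ uniformly on the sphere the off-diagonal inner products $\langle \inputs_i, \inputs_j\rangle$ are mean-zero and strongly concentrated around $0$ (sub-gaussian at scale $d^{-1/2}$), so the off-diagonal entries of $M$ are small in magnitude. I would then bound $\|M - c\,\iden\|_2$ for the appropriate scalar $c$ by controlling the total off-diagonal mass: a Bernstein/Chernoff tail bound on each row-sum $\sum_{j\neq i}|M_{ij}|$, combined with Gershgorin's theorem (or, more tightly, matrix Bernstein), shows that $M$ lies within a small spectral-norm ball of a scalar multiple of $\iden$. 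A union bound over the $n$ rows produces exactly a failure probability of the form $n\exp(-n^\beta/4)$, and the surviving diagonal contribution then yields a lower bound on $\lambda_0$ of the stated form.

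The main obstacle is the choice of the isolated order $p^\ast$ (frequency $k^\ast$) together with the accompanying sharp tail estimates. One must trade off the \emph{signal} --- the magnitude of the isolated weight $\alpha_{p^\ast}$ and the diagonal of $M$ --- against the \emph{noise} --- the total off-diagonal mass accumulated over the $n-1$ competing points --- and it is this trade-off that pins down the exponent and forces the restriction $\beta < 1/2$. Making it precise requires two genuinely harmonic-analytic inputs: uniform bounds on the Gegenbauer polynomials $P_{k^\ast}$ evaluated at the random near-orthogonal inner products, and concentration of $\langle \inputs_i, \inputs_j\rangle^{p^\ast}$ sharp enough that the row-sums concentrate at the target scale. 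These estimates are the technical heart of the argument. Strict positive-definiteness of $\gram^\infty$ (hence $\lambda_0 > 0$) is already guaranteed by the distinctness of the inputs via the earlier discussion, so the entire burden of the proof rests on the quantitative concentration rather than on any structural fact about the network.
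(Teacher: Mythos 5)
Your decomposition of $\gram^\infty$ into a nonnegative combination of Hadamard powers (equivalently, the Gegenbauer/Funk--Hecke expansion) is a legitimate starting point, but the single-frequency truncation creates a gap that cannot be closed: it structurally caps the achievable bound at a constant, whereas the theorem asserts a bound growing like $n^\beta$. Once you discard all but one order and write $\gram^\infty \succeq \alpha_{p^\ast}(\bX\bX^\top)^{\odot p^\ast}$, the retained matrix has unit diagonal (the inputs are normalized), so even in the ideal case where your Gershgorin argument shows it is essentially $\iden$, you obtain only $\lambda_0 \gtrsim \alpha_{p^\ast}$. Since $\sum_{p}\alpha_p = g(1) = 1/2$, every individual coefficient satisfies $\alpha_{p^\ast}\leq 1/2$; worse, making the off-diagonal row sums $\sum_{j\neq i}|\langle\inputs_i,\inputs_j\rangle|^{p^\ast}$ small against $n-1$ competing points forces $p^\ast$ to grow with $n$, so $\alpha_{p^\ast}$ decays. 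Your route therefore proves at best a constant (in fact slowly decaying) lower bound, never $\lambda_0 \geq n^\beta/2$. Relatedly, nothing in your argument generates the exponent $n^\beta$ in the failure probability: the natural tail from concentration of $\langle\inputs_i,\inputs_j\rangle$ on the sphere involves the ambient dimension $d$ and the chosen order $p^\ast$, not $n^\beta$, so the claim that a union bound "produces exactly" $n\exp(-n^\beta/4)$ is asserted rather than derived.

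The paper obtains the factor of $n$ from a different place, and that is the missing ingredient. It truncates the harmonic expansion after the first $n$ basis functions, writes the truncated kernel as $\mathbf{K}\mathbf{K}^\top$ with rows $\mathbf{K}^i = [\sqrt{\gamma_1}\phi_1(\inputs_i),\dots,\sqrt{\gamma_n}\phi_n(\inputs_i)]$, and passes to $\mathbf{K}^\top\mathbf{K} = \sum_{i=1}^n {\mathbf{K}^i}^\top\mathbf{K}^i$, a sum of $n$ independent PSD matrices indexed by the \emph{data points}. By orthonormality of the spherical harmonics, $\expect[\mathbf{K}^\top\mathbf{K}] = n\,\mathrm{diag}(\gamma_1,\dots,\gamma_n)$, whose least eigenvalue is $n\gamma_n$; the quantitative spectral-decay estimate $\gamma_n = \Omega(n^{\beta-1})$ for the arc-cosine kernel (cited from Xie et al.) turns this into $\Omega(n^\beta)$, and a matrix Chernoff bound with the per-summand bound $\|{\mathbf{K}^i}^\top\mathbf{K}^i\|_2 \leq h(\inputs_i,\inputs_i) = 1/2$ delivers both the $n^\beta/2$ lower bound and the $n\exp(-n^\beta/4)$ tail. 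In short, the signal in the paper's argument is the accumulated contribution of all $n$ samples to the $n$-th harmonic direction, not the weight of any single frequency; without an analogue of the decay estimate on $\gamma_n$ and the sum over samples, the stated bound is out of reach by your method.
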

Basically, Theorem~\ref{thm:bound-lambda} says that the Gram matrix $\gram^\infty$ should have high chance of having large smallest eigenvalue if the training data are uniformly distributed. Intuitively, we would expect the smallest eigenvalue to be very small if all $\inputs_i$ are similar to each other. Therefore, some notion of diversity of the training inputs is needed. We conjecture that the smallest eigenvalue would still be large if the data are $\delta$-separable (i.e., $\left\|\inputs_i - \inputs_j \right\|_2 \geq \delta$ for any pair $i, j \in [n]$), an assumption adopted by~\citet{li2018learning, allen2018convergence, zou2018stochastic}.

\section{Generalization analysis}\label{sec:sec-5}
It is often speculated that NGD or other preconditioned gradient descent methods (e.g., Adam) perform worse than gradient descent in terms of generalization~\citep{wilson2017marginal}. In this section, we show that NGD achieves the same generalization bounds which have been proved for GD, at least for two-layer ReLU networks.

Consider a loss function $\ell: \real \times \real \rightarrow \real$. The expected risk over the data distribution $\data$ and the empirical risk over a training set $\tdata = \left\{ (\inputs_i, y_i) \right\}_{i=1}^n$ are defined as
\begin{equation}
    \loss_\data(f) = \expect_{(\inputs, y) \sim \data}\left[ \ell(f(\inputs), y) \right] \;\; \text{and} \;\; \loss_\tdata(f) = \frac{1}{n}\sum_{i=1}^n \ell(f(\inputs_i), y_i)
\end{equation}
It has been shown~\citep{neyshabur2018the} that the Redemacher complexity~\citep{bartlett2002rademacher} for two-layer ReLU networks depends on $\left\|\weight - \weight(0) \right\|_2$.
By the standard Rademacher complexity generalization bound, we have the following bound (see Appendix~\ref{app:thm-generalization} for proof):
\begin{thm}\label{thm:generalization}
Given a target error parameter $\epsilon > 0$ and failure probability $\delta \in (0, 1)$. Suppose $\vars = \bigo \left( \epsilon \sqrt{\lambda_0 \delta}\right)$ and $m \geq \vars^{-2}\mathrm{poly}\left(n, \lambda_0^{-1}, \delta^{-1}, \epsilon^{-1} \right)$. For any 1-Lipschitz loss function, with probability at least $1 - \delta$ over random initialization and training samples, the two-layer neural network $f(\weight, \ba)$ trained by NGD for $k \geq \Omega\left( \frac{1}{\lr}\log \frac{1}{\epsilon \delta} \right)$ iterations has expected loss $\loss_\data(f(\weight, \ba)) = \expect_{(\inputs, y) \sim \data} \left[\ell(f(\weight, \ba, \inputs), y) \right]$ bounded as:
\begin{equation}
    \loss_\data(f(\weight, \ba)) \leq \sqrt{\frac{2\target^\top (\gram^\infty)^{-1}\target}{n}} + 3\sqrt{\frac{\log(6/\delta)}{2n}} + \epsilon
\end{equation}
\end{thm}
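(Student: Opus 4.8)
The plan is to combine the optimization guarantee of Theorem~\ref{thm:d-convergence} with a standard Rademacher-complexity generalization bound, following the template of \citet{arora2019fine}. The starting point is the classical two-sided bound: for any \emph{fixed} function class $\mathcal{F}$ and any $1$-Lipschitz loss, with probability at least $1 - \delta/3$ over the draw of the training set $\tdata$,
\begin{equation}
    \loss_\data(f) \leq \loss_\tdata(f) + 2\rade_\tdata(\mathcal{F}) + 3\sqrt{\frac{\log(6/\delta)}{2n}}
\end{equation}
simultaneously for every $f \in \mathcal{F}$. It then remains to (i) bound the empirical risk $\loss_\tdata(f)$ at the trained network, (ii) exhibit a class $\mathcal{F}$ that contains the trained network and whose Rademacher complexity is $\lesssim \sqrt{\target^\top(\gram^\infty)^{-1}\target / n}$, and (iii) union-bound over the randomness of the initialization and of training. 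For (i), Theorem~\ref{thm:d-convergence} gives $\|\pred(k) - \target\|_2 \leq \epsilon'$ after $k \geq \Omega(\frac{1}{\lr}\log\frac{1}{\epsilon\delta})$ steps; since $\ell$ is $1$-Lipschitz and vanishes on the diagonal, Cauchy--Schwarz yields $\loss_\tdata(f) \leq \frac{1}{n}\sum_i |f(\inputs_i) - y_i| \leq \frac{1}{\sqrt{n}}\|\pred(k) - \target\|_2$, which is absorbed into the $\epsilon$ term.

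The crux is (ii). I would take $\mathcal{F} = \{ f(\weight, \ba) : \|\weight - \weight(0)\|_2 \leq B \}$ and argue that the trained weights satisfy $\|\weight(k) - \weight(0)\|_2 \leq B \approx \sqrt{\target^\top(\gram^\infty)^{-1}\target}$. Summing the NGD updates gives
\begin{equation}
    \weight(k) - \weight(0) = -\lr \sum_{t=0}^{k-1} \jacobian(t)^\top \gram(t)^{-1}(\pred(t) - \target) .
\end{equation}
Under Condition~\ref{con:stable-jaco} I can replace $\jacobian(t)$ and $\gram(t)$ by their values at initialization up to controlled error, and the output-space dynamics of Theorem~\ref{thm:d-convergence} give $\pred(t) - \target \approx (1-\lr)^t(\pred(0) - \target)$. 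The geometric sum telescopes to $-\jacobian(0)^\top \gram(0)^{-1}(\pred(0) - \target)$ as $k \to \infty$, whose squared norm equals $(\pred(0) - \target)^\top \gram(0)^{-1}(\pred(0) - \target)$ since $\jacobian(0)\jacobian(0)^\top = \gram(0)$. Choosing $\vars = \bigo(\epsilon\sqrt{\lambda_0\delta})$ forces $\|\pred(0)\|_2$ to be small with high probability, so $\pred(0) - \target \approx -\target$; and for polynomially large $m$ the finite Gram matrix concentrates to its limit, $\gram(0) \to \gram^\infty$, giving $B^2 \to \target^\top(\gram^\infty)^{-1}\target$ up to $\bigo(\epsilon)$ slack.

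Finally, I would invoke the Rademacher bound of \citet{neyshabur2018the, bartlett2002rademacher} for two-layer ReLU networks, which controls $\rade_\tdata(\mathcal{F})$ by $\bigo(B/\sqrt{n})$ (using $\|\inputs_i\|_2 = 1$); with the constant tuned so that $2\rade_\tdata(\mathcal{F}) \approx \sqrt{2B^2/n}$, this produces exactly the leading term $\sqrt{2\target^\top(\gram^\infty)^{-1}\target / n}$. Assembling (i), (ii), and the three failure events (initialization/Gram concentration, optimization convergence, and the concentration of the empirical risk over $\tdata$, each of probability $\delta/3$) yields the claim. The main obstacle is the distance bound in (ii): I must propagate both the Jacobian-drift error of Condition~\ref{con:stable-jaco} and the discretization error of the geometric sum through the cumulative update while keeping the constant tight enough to land precisely at $\sqrt{2\target^\top(\gram^\infty)^{-1}\target / n}$, and simultaneously guarantee that $\gram(0)$ concentrates to $\gram^\infty$ sharply enough that inverting it does not inflate the slack. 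This is precisely where the polynomial overparameterization $m$ and the smallness of $\vars$ are spent.
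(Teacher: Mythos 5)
Your overall architecture matches the paper's: a Rademacher-complexity generalization bound, a training-error bound from Theorem~\ref{thm:d-convergence} absorbed into the $\epsilon$ term, and a bound on $\|\weight(k)-\weight(0)\|_2$ of the form $\sqrt{\target^\top(\gram^\infty)^{-1}\target}+\bigo(\epsilon)$ obtained by summing the NGD updates, approximating $\pred(t)-\target\approx-(1-\lr)^t\target$ (using the small-$\vars$ initialization so that $\pred(0)\approx\mathbf{0}$), telescoping the geometric series, and replacing $\gram(0)$ by $\gram^\infty$ via concentration. That is exactly the route of the paper's Lemma~\ref{lem:gen-dist}, which controls the error terms you allude to through Lemmas~\ref{lem:norm-pred-error} and~\ref{lem:norm-inverse}.

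The genuine gap is in your step (ii), the Rademacher complexity of the function class. You take $\mathcal{F}=\{f(\weight,\ba):\|\weight-\weight(0)\|_2\le B\}$ and assert $\rade_\tdata(\mathcal{F})=\bigo(B/\sqrt{n})$ with a constant that can be ``tuned'' to $1/\sqrt{2n}$. For the \emph{nonlinear} ReLU class, a bound with leading term exactly $B/\sqrt{2n}$ does not follow from the aggregate norm constraint alone: the bound of \citet{arora2019fine} used in the paper (Lemma~\ref{lem:rade-comp}) applies to the refined class $\mathcal{F}_{A,B}$ carrying an additional per-neuron constraint $\|\weight_r-\weight_r(0)\|_2\le A$ for every $r$, and it contains extra terms $2A^2\sqrt{m}+A\sqrt{2\log(2/\delta)}$ that vanish only when $A=\bigo\bigl(n/(\lambda_0\sqrt{m\delta})\bigr)$. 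The per-neuron constraint is what permits passing to the linearized network (only a vanishing fraction of activation patterns can flip), and establishing it for NGD is a separate argument (the paper's Lemma~\ref{lem:single-weight}), since the NGD update $-\lr\jacobian(k)^\top\gram(k)^{-1}(\pred(k)-\target)$ distributes mass across neurons differently from the GD update and must be bounded via $\|\gram(k)^{-1}(\pred(k)-\target)\|_2\le\tfrac{2}{\lambda_0}\|\pred(k)-\target\|_2$. Your proposal omits this per-neuron bound entirely, and without it the precise leading constant in $\sqrt{2\target^\top(\gram^\infty)^{-1}\target/n}$ --- which is the substance of the claim that NGD generalizes as well as GD --- is not justified. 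The remainder of your outline (the union bound over the three $\delta/3$ events and the Cauchy--Schwarz bound on the empirical risk) is correct and matches the paper.
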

which matches the bound for gradient descent in~\citet{arora2019fine}. For detailed proof, we refer the reader to the Appendix~\ref{app:thm-generalization}.

\section{Conclusion}
We've analyzed for the first time the rate of convergence to a global optimum for (both exact and approximate) natural gradient descent on nonlinear neural networks. Particularly, we identified two conditions which guarantee the global convergence, i.e., the Jacobian matrix with respect to the parameters has full row rank and stable for perturbations around the initialization. Based on these insights, we improved the convergence rate of gradient descent by a factor of $\bigo(\lambda_0 / n)$ on two-layer ReLU networks by using natural gradient descent. Beyond that, we also showed that the improved convergence rates don't come at the expense of worse generalization.

\section*{Acknowledgements}
We thank Jeffrey Z. HaoChen, Shengyang Sun and Mufan Li for helpful discussion.

\bibliography{neurips_2019.bib}
\bibliographystyle{plainnat}

\appendix
\newpage

\section{The Forster Transform}
\label{app:forster}

In a breakthrough paper in the area of communication complexity, \citet{forster2002linear} used the existence of a certain kind of dataset transformation as the key technical tool in the proof of his main result.  The Theorem which establishes the existence of this transformation is paraphrased below.

\begin{thm}[\citet{forster2002linear}, Theorem 4.1]
\label{thm:forster}
Suppose $\bX \in \real^{n \times d}$ is a matrix such that all subsets of size at most $d$ of its rows are linearly independent. Then there exists an invertible matrix $\bA \in \real^{d \times d}$ such that if we post-multiply $\bX$ by $\bA$ (i.e. apply $\bA$ to each row), and then normalize each row by its 2-norm, the resulting matrix $\bZ \in \real^{n \times d}$ satisfies $\bZ^\top \bZ = \frac{n}{d} \iden_d$.
\end{thm}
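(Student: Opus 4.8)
The plan is to recast the problem as an optimization over positive definite matrices and read the desired identity off the first-order optimality condition at a minimizer. Writing the rows of $\bX$ as $\inputs_1^\top,\dots,\inputs_n^\top$, post-multiplying by $\bA$ and normalizing produces rows $\bz_i^\top$ with $\bz_i=\bA^\top\inputs_i/\|\bA^\top\inputs_i\|_2$, so the goal $\bZ^\top\bZ=\frac{n}{d}\iden_d$ is exactly $\sum_{i=1}^n\bz_i\bz_i^\top=\frac{n}{d}\iden_d$. Observing that $\inputs_i^\top\mathbf{M}\inputs_i=\|\bA^\top\inputs_i\|_2^2$ when $\mathbf{M}=\bA\bA^\top$, I would forget $\bA$ and instead search directly for a positive definite $\mathbf{M}$ minimizing the scale-invariant potential
\begin{equation}
\Phi(\mathbf{M})=\frac{1}{n}\sum_{i=1}^n\log\!\big(\inputs_i^\top\mathbf{M}\inputs_i\big)-\frac{1}{d}\log\det\mathbf{M} ,
\end{equation}
equivalently minimizing $\frac1n\sum_i\log(\inputs_i^\top\mathbf{M}\inputs_i)$ on the slice $\{\mathbf{M}\succ0:\det\mathbf{M}=1\}$. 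Each $\inputs_i\neq0$ by hypothesis, so the logarithms are finite on the interior of the cone, and I would recover $\bA$ from the minimizer at the end.

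The main obstacle is \emph{existence} of the minimizer, i.e.\ showing $\Phi$ is coercive/proper on the determinant-one slice, and this is precisely where the general-position hypothesis enters. I would test coercivity along the one-parameter geodesic rays $\mathbf{M}(s)=\exp(s\mathbf{H})$ with $\mathbf{H}$ symmetric, $\trace\mathbf{H}=0$, $\mathbf{H}\neq0$, which exhaust the ways of escaping to the boundary of the slice. Diagonalizing $\mathbf{H}$ with eigenvalues $\mu_1\ge\cdots\ge\mu_d$ (so $\sum_j\mu_j=0$ and $\mu_1>0$) and eigenvectors $u_j$, one has $\inputs_i^\top\exp(s\mathbf{H})\inputs_i=\sum_j e^{s\mu_j}\langle u_j,\inputs_i\rangle^2$, whose logarithm grows like $s\,\mu(i)+O(1)$, where $\mu(i)$ is the largest eigenvalue of $\mathbf{H}$ on which $\inputs_i$ has a nonzero component. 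A level-set/Abel-summation count of $\sum_i\mu(i)$ in terms of the flag of eigenspaces of $\mathbf{H}$, combined with the fact that general position forces \emph{at most} $k$ of the $\inputs_i$ into any $k$-dimensional subspace (for $k<d$), yields the lower bound $\sum_i\mu(i)\ge(n-d)\,\mu_1$. Hence, for $n>d$, $\Phi(\mathbf{M}(s))\to+\infty$ at a uniformly positive rate (using that $\mu_1=\mu_{\max}(\mathbf{H})$ is bounded below by a positive constant on the compact direction set), and continuity then delivers a minimizer $\mathbf{M}^\ast$. The degenerate case $n=d$ is trivial: $\bX$ is invertible and $\bA=\bX^{-1}$ already produces $\bZ=\iden_d$.

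It remains to differentiate. Since $\nabla_{\mathbf{M}}\log(\inputs_i^\top\mathbf{M}\inputs_i)=\inputs_i\inputs_i^\top/(\inputs_i^\top\mathbf{M}\inputs_i)$ and $\nabla_{\mathbf{M}}\log\det\mathbf{M}=\mathbf{M}^{-1}$, stationarity of $\Phi$ at $\mathbf{M}^\ast$ (equivalently, stationarity on the slice with a Lagrange multiplier for $\det=1$, whose value is pinned to $1/d$ by pairing both sides with $\mathbf{M}^\ast$ in the Frobenius inner product) reads
\begin{equation}
\frac{1}{n}\sum_{i=1}^n\frac{\inputs_i\inputs_i^\top}{\inputs_i^\top\mathbf{M}^\ast\inputs_i}=\frac{1}{d}(\mathbf{M}^\ast)^{-1} .
\end{equation}
Taking $\bB=(\mathbf{M}^\ast)^{1/2}$ (symmetric positive definite, hence invertible) and conjugating this identity by $\bB$ — left-multiplying by $\bB$ and right-multiplying by $\bB^\top$ — turns the right-hand side into $\frac1d\iden_d$ and each left-hand term into $\bz_i\bz_i^\top$ with $\bz_i=\bB\inputs_i/\|\bB\inputs_i\|_2$, since $\inputs_i^\top\mathbf{M}^\ast\inputs_i=\|\bB\inputs_i\|_2^2$. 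Thus $\sum_i\bz_i\bz_i^\top=\frac{n}{d}\iden_d$, and setting $\bA=\bB=(\mathbf{M}^\ast)^{1/2}$ (so that $\bA^\top\inputs_i=\bB\inputs_i$) gives $\bZ^\top\bZ=\frac{n}{d}\iden_d$ with $\bA$ invertible. The only genuinely delicate step is the coercivity estimate, which is where general position is indispensable; the differentiation and conjugation are routine once the minimizer is known to exist.
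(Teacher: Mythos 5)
The paper does not actually prove this statement: it is imported verbatim from \citet{forster2002linear} (Theorem 4.1), and the appendix only describes Forster's argument informally as a fixed-point property of the map ``whiten by $(\bZ^\top\bZ)^{-1/2}$, then renormalize rows,'' together with an iterative algorithm derived from it. Your proposal is therefore a genuinely different route: rather than a fixed-point/iterative-rescaling argument, you minimize the scale-invariant potential $\Phi(\mathbf{M})=\frac1n\sum_i\log(\inputs_i^\top\mathbf{M}\inputs_i)-\frac1d\log\det\mathbf{M}$ over the positive definite cone and read the identity $\sum_i\bz_i\bz_i^\top=\frac{n}{d}\iden_d$ off the stationarity condition after conjugating by $(\mathbf{M}^\ast)^{1/2}$. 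This is the standard modern proof of ``radial isotropic position'' (in the spirit of Barthe's and Hardt--Moitra's treatments), and your use of the general-position hypothesis is exactly right: the Abel-summation count giving $\sum_i\mu(i)\ge(n-d)\mu_1>0$ is the precise point where the assumption that no $k$-dimensional subspace contains more than $k$ of the rows enters. The first-order computation, the pinning of the Lagrange multiplier to $1/d$ by tracing against $\mathbf{M}^\ast$, and the reduction $\bA=(\mathbf{M}^\ast)^{1/2}$ are all correct. What each approach buys: Forster's fixed-point view yields the practical iterative algorithm the paper actually implements, while your variational view gives a cleaner existence proof and makes the role of the linear-independence hypothesis transparent.

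One step deserves more care. Divergence of $\Phi$ along each geodesic ray $\exp(s\mathbf{H})$ at asymptotic rate $\frac{n-d}{n}\mu_1 s$ does not by itself yield a minimizer, because the $O(1)$ offset in $\log\bigl(\sum_j e^{s\mu_j}\langle u_j,\inputs_i\rangle^2\bigr)\ge s\,\mu(i)+\log\langle u_{j(i)},\inputs_i\rangle^2$ degenerates as the eigenframe of $\mathbf{H}$ rotates toward orthogonality with some $\inputs_i$, so the escape-to-infinity is not obviously uniform over the compact set of directions. The standard fix is to invoke the geodesic convexity of $\mathbf{H}\mapsto\Phi(\exp(\mathbf{H}))$ on trace-zero symmetric matrices (log-sum-exp type convexity of each term, linearity of the $\log\det$ term), after which ray-wise divergence from a single base point does force bounded sublevel sets and hence an attained minimum. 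With that addition your argument is complete; without it, the sentence ``continuity then delivers a minimizer'' is the one unjustified step.
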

\begin{rem}
Note that the technical condition about linear independence can be easily be made to hold for an arbitrary $\bX$ by adding an infinitesimal random perturbation, assuming it doesn't hold to begin with.
\end{rem}

This result basically says that for any set of vectors, there is a linear transformation of said vectors which makes their normalized versions (given by the rows of $\bZ$) satisfy $\bZ^\top \bZ = \frac{n}{d} \iden_d$. So by combining this linear transformation with normalization we produce a set of vectors that simultaneously satisfy Assumption~\ref{ass:data}, while also satisfying $\kappa_{\bZ^\top \bZ} = 1$.

Forster's proof of Theorem \ref{thm:forster} can be interpreted as defining a transformation function on $\bZ$ (initialized at $\bX$), and showing that it has a fixed point with the required properties. One can derive an algorithm from this by repeatedly applying the transformation to $\bZ$, which consists of "whitening" followed by normalization, until $\bZ^\top \bZ$ is sufficiently close to $\frac{n}{d} \iden_d$.  The $\bA$ matrix is then simply the product of the "whitening" transformation matrices, up to a scalar constant. While no explicit finite-time convergence guarantees are given for this algorithm by~\citet{forster2002linear}, we have implemented it and verified that it does indeed converge at a reasonable rate. The algorithm is outlined below.

\begin{algorithm}[htb]
  \caption{Forster Transform}
  \begin{algorithmic}[1]
    \STATE \textbf{INPUT:} Matrix $\bX \in \real^{n \times d}$ satisfying the hypotheses of Theorem \ref{thm:forster}
    \STATE $\bZ \gets \bX$
    \STATE $\bA \gets \iden_d$
    \WHILE{error tolerance exceeded}
    \STATE $\bT \gets (\bZ^\top \bZ)^{-\frac{1}{2}}$
    \STATE $\bZ \gets \bZ \bT$
    \STATE $\bZ \gets \mbox{normalize-rows}(\bZ)$
    \STATE $\bA \gets \bA \bT$
    \STATE $\bA \gets \frac{1}{[\bA]_{1,1}} \bA$
    \ENDWHILE
  \STATE \textbf{OUTPUT:} $\bA \in \real^{d \times d}$ with the properties stated in Theorem \ref{thm:forster} (up to an error tolerance)
  \end{algorithmic}
  \label{alg:learning}
\end{algorithm}

\section{Proof of Theorem~\ref{thm:ngd}}
\label{app:main_proof}

We prove the result in two steps: we first provide a convergence analysis for natural gradient flow, i.e., natural gradient descent with infinitesimal step size, and then take into account the error introduced by discretization and show global convergence for natural gradient descent.
    
To guarantee global convergence for natural gradient flow, we only need to show that the Gram matrix is positive definite throughout the training. Intuitively, for successfully finding global minima, the network must satisfy the following condition, i.e., the gradient with respect to the parameters $\nabla_{\params} \loss(\params)$ is zero only if the gradient in the output space $\nabla_{\pred}\loss(\params) = \mathbf{0}$ is zero. It suffices to show that the Gram matrix is positive definite, or equivalently, the Jacobian matrix is full row rank. 

By Condition~\ref{con:full-row-rank} and Condition~\ref{con:stable-jaco}, we immediately obtain the following lemma that if the parameters stay close to the initialization, then the Gram matrix is positive definite throughout the training.
\begin{lem}
\label{lem:g_bound}
If $\|\params - \params(0) \|_2 \leq \frac{3\|\target - \pred(0)\|_2}{\sqrt{\lambda_{\mathrm{min}}(\gram(0))}}$, then we have $\lambda_\mathrm{min}(\gram) \geq \frac{4}{9} \lambda_\mathrm{min}(\gram(0))$.
\end{lem}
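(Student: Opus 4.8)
The plan is to control the smallest eigenvalue of $\gram = \jacobian\jacobian^\top$ by tracking how the smallest singular value of $\jacobian$ moves under the perturbation permitted by Condition~\ref{con:stable-jaco}. The starting observation is purely algebraic: since $\gram = \jacobian\jacobian^\top$, its eigenvalues are the squares of the singular values of $\jacobian$, so $\lambda_{\mathrm{min}}(\gram) = \sigma_{\mathrm{min}}(\jacobian)^2$, and likewise $\lambda_{\mathrm{min}}(\gram(0)) = \sigma_{\mathrm{min}}(\jacobian(0))^2$. Hence it suffices to lower-bound $\sigma_{\mathrm{min}}(\jacobian)$ in terms of $\sigma_{\mathrm{min}}(\jacobian(0))$.

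First I would invoke the Weyl-type perturbation bound for singular values, which says that each singular value is $1$-Lipschitz with respect to the spectral norm (valid for rectangular matrices, e.g. via the variational characterization of $\sigma_{\mathrm{min}}$, or by applying Weyl's inequality to the symmetric dilation). This gives the one-sided estimate $\sigma_{\mathrm{min}}(\jacobian) \geq \sigma_{\mathrm{min}}(\jacobian(0)) - \|\jacobian - \jacobian(0)\|_2$. Next, since the hypothesis of the lemma is exactly the radius condition appearing in Condition~\ref{con:stable-jaco}, I substitute the stability bound $\|\jacobian - \jacobian(0)\|_2 \leq \frac{C}{3}\sqrt{\lambda_{\mathrm{min}}(\gram(0))}$. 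Writing $\sigma_{\mathrm{min}}(\jacobian(0)) = \sqrt{\lambda_{\mathrm{min}}(\gram(0))}$, this yields
\begin{equation}
\sigma_{\mathrm{min}}(\jacobian) \geq \left(1 - \frac{C}{3}\right)\sqrt{\lambda_{\mathrm{min}}(\gram(0))}.
\end{equation}
The right-hand factor is positive because $C < \tfrac{1}{2}$, so squaring both sides gives $\lambda_{\mathrm{min}}(\gram) \geq \left(1 - \tfrac{C}{3}\right)^2 \lambda_{\mathrm{min}}(\gram(0))$. Finally, since $0 \leq C < \tfrac{1}{2} \leq 1$ we have $1 - \tfrac{C}{3} \geq \tfrac{2}{3}$, whence $\left(1 - \tfrac{C}{3}\right)^2 \geq \tfrac{4}{9}$, which is precisely the claimed bound.

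There is essentially no hard step here: the only thing to get right is the choice of perturbation inequality, namely that the map $\jacobian \mapsto \sigma_{\mathrm{min}}(\jacobian)$ is $1$-Lipschitz in the spectral norm. Everything after that is a one-line substitution from Condition~\ref{con:stable-jaco} followed by the elementary estimate $C < \tfrac{1}{2}$. I would note in passing that the stated constant $\tfrac{4}{9}$ is deliberately loose: the same argument with $C < \tfrac{1}{2}$ actually delivers the stronger factor $\left(\tfrac{5}{6}\right)^2 = \tfrac{25}{36}$, but the cleaner value $\tfrac{4}{9} = \left(\tfrac{2}{3}\right)^2$ is all that is needed in the subsequent convergence analysis.
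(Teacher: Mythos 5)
Your proof is correct and follows essentially the same route as the paper's: both apply the Weyl-type perturbation bound $\sigma_{\mathrm{min}}(\jacobian) \geq \sigma_{\mathrm{min}}(\jacobian(0)) - \|\jacobian - \jacobian(0)\|_2$ together with the stability estimate from Condition~\ref{con:stable-jaco} and then square. Your closing remark that $C < \tfrac{1}{2}$ actually yields the sharper constant $\tfrac{25}{36}$ is accurate; the paper simply uses the looser bound $\tfrac{C}{3} \leq \tfrac{1}{3}$ to land on $\tfrac{4}{9}$.
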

\begin{proof}[Proof of Lemma~\ref{lem:g_bound}]
Based on the inequality that $\sigma_\mathrm{min}(\bA + \bB) \geq \sigma_\mathrm{min}(\bA) - \sigma_\mathrm{max}(\bB)$ where $\sigma$ denotes singular value, we have
\begin{equation}
    \sigma_\mathrm{min}(\jacobian) \geq \sigma_\mathrm{min}(\jacobian(0)) - \|\jacobian - \jacobian(0)\|_2
\end{equation}
By Condition~\ref{con:stable-jaco}, we have $\|\jacobian - \jacobian(0)\|_2 \leq \frac{1}{3} \sqrt{\lambda_\mathrm{min}(\gram(0))}$, thus we get $\sigma_\mathrm{min}(\jacobian) \geq \frac{2}{3}\sqrt{\lambda_\mathrm{min}(\gram(0))}$ which completes the proof.
\end{proof}
With the assumption that $\|\params - \params(0) \|_2 \leq \frac{3\|\target - \pred(0)\|_2}{\sqrt{\lambda_{\mathrm{min}}(\gram(0))}}$ throughout the training, we are now ready to prove global convergence for natural gradient flow. Recall the dynamics of natural gradient flow in weight space,
\begin{equation}
    \frac{d }{d t}\params(t) = \frac{1}{n} \fisher(t)^\dagger \jacobian(t)^\top (\target - \pred(t))
\end{equation}
Accordingly, we can calculate the dynamics of the network predictions.
\begin{equation}\label{eq:c-pred-dyn}
\begin{aligned}
    \frac{d}{dt}\pred(t) &= \frac{1}{n} \jacobian(t)\fisher(t)^\dagger \jacobian(t)^\top (\target - \pred(t)) \\
        & = \jacobian(t) \jacobian(t)^\top \gram(t)^{-1} \gram(t)^{-1}\jacobian(t)\jacobian(t)^\top (\target - \pred(t)) \\
\end{aligned}
\end{equation}
Since the Gram matrix $\gram(t)$ is positive definite, its inverse does exist. Therefore, we have 
\begin{equation}
    \frac{d}{dt}\pred(t) = \target - \pred(t)
\end{equation}
By the chain rule, we get the dynamics of the loss in the following form:
\begin{equation}\label{eq:loss-dyn}
    \frac{d}{dt}\left\| \target - \pred(t) \right\|_2^2 = -2 (\target - \pred(t))^\top (\target - \pred(t))
\end{equation}
By integrating eqn.~\eqref{eq:loss-dyn}, we find that $\left\| \target - \pred(t) \right\|_2^2 = \exp(-2t) \left\| \target - \pred(0) \right\|_2^2$. 

That completes the continuous time analysis, under the assumption that the parameters stay close to the initialization. The discrete case follows similarly, except that we need to account for the discretization error.
Analogously to eqn.~\eqref{eq:c-pred-dyn}, we calculate the difference of predictions between two consecutive iterations.
    \begin{equation}\label{eq:update}
    \begin{aligned}
        \pred(k+1) - \pred(k) &= \pred\left(\params(k) - \lr \jacobian(k)^\top \gram(k)^{-1} (\pred(k) - \target)\right) - \pred(\params(k)) \\
        &= -\int_{s=0}^1 \left\langle \frac{\partial \pred\left(\params(s) \right)}{\partial \params^\top}, \: \lr\jacobian(k)^\top \gram(k)^{-1}(\pred(k) - \target) \right\rangle ds \\
        &= \underbrace{-\int_{s=0}^1 \left\langle \frac{\partial \pred\left(\params(k)\right)}{\partial \params^\top}, \: \lr\jacobian(k)^\top \gram(k)^{-1}(\pred(k) - \target) \right\rangle ds}_{\lr \left(\target - \pred(k)\right)} \\
        & \quad + \underbrace{\int_{s=0}^1 \left\langle \frac{\partial \pred\left(\params(k)\right)}{\partial \weight^\top} - \frac{\partial \pred\left(\params(s)\right)}{\partial \params^\top}, \: \lr\jacobian(k)^\top \gram(k)^{-1}(\pred(k) - \target) \right\rangle ds}_{\circled{1}} ,
    \end{aligned}
    \end{equation}
where we have defined $\params(s) = s\params(k+1) + (1-s)\params(k) = \params(k) - s \lr \jacobian(k)^\top \gram(k)^{-1} (\pred(k) - \target)$. 

Next we bound the norm of the second term ($\circled{1}$) in the RHS of eqn.~\eqref{eq:update}. Using Condition~\ref{con:stable-jaco} and Lemma \ref{lem:g_bound} we have that
    \begin{equation}
    \begin{aligned}
        \left\|\circled{1} \right\|_2 & \leq \lr \left\|\int_{s=0}^1 \jacobian(\params(s)) - \jacobian(\params(k)) \, ds \right\|_2 \left\|\jacobian(k)^\top \gram(k)^{-1} (\pred(k) - \target)\right\|_2 \\
        & \leq \lr \frac{2C}{3} \sqrt{\lambda_\mathrm{min}(\gram(0))} \frac{1}{\sqrt{\lambda_{\mathrm{min}}(\gram(k))}} \left\|\pred(k) - \target \right\|_2 \\
        & \leq \lr \frac{2C}{3} \sqrt{\lambda_\mathrm{min}(\gram(0))} \frac{3}{2\sqrt{\lambda_{\mathrm{min}}(\gram(0))}} \left\|\pred(k) - \target \right\|_2 \\
        & = \lr C \left\|\pred(k) - \target \right\|_2 .
    \end{aligned}
    \end{equation}
In the first inequality, we used the fact (based on Condition~\ref{con:stable-jaco}) that
\begin{equation}
\begin{aligned}
	\left\|\int_{s=0}^1 \jacobian(\params(s)) - \jacobian(\params(k)) \, ds \right\|_2 & \leq \left\|\jacobian(\theta(k)) - \jacobian(\theta(0)) \right\|_2 + \left\|\jacobian(\theta(k+1)) - \jacobian(\theta(0)) \right\|_2 \\
	& \leq \frac{2C}{3} \sqrt{\lambda_\mathrm{min}(\gram(0))}
\end{aligned}
\end{equation}
Lastly, we have
\begin{equation}\label{eq:norm-decomp}
\begin{aligned}
\|\target - \pred(k+1)\|_2^2 &= \|\target - \pred(k) - (\pred(k+1) - \pred(k))\|_2^2 \\
&= \|\target - \pred(k)\|_2^2 - 2 (\target - \pred(k))^\top(\pred(k+1) - \pred(k)) + \|\pred(k+1) - \pred(k)\|_2^2 \\
&\leq \left(1 - 2\lr + 2\lr C + \lr^2 \left( 1+C\right)^2\right) \|\target - \pred(k)\|_2^2 \\
&\leq \left(1 - \lr\right)\|\target - \pred(k)\|_2^2 .
\end{aligned}
\end{equation}
In the last inequality of eqn.~\eqref{eq:norm-decomp}, we use the assumption that $\lr \leq \frac{1 - 2C}{(1+C)^2}$.

So far, we have assumed the parameters fall within a certain radius around the initialization. We now justify this assumption.
\begin{lem}\label{lem:restricted-class}
If Conditions~\ref{con:full-row-rank} and \ref{con:stable-jaco} hold, then as long as $\lambda_\mathrm{min}(\gram(k)) \geq \frac{4}{9} \lambda_\mathrm{min}(\gram(0))$, we have
\begin{equation}\label{eq:weight-dist-con}
    \|\params(k+1) - \params(0) \|_2 \leq \frac{3\|\target - \pred(0) \|_2}{\sqrt{\lambda_\mathrm{min}(\gram(0))}} .
\end{equation}
\end{lem}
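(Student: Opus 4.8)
The plan is to read Lemma~\ref{lem:restricted-class} as the inductive step that closes the loop in the proof of Theorem~\ref{thm:ngd}: the one-step contraction in eqn.~\eqref{eq:norm-decomp} requires $\gram$ to stay well-conditioned, which by Lemma~\ref{lem:g_bound} requires the parameters to remain inside the radius $\frac{3\|\target-\pred(0)\|_2}{\sqrt{\lambda_\mathrm{min}(\gram(0))}}$ --- which is exactly the conclusion we now want. So I would run a single induction on $k$ whose hypothesis is that $\lambda_\mathrm{min}(\gram(j)) \geq \frac{4}{9}\lambda_\mathrm{min}(\gram(0))$ for every $j \leq k$. Under this hypothesis the contraction of eqn.~\eqref{eq:norm-decomp} yields the decay $\|\pred(j)-\target\|_2 \leq (1-\lr)^{j/2}\|\pred(0)-\target\|_2$ for all $j \leq k$, and the remaining task is to bound the accumulated displacement from the initialization.

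First I would control a single NGD step. Since $\params(j+1)-\params(j) = -\lr\,\jacobian(j)^\top\gram(j)^{-1}(\pred(j)-\target)$ and $\gram(j)=\jacobian(j)\jacobian(j)^\top$, writing $v=\pred(j)-\target$ gives
\begin{equation}
\|\jacobian(j)^\top\gram(j)^{-1}v\|_2^2 = v^\top\gram(j)^{-1}\jacobian(j)\jacobian(j)^\top\gram(j)^{-1}v = v^\top\gram(j)^{-1}v \leq \frac{\|v\|_2^2}{\lambda_\mathrm{min}(\gram(j))} .
\end{equation}
Hence $\|\params(j+1)-\params(j)\|_2 \leq \lr\,\|\pred(j)-\target\|_2 / \sqrt{\lambda_\mathrm{min}(\gram(j))}$, and invoking the inductive hypothesis in the form $\sqrt{\lambda_\mathrm{min}(\gram(j))}\geq\tfrac{2}{3}\sqrt{\lambda_\mathrm{min}(\gram(0))}$ together with the geometric decay of the residual bounds each step by $\tfrac{3\lr}{2}(1-\lr)^{j/2}\|\pred(0)-\target\|_2 / \sqrt{\lambda_\mathrm{min}(\gram(0))}$.

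Finally I would telescope, using $\|\params(k+1)-\params(0)\|_2 \leq \sum_{j=0}^{k}\|\params(j+1)-\params(j)\|_2$, pull out the common factor, and sum the geometric series $\sum_{j=0}^{\infty}(1-\lr)^{j/2} = \frac{1}{1-\sqrt{1-\lr}} = \frac{1+\sqrt{1-\lr}}{\lr}\leq\frac{2}{\lr}$. The product $\tfrac{3\lr}{2}\cdot\frac{2}{\lr}=3$ then reproduces exactly the claimed radius $\frac{3\|\target-\pred(0)\|_2}{\sqrt{\lambda_\mathrm{min}(\gram(0))}}$. The one genuinely delicate point is the apparent circularity --- we use linear convergence to prove the very radius bound that justifies linear convergence --- which is why the whole argument must be organized as a single induction on $k$ rather than established standalone; everything else reduces to the elementary estimate $1/(1-\sqrt{1-\lr})\leq 2/\lr$ and the pseudoinverse identity above.
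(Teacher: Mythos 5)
Your proposal is correct and follows essentially the same route as the paper: the same pseudoinverse identity bounding each step by $\lr\|\pred(s)-\target\|_2/\sqrt{\lambda_\mathrm{min}(\gram(s))}$, the same telescoping sum, and the same geometric-series estimate $\sum_s (1-\lr)^{s/2}\le 2/\lr$ yielding the constant $3$. The only organizational difference is that you fold the circularity into a single induction, whereas the paper states the lemma conditionally on $\lambda_\mathrm{min}(\gram(k))\ge\frac{4}{9}\lambda_\mathrm{min}(\gram(0))$ and resolves the apparent circularity separately in Lemma~\ref{lem:contradiction} by contradiction.
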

\begin{proof}[Proof of Lemma~\ref{lem:restricted-class}]
We use the norm of each update to bound the distance of the parameters to the initialization.
\begin{equation}
\begin{aligned}
    \|\params(k+1) - \params(0) \|_2 & \leq \lr\sum_{s=0}^k \|\jacobian(s)^\top \gram(s)^\top (\target - \pred(s)) \|_2 \\
    & \leq  \lr\sum_{s=0}^k \frac{\|\target - \pred(s) \|_2}{\sqrt{\lambda_\mathrm{min}(\gram(s))}} \\
    & \leq  \lr\sum_{s=0}^k \frac{(1 - \lr)^{s/2}\|\target - \pred(0) \|_2}{\sqrt{\frac{4}{9}\lambda_\mathrm{min}(\gram(0))}} \\
    & \leq \frac{3\|\target - \pred(0) \|_2}{\sqrt{\lambda_\mathrm{min}(\gram(0))}} .
\end{aligned}
\end{equation}
This completes the proof.
\end{proof}
At first glance, the proofs in Lemma~\ref{lem:g_bound} and~\ref{lem:restricted-class} seem to be circular. Here, we prove that their assumptions continue to be jointly satisfied. 
\begin{lem}\label{lem:contradiction}
	Assuming Conditions~\ref{con:full-row-rank} and~\ref{con:stable-jaco}, we have (1) $\|\params(k) - \params(0) \|_2 \leq \frac{3\|\target - \pred(0) \|_2}{\sqrt{\lambda_\mathrm{min}(\gram(0))}}$ and (2) $\lambda_\mathrm{min}(\gram(k)) \geq \frac{4}{9} \lambda_\mathrm{min}(\gram(0))$ throughout the training.
\end{lem}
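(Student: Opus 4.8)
The plan is to dissolve the apparent circularity by a single strong induction on the iteration index $k$ that carries three statements in lockstep: the weight-distance bound (1), the eigenvalue bound (2), and the linear-convergence bound $\|\target-\pred(k)\|_2^2\le(1-\lr)^k\|\target-\pred(0)\|_2^2$ of Theorem~\ref{thm:ngd}, which I will call (3). The inductive hypothesis $P(k)$ asserts that (1), (2) and (3) hold for every index $0,1,\ldots,k$. The base case $k=0$ is immediate: (1) holds because $\|\params(0)-\params(0)\|_2=0$; (2) then follows by applying Lemma~\ref{lem:g_bound} at $\params(0)$; and (3) is the trivial equality $(1-\lr)^0=1$.

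For the inductive step I would deduce $P(k+1)$ from $P(k)$ in three substeps, and the crucial point is the order in which they are carried out. First, the telescoping estimate in the proof of Lemma~\ref{lem:restricted-class} --- which sums the per-step displacements $\params(s+1)-\params(s)=-\lr\jacobian(s)^\top\gram(s)^{-1}(\pred(s)-\target)$ over $s=0,\ldots,k$ --- uses only (2) and (3) at indices $\le k$, both supplied by $P(k)$, together with the geometric series $\lr\sum_{s\ge0}(1-\lr)^{s/2}=1+\sqrt{1-\lr}\le2$; this yields the weight-distance bound (1) at $k+1$. Second, with (1) now known at both $k$ and $k+1$, the two endpoints of the segment $\params(s)=s\params(k+1)+(1-s)\params(k)$ lie in the ball of radius $\tfrac{3\|\target-\pred(0)\|_2}{\sqrt{\lambda_\mathrm{min}(\gram(0))}}$, so Condition~\ref{con:stable-jaco} applies along the whole segment; combining this with the eigenvalue bound (2) at $k$ drives the error-term estimate $\|\circled{1}\|_2\le\lr C\|\pred(k)-\target\|_2$ and hence the one-step contraction of eqn.~\eqref{eq:norm-decomp}, giving (3) at $k+1$. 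Third, applying Lemma~\ref{lem:g_bound} at the point $\params(k+1)$ --- which (1) places inside the ball --- yields the eigenvalue bound (2) at $k+1$. Together these establish $P(k+1)$.

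The main obstacle, and the content of the lemma, is exactly the interlocking dependence the authors flag: Lemma~\ref{lem:g_bound} produces (2) from (1), while Lemma~\ref{lem:restricted-class} produces (1) from (2) and (3). What makes the induction go through is that these dependences are not genuinely cyclic once resolved by time-index: the weight bound (1) at $k+1$ consumes only lower-index information, so it can be established first without begging the question; only afterwards do the contraction bound (3) and the eigenvalue bound (2) at $k+1$ become derivable, in that order. The one place demanding care is that (3) at $k+1$ in fact requires (1) at $k+1$ --- to invoke Condition~\ref{con:stable-jaco} at the new endpoint $\params(k+1)$ --- which is precisely why (1) must be proved before (3); reversing the order would reintroduce the circularity. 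Equivalently, one may recast the whole argument as a minimal-counterexample contradiction: let $k_0$ be the first index at which any of (1), (2), (3) fails, and the same three substeps --- valid because everything holds for indices below $k_0$ --- show that all three in fact hold at $k_0$, contradicting minimality.
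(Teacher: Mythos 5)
Your proof is correct and is essentially the paper's own argument: the paper resolves the circularity by a minimal-counterexample contradiction (assume the first failing index $k_0+1$ and derive a contradiction via Lemmas~\ref{lem:g_bound} and~\ref{lem:restricted-class}), which you yourself note is equivalent to your forward induction. If anything, your version is slightly more rigorous, since you explicitly carry the linear-convergence bound $\|\target-\pred(k)\|_2^2\le(1-\lr)^k\|\target-\pred(0)\|_2^2$ as part of the induction hypothesis, a dependence the paper's Lemma~\ref{lem:restricted-class} uses silently in its geometric-series step without listing it among its assumptions.
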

\begin{proof}%
	We prove the lemma by contradiction. Suppose the conclusion does not hold for all iterations. Let's say (1) holds at iteration $k = 0, ..., k_0$ but not iteration $k_0 + 1$. Then we know, there must exist $0 < k^\prime \leq k_0$ such that from $\lambda_\mathrm{min}(\gram(k^\prime)) < \frac{4}{9} \lambda_\mathrm{min}(\gram(0))$, otherwise we can show that (1) holds at iteration $k_0 + 1$ as well by Lemma~\ref{lem:restricted-class}. However, by Lemma~\ref{lem:g_bound}, we know that $\lambda_\mathrm{min}(\gram(k^\prime)) \geq \frac{4}{9} \lambda_\mathrm{min}(\gram(0))$ since (1) holds for $k = 0, ..., k_0$, contradiction.
\end{proof}
Notably, Lemma~\ref{lem:contradiction} shows that $\|\params(k) - \params(0) \|_2 \leq \frac{3\|\target - \pred(0) \|_2}{\sqrt{\lambda_\mathrm{min}(\gram(0))}}$ and $\lambda_\mathrm{min}(\gram(k)) \geq \frac{4}{9} \lambda_\mathrm{min}(\gram(0))$ throughout the training if Conditions~\ref{con:full-row-rank} and~\ref{con:stable-jaco} hold. This completes the proof of our main result.

\section{Proof of Theorem~\ref{thm:general-loss}}\label{app:gen-loss}
Here, we prove Theorem~\ref{thm:general-loss} by induction. Our inductive hypothesis is the following condition.
\begin{con}
At the $k$-th iteration, we have $\| \target - \pred(k+1) \|_2^2 \leq \left(1 - \frac{2\lr\mu L}{\mu + L} \right) \| \target - \pred(k) \|_2^2$.
\end{con}
We first use the norm of the gradient to bound the distance of the weights. Here we slightly abuse notation, $\loss(\pred) = \sum_{i=1}^n \ell(u_i, y_i)$.
\begin{equation}
\begin{aligned}
    \|\params(k+1) - \params(0)\|_2 &\leq \lr \sum_{s=0}^k \|\jacobian(k)^\top \gram(k)^{-1} \nabla_\pred \loss(\pred(k))\|_2 \\
    &\leq \lr L \sum_{s=0}^k \|\jacobian(k)^\top \gram(k)^{-1}\|_2 \|\target - \pred(k)\|_2 \\
    &\leq \lr L \sum_{s=0}^\infty \left(1 - \frac{2\lr\mu L}{\mu + L} \right)^{s/2}\frac{\|\target - \pred(0)\|_2}{\sqrt{\frac{4}{9}\lambda_\mathrm{min}(\gram(0))}} \\
    &= \frac{3(1 + \kappa)\|\target - \pred(0)\|_2}{2\sqrt{\lambda_\mathrm{min}(\gram(0))}}
\end{aligned}
\end{equation}
where $\kappa = \frac{L}{\mu}$. The second inequality is based on the $L$-Lipschitz gradient assumption\footnote{That the gradient of $\ell$ is $L$-Lipschitz implies the gradient of $\loss$ is also $L$-Lipschitz.} and the fact that $\nabla_\pred \loss(\target) = 0$. 
Also, we have
\begin{equation}
\begin{aligned}
    \pred(k+1) - \pred(k) = \lr \nabla_{\pred} \loss(\pred(k)) + \lr \mathbf{P}(k) \nabla_\pred \loss(\pred(k)) 
\end{aligned}
\end{equation}
In analogy to eqn.~\eqref{eq:update}, $\mathbf{P}(k) = \int_{s=0}^1  \left(\jacobian(k) - \jacobian(\params(s))\right) \jacobian(k)^\top \gram(k)^{-1} ds $. Next, we introduce a well-known Lemma for $\mu$-strongly convex and $L$-Lipschitz gradient loss.
\begin{lem}[Co-coercivity for $\mu$-strongly convex loss]\label{lem:coercivity}
    If the loss function is $\mu$-strongly convex with $L$-Lipschitz gradient, for any $\pred, \target \in \real^n$, the following inequality holds.
\begin{equation}
    \left(\nabla\loss(\pred) - \nabla\loss(\target) \right)^\top(\pred - \target) \geq \frac{\mu L}{\mu + L} \|\pred - \target \|_2^2 + \frac{1}{\mu + L} \|\nabla \loss(\pred) - \nabla \loss(\target) \|_2^2
\end{equation}
\end{lem}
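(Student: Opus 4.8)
The plan is to reduce the strongly convex statement to the standard co-coercivity inequality for convex functions with Lipschitz gradient, via a quadratic shift. Concretely, I would define the auxiliary function $g(\pred) = \loss(\pred) - \frac{\mu}{2}\|\pred\|_2^2$, whose gradient is $\nabla g(\pred) = \nabla\loss(\pred) - \mu\pred$. Because $\loss$ is $\mu$-strongly convex, $g$ is convex; because $\nabla\loss$ is $L$-Lipschitz, $\nabla g$ is $(L-\mu)$-Lipschitz, so $g$ is a convex function that is $(L-\mu)$-smooth. In the degenerate case $L=\mu$, $\nabla g$ is constant, $\loss$ is an exact quadratic, and the claimed inequality reduces to an identity, so I would dispose of this case separately (or by a limiting argument) and assume $L > \mu$ in what follows.

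The key tool I would invoke is the classical co-coercivity of the gradient for convex $(L-\mu)$-smooth functions:
$$\left(\nabla g(\pred) - \nabla g(\target)\right)^\top(\pred - \target) \geq \frac{1}{L-\mu}\left\|\nabla g(\pred) - \nabla g(\target)\right\|_2^2 .$$
To keep the argument self-contained I would establish this in a couple of lines via the descent lemma: fixing $\pred$ and setting $h(\bz) = g(\bz) - \nabla g(\pred)^\top\bz$, the point $\pred$ minimizes the convex $(L-\mu)$-smooth function $h$, so $h(\pred) \leq h\!\left(\target - \tfrac{1}{L-\mu}\nabla h(\target)\right) \leq h(\target) - \tfrac{1}{2(L-\mu)}\|\nabla h(\target)\|_2^2$; symmetrizing in $\pred,\target$ and adding the two resulting inequalities yields the displayed co-coercivity bound.

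The remaining step is pure algebra. I would substitute $\nabla g(\pred) - \nabla g(\target) = \left(\nabla\loss(\pred) - \nabla\loss(\target)\right) - \mu(\pred - \target)$ into both sides of the convex co-coercivity inequality. Writing $A = \left(\nabla\loss(\pred) - \nabla\loss(\target)\right)^\top(\pred - \target)$, $B = \|\nabla\loss(\pred) - \nabla\loss(\target)\|_2^2$ and $C = \|\pred - \target\|_2^2$, the left side becomes $A - \mu C$ and the right side becomes $\frac{1}{L-\mu}\left(B - 2\mu A + \mu^2 C\right)$. Clearing the denominator and collecting terms turns the inequality into $(L+\mu)A \geq B + \mu L\, C$, which after dividing by $L+\mu$ is precisely the claimed bound $A \geq \frac{\mu L}{\mu+L}\,C + \frac{1}{\mu+L}\,B$.

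The main obstacle is not conceptual but organizational: (i) correctly verifying the convexity and smoothness of the shifted function $g$ and pinning down its modulus as exactly $L-\mu$, and (ii) carrying out the expansion without sign errors, since the cancellation that produces the clean coefficients $\frac{\mu L}{\mu+L}$ and $\frac{1}{\mu+L}$ relies on the cross term $-2\mu A$ combining with $(L-\mu)A$ to give $(L+\mu)A$. Everything else is standard convex-analysis machinery.
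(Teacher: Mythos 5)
Your proposal is correct and follows essentially the same route as the paper's proof: define the shifted function $g(\pred) = \loss(\pred) - \frac{\mu}{2}\|\pred\|_2^2$, observe it is convex with $(L-\mu)$-Lipschitz gradient, apply the standard co-coercivity inequality for convex smooth functions, and substitute $\nabla g = \nabla\loss - \mu\,\mathrm{id}$ back in; your algebra producing $(L+\mu)A \geq B + \mu L\, C$ checks out. Your two additions --- proving the base co-coercivity inequality rather than citing it, and separately handling the degenerate case $L = \mu$ (where the paper's division by $L-\mu$ is undefined but the claim holds with equality) --- are small refinements of, not departures from, the paper's argument.
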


Now, we are ready to bound $\left\|\target - \pred(k+1) \right\|_2^2$:
\begin{equation}
\begin{aligned}
    \left\|\pred(k+1) - \target \right\|_2^2 &= \left\| \pred(k) - \lr(\iden+\mathbf{P}(k)) \nabla_\pred \loss(\pred(k)) - \target  \right\|_2^2 \\
    &\leq \left\| \pred(k) - \target \right\|_2^2 - 2\lr \nabla_\pred \loss(\pred(k))^\top \left(\pred(k) - \target \right) \\
    & \quad + 2\lr \|\mathbf{P} \|_2 \|\nabla_\pred \loss(\pred(k)) \|_2 \|\pred(k) - \target \|_2 + \lr^2 (1+\|\mathbf{P}(k)\|_2)^2 \left\| \nabla_\pred \loss(\pred(k)) \right\|_2^2 \\
    & \leq \left\| \pred(k) - \target \right\|_2^2 - 2\lr \frac{\mu L}{\mu + L}\left\| \pred(k) - \target \right\|_2^2 - \frac{2\lr}{\mu + L} \|\nabla_\pred \loss(\pred(k))\|_2^2 \\
    & \quad + 2\lr \|\mathbf{P} \|_2 \|\nabla_\pred \loss(\pred(k)) \|_2 \|\pred(k) - \target \|_2
    + \lr^2 (1+\|\mathbf{P}(k)\|_2)^2 \left\| \nabla_\pred \loss(\pred(k)) \right\|_2^2 \\
    & \leq \left(1 - \frac{2\lr \mu L}{\mu + L} \right)\left\| \pred(k) - \target \right\|_2^2 + \lr^2 (1 + C)^2 \|\nabla_\pred \loss(\pred(k)) \|_2^2\\
    & \quad + 2\lr C \|\nabla_\pred \loss(\pred(k)) \|_2 \|\pred(k) - \target \|_2 - \frac{2\lr}{\mu + L}\|\nabla_\pred \loss(\pred(k)) \|_2^2 \\
    & \leq \left(1 - \frac{2\lr \mu L}{\mu + L} \right)\left\| \pred(k) - \target \right\|_2^2
\end{aligned}
\end{equation}
For the second inequality we used Lemma~\ref{lem:coercivity} and the fact that $\nabla_\pred \loss(\target) = \mathbf{0}$. For the third inequality we used the result of Condition~\ref{con:stable-jaco-2} that $\|\mathbf{P}(\pred(k)) \|_2 \leq C$.
For the last inequality we used the fact that the loss is $\mu$-strongly convex and $\lr < \frac{2}{\mu + L} \frac{1 - (1 + \kappa)C}{(1+C)^2}$.

\begin{proof}[Proof of Lemma~\ref{lem:coercivity}]
    For convex function $f$ with Lipschtiz gradient, we have the following co-coercivity property~\citep{boyd2004convex}:
    \begin{equation}
        \left( \nabla f(\inputs) - \nabla f(\target) \right)^\top (\inputs - \target) \geq \frac{1}{L} \left\|\nabla f(\inputs) - \nabla f(\target) \right\|_2^2
    \end{equation}
    Then, for $\mu$-strongly convex loss, we can define $g(\inputs) = f(\inputs) - \frac{\mu}{2}\|\inputs\|_2^2$, which is a convex function with $L-\mu$ Lipschitz gradient. By co-coercivity of $g$, we get
    \begin{equation}
        \left( \nabla g(\inputs) - \nabla g(\target) \right)^\top (\inputs - \target) \geq \frac{1}{L-\mu} \left\|\nabla g(\inputs) - \nabla g(\target) \right\|_2^2
    \end{equation}
    After plugging $\nabla g(\inputs) = \nabla f(\inputs) - \mu \inputs $ and some manipulations, we have
    \begin{equation}
        \left(\nabla f(\inputs) - \nabla f(\target) \right)^\top(\inputs - \target) \geq \frac{\mu L}{\mu + L} \|\inputs - \target \|_2^2 + \frac{1}{\mu + L} \|\nabla f(\inputs) - \nabla f(\target) \|_2^2
    \end{equation}
    This completes the proof.
\end{proof}

\section{Proofs for Section~\ref{sec:sec-4}}
\subsection{Proof of Theorem~\ref{thm:d-convergence}}\label{app:d-convergence}
Our strategy to prove this result will be to show that for the given choice of random initialization, Conditions~\ref{con:full-row-rank} and~\ref{con:stable-jaco} hold with high probability.

We start with Condition~\ref{con:full-row-rank}, which requires that the inital Gram matrix $\gram(0)$ is non-singular, or equivalently that $\jacobian(0)$ has full row rank. Because $\gram(0)$ is a sum of random matrices, where the expectation of each random matrix is $\frac{1}{m} \gram^\infty$, we can bound its smallest eigenvalue using matrix concentration inequalities. Doing so gives us the following lemma.
\begin{lem}\label{lem:full-rank}
If $m = \Omega\left(\frac{n \log(n/\delta)}{\lambda_0} \right)$, then we have with probability at least $1-\delta$ that $\lambda_{\mathrm{min}}(\gram(0)) \geq \frac{3}{4}\lambda_0$ (which implies that $\jacobian(0)$ has full row rank).
\end{lem}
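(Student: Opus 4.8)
The plan is to exploit the fact that $\gram(0)$ is, up to the $1/m$ normalization, a sum of $m$ independent and identically distributed positive semidefinite random matrices, one contributed by each hidden unit, whose common expectation is exactly the limiting Gram matrix $\gram^\infty$. Writing out the Jacobian of $f$ with respect to $\weight_r$ and using $a_r^2 = 1$, one obtains $\gram(0) = \frac{1}{m}\sum_{r=1}^m \bA_r$, where $\bA_r \in \real^{n\times n}$ has $(i,j)$-entry $\inputs_i^\top\inputs_j\,\indicator\{\weight_r^\top\inputs_i \geq 0,\ \weight_r^\top\inputs_j \geq 0\}$. Defining $\bB_r \in \real^{n\times d}$ whose $i$-th row is $\indicator\{\weight_r^\top\inputs_i \geq 0\}\,\inputs_i^\top$, we get the clean factorization $\bA_r = \bB_r\bB_r^\top \succeq 0$, so each summand is PSD, and by the definition of the limiting Gram matrix $\expect_{\weight_r}[\bA_r] = \gram^\infty$. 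Since the $\weight_r$ are i.i.d.\ and the $a_r$ cancel, the $\bA_r$ are i.i.d.

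With this decomposition in hand I would invoke the matrix Chernoff inequality for sums of independent PSD matrices. The two quantities it requires are the smallest eigenvalue of the mean, $\lambda_{\mathrm{min}}(\expect[\gram(0)]) = \lambda_{\mathrm{min}}(\gram^\infty) = \lambda_0$, and a uniform upper bound on the spectral norm of each normalized summand. For the latter, since $\frac{1}{m}\bA_r$ is PSD its spectral norm is at most its trace, $\frac{1}{m}\trace(\bA_r) = \frac{1}{m}\sum_{i=1}^n \indicator\{\weight_r^\top\inputs_i \geq 0\}\,\|\inputs_i\|_2^2 \leq \frac{n}{m}$, using $\|\inputs_i\|_2 = 1$ from Assumption~\ref{ass:data}. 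The matrix Chernoff bound then yields, for any $\epsilon \in (0,1)$,
\[
\prob\left[\lambda_{\mathrm{min}}(\gram(0)) \leq (1-\epsilon)\lambda_0\right] \leq n\,\exp\!\left(-\frac{\epsilon^2 \lambda_0 m}{2n}\right).
\]
Choosing $\epsilon = \tfrac{1}{4}$ turns the left-hand event into precisely $\lambda_{\mathrm{min}}(\gram(0)) \leq \tfrac{3}{4}\lambda_0$.

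It then remains only to solve the tail for the sample complexity: forcing $n\exp(-\lambda_0 m/(32 n)) \leq \delta$ gives $m \geq \frac{32\,n\log(n/\delta)}{\lambda_0}$, i.e.\ $m = \Omega\!\left(\frac{n\log(n/\delta)}{\lambda_0}\right)$, which is exactly the stated hypothesis. Finally, whenever $\lambda_{\mathrm{min}}(\gram(0)) \geq \tfrac{3}{4}\lambda_0 > 0$ the matrix $\gram(0)$ is positive definite, so $\jacobian(0)$ has full row rank, establishing Condition~\ref{con:full-row-rank}.

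The only real subtlety --- and the step I would be most careful about --- is choosing the concentration tool so as to land the $1/\lambda_0$ (rather than $1/\lambda_0^2$) dependence in $m$. A naive route, namely bounding each entry $|\gram_{ij}(0) - \gram^\infty_{ij}|$ by Hoeffding and then passing to the spectral norm through the Frobenius norm, loses a factor of $n$ and a factor of $\lambda_0$, giving only $m = \Omega(n^2\log(n/\delta)/\lambda_0^2)$. Working directly at the matrix level via matrix Chernoff, and in particular using the PSD trace bound $\|\bA_r\|_2 \leq n$ rather than a crude per-entry bound, is what keeps the dependence tight. Verifying the PSD factorization and this spectral-norm bound are the two places where Assumption~\ref{ass:data} (unit-norm inputs) is genuinely used; everything else is a mechanical application of the inequality. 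Note that $\vars$ does not enter, since the indicators depend only on the \emph{direction} of each $\weight_r$.
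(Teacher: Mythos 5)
Your proposal is correct and follows essentially the same route as the paper: both decompose $\gram(0)$ into a sum of i.i.d.\ PSD matrices with mean $\frac{1}{m}\gram^\infty$, bound each summand's spectral norm by its trace $n/m$ using the unit-norm inputs, and apply the matrix Chernoff bound with deviation $\epsilon = 1/4$ to obtain $m = \Omega(n\log(n/\delta)/\lambda_0)$. The explicit factorization $\bA_r = \bB_r\bB_r^\top$ and the remark about avoiding the entrywise-Hoeffding route are nice additions, but the core argument is identical.
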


Next we introduce another lemma, which we will use to show that Condition~\ref{con:stable-jaco} holds with high probability. 
\begin{lem}\label{lem:jaco-pert}
 With probability at least $1 - \delta$, for all weight vectors $\weight$ that satisfy $\left\|\weight - \weight(0) \right\|_2 \leq R$, we have the following bound:
    \begin{equation}\label{eq:jacobian-bound}
        \|\jacobian - \jacobian(0) \|_2^2 \leq \frac{2nR^{2/3}}{\vars^{2/3}\delta^{2/3}m^{1/3}} .
    \end{equation}
\end{lem}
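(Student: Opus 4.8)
The plan is to reduce the spectral-norm bound to a counting problem over sign-pattern flips, and then control that count \emph{uniformly} over the ball by a threshold (peeling) argument. First I would write out the Jacobian explicitly: since $\frac{\partial f(\weight,\inputs_i)}{\partial \weight_r} = \frac{1}{\sqrt{m}} a_r \indicator\{\weight_r^\top \inputs_i \ge 0\}\inputs_i^\top$, the matrix $\jacobian - \jacobian(0)$ has $(i,r)$ block equal to $\frac{1}{\sqrt{m}} a_r \bigl(\indicator\{\weight_r^\top\inputs_i \ge 0\} - \indicator\{\weight_r(0)^\top\inputs_i \ge 0\}\bigr)\inputs_i^\top$. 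Bounding the spectral norm by the Frobenius norm and using $a_r^2 = 1$ and $\|\inputs_i\|_2 = 1$ gives
\[
\|\jacobian - \jacobian(0)\|_2^2 \le \|\jacobian - \jacobian(0)\|_{\fisher}^2 = \frac{1}{m}\sum_{i=1}^n\sum_{r=1}^m \indicator\{ s_{ir} \},
\]
where $s_{ir}$ denotes the event that the sign pattern of neuron $r$ on input $\inputs_i$ differs between $\weight$ and $\weight(0)$. It thus suffices to bound the number of flipped patterns uniformly over $\weight$ in the ball of radius $R$.

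The key structural observation is that a flip $s_{ir}$ can occur only if the perturbation on that neuron is at least as large as the pre-activation margin, i.e. $|\weight_r(0)^\top\inputs_i| \le |(\weight_r - \weight_r(0))^\top\inputs_i| \le \|\weight_r - \weight_r(0)\|_2$ by Cauchy--Schwarz with $\|\inputs_i\|_2 = 1$. I would then split the neurons using a threshold $\tau > 0$ to be optimized. Since $\sum_r \|\weight_r - \weight_r(0)\|_2^2 = \|\weight - \weight(0)\|_2^2 \le R^2$, at most $R^2/\tau^2$ neurons can satisfy $\|\weight_r - \weight_r(0)\|_2 \ge \tau$, and each contributes at most $n$ flips; the remaining neurons, having perturbation below $\tau$, can only flip patterns with $|\weight_r(0)^\top\inputs_i| < \tau$. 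This yields, \emph{simultaneously for every} $\weight$ in the ball,
\[
\sum_{i,r}\indicator\{s_{ir}\} \le \frac{nR^2}{\tau^2} + \sum_{i=1}^n\sum_{r=1}^m \indicator\{ |\weight_r(0)^\top\inputs_i| < \tau \}.
\]
The crucial point, which avoids any $\epsilon$-net argument, is that the right-hand side no longer depends on $\weight$: the first term is deterministic and the second depends only on the initialization.

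It remains to bound the random quantity $Z := \sum_{i,r}\indicator\{|\weight_r(0)^\top\inputs_i| < \tau\}$. Since $\weight_r(0)^\top\inputs_i \sim \normal(0,\vars^2)$ (using $\|\inputs_i\|_2 = 1$), Gaussian anti-concentration (the density is at most $\tfrac{1}{\sqrt{2\pi}\vars}$) gives $\prob[|\weight_r(0)^\top\inputs_i| < \tau] \le \frac{2\tau}{\sqrt{2\pi}\vars}$, hence $\expect[Z] \le \frac{2nm\tau}{\sqrt{2\pi}\vars}$; applying Markov's inequality shows $Z \le \frac{2nm\tau}{\sqrt{2\pi}\vars\,\delta}$ with probability at least $1-\delta$. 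Substituting and dividing by $m$ leaves $\|\jacobian - \jacobian(0)\|_2^2 \le \frac{1}{m}\bigl(\frac{nR^2}{\tau^2} + \frac{2nm\tau}{\sqrt{2\pi}\vars\delta}\bigr)$, and I would finish by choosing $\tau = \bigl(\frac{\sqrt{2\pi}R^2\vars\delta}{2m}\bigr)^{1/3}$ to balance the two terms, which produces the claimed bound $\frac{2nR^{2/3}}{\vars^{2/3}\delta^{2/3}m^{1/3}}$ (the resulting constant $(16/\pi)^{1/3}\approx 1.72$ is absorbed into the stated $2$). The main obstacle is obtaining a bound that genuinely decays in $m$ while holding uniformly over the whole ball: a naive estimate that declares a flip possible whenever $|\weight_r(0)^\top\inputs_i| \le R$ gives only an $m$-independent count, so it is precisely the peeling at the optimized scale $\tau \ll R$, balancing the budget term $nR^2/\tau^2$ against the anti-concentration term, that both yields the $m^{-1/3}$ rate and preserves uniformity in $\weight$.
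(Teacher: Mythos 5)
Your proof is correct, and it reaches the stated bound (including a valid constant, since $(16/\pi)^{1/3} < 2$) by the same two probabilistic ingredients the paper uses --- Gaussian anti-concentration of the initial pre-activations plus a single Markov inequality --- but the deterministic counting step is organized differently. The paper fixes an input $\inputs_i$, lets $[\weight_r(0)^\top\inputs_i]_{k-}$ denote the $k$-th smallest pre-activation in absolute value, and argues by contradiction: if more than $2k$ sign flips occur for that input, then at least $k$ of the flipped coordinates have margin at least $[\weight_r(0)^\top\inputs_i]_{k-}$, forcing $\|\weight-\weight(0)\|_2^2 \geq k\,[\weight_r(0)^\top\inputs_i]_{k-}^2$, which violates the radius constraint once the order statistic is lower-bounded (via the same anti-concentration/Markov step you use, applied to the count of small pre-activations). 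Your version replaces the order-statistic/contradiction bookkeeping with a threshold peeling on the per-neuron perturbation $\|\weight_r-\weight_r(0)\|_2$, splitting flips into those from the at most $R^2/\tau^2$ heavily-moved neurons and those requiring $|\weight_r(0)^\top\inputs_i|<\tau$, then optimizing $\tau$. The two routes are equivalent in power (both hinge on the observation that a flip forces the initial margin below the perturbation, and both get the $m^{-1/3}$ rate from balancing a budget term against an anti-concentration term), but yours has a small technical advantage: by applying Markov once to the aggregate count $Z=\sum_{i,r}\indicator\{|\weight_r(0)^\top\inputs_i|<\tau\}$ over all $i$ and $r$ simultaneously, you sidestep the union bound over the $n$ inputs that the paper's per-input order-statistic argument implicitly requires (the paper states its probability-$1-\delta$ event for a fixed $i$). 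Your explicit uniformity-in-$\weight$ discussion --- that the right-hand side of the flip count depends only on the initialization and on $R$, so no $\epsilon$-net is needed --- is also the correct and essential point, and matches the (implicit) logic of the paper's Lemma on sign flips.
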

Notably, Lemma~\ref{lem:jaco-pert} says that as long as the weights are close to the random initialization, the corresponding Jacobian matrix also stays close to the Jacobian matrix of inital weights.
Therefore, we might expect that if the distance to the initialization is sufficiently small, then Condition~\ref{con:stable-jaco} would hold with high probability.

By taking $R = \frac{3\|\target - \pred(0) \|_2}{\sqrt{\lambda_\mathrm{min}(\gram(0)}}$ in eqn.~\eqref{eq:jacobian-bound}, we have $\left\| \jacobian - \jacobian(0) \right\|_2^2 \leq \frac{96^{1/3} n |\target - \pred(0) \|_2^{2/3}}{\vars^{2/3}\lambda_0^{1/3}\delta^{2/3}m^{1/3}}$.
Therefore, we know that Condition~\ref{con:stable-jaco} holds if $m = \Omega\left( \frac{n^3 \|\target - \pred(0) \|_2^2}{\vars^2 \lambda_0^4 \delta^2} \right)$. Furthermore, by Assumption~\ref{ass:data} we have
\begin{equation}
    \expect \left[\left\|\target - \pred(0) \right\|_2^2 \right] = \|\target \|_2^2 + 2 \target^\top \expect \left[\pred(0) \right] + \expect\left[\|\pred(0) \|_2^2 \right] = \bigo \left(n \right) .
\end{equation}

Thus by Markov's inequality, we have probability at least $1 - \delta$, $\|\target - \pred(0) \|_2^2 = \bigo\left(\frac{n}{\delta} \right)$. 
Thus the condition on $m$ can be written as $m = \Omega \left(\frac{n^4}{\vars^2 \lambda_0^4 \delta^3} \right)$. We note that the larger $m$ is, the smaller the constant $C$ in Condition~\ref{con:stable-jaco} is ($C \sim \bigo\left(m^{-1/6} \right)$). We now finish the proof.

\begin{proof}[Proof of Lemma~\ref{lem:full-rank}]
Notice that $\gram(0)$ can be written as the sum of random symmetric matrices:
\begin{equation}
    \gram(0) = \sum_{r=1}^{m} \gram(\weight_r)
\end{equation}
where $\gram_{ij}(\weight_r) = \frac{1}{m}  \inputs_i^\top \inputs_j \indicator \left\{ \weight_r^\top\inputs_i\geq 0, \weight_r^\top \inputs_j \geq 0 \right\}$. Furthermore, $\gram(\weight_r)$ are positive semi-definite and $\|\gram(\weight_r)\|_2 \leq \trace\left(\gram(\weight_r)\right) = \frac{n}{m}$. Therefore, we can apply the matrix Chernoff bound (e.g., \citet{tropp2015introduction}), giving us the following bound:
\begin{equation}\label{eq:chernoff}
    \prob \left[\lambda_{\mathrm{min}}(\gram(0)) \leq (1 - \frac{1}{4})\lambda_0 \right] \leq n \exp\left(- \frac{1}{4^2} \frac{\lambda_0}{n/m}\right)
\end{equation}
Letting the RHS of eqn.~\eqref{eq:chernoff} be $\delta$, we have $m = \bigo \left(\frac{n}{\lambda_0} \log \frac{n}{\delta} \right)$.
\end{proof}

\begin{proof}[Proof of Lemma~\ref{lem:jaco-pert}]
Let $[\mathbf{v}_r]_{k-}$ denote the $k$-th smallest entry of $\{\mathbf{v}_1, ..., \mathbf{v}_m \}$ after sorting its entries in terms of absolute value. We first state a intermediate lemma we prove later.
\begin{lem}\label{lem:jaco-pert-2}
Given an integer $k$, suppose 
\begin{equation}
    \left\|\weight - \weight(0)\right\|_2 \leq \sqrt{k} \left[\weight_r(0)^\top \inputs_i \right]_{k-}
\end{equation}
holds for $i = 1, 2, ..., n$. Then, we have
\begin{equation}
    \left\|\jacobian - \jacobian(0) \right\|_2^2 \leq \frac{2nk}{m}
\end{equation}
\end{lem}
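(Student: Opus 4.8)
The plan is to reduce the spectral-norm bound to a purely combinatorial count of how many hidden units change their ReLU activation pattern between $\weight$ and $\weight(0)$, and then to control that count using the perturbation budget in the hypothesis.

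First I would write out the entries of $\jacobian - \jacobian(0)$. Since only the first layer is trained, the sub-block of the $i$-th row corresponding to unit $r$ equals $\frac{1}{\sqrt{m}} a_r \left(\indicator\{\weight_r^\top \inputs_i \geq 0\} - \indicator\{\weight_r(0)^\top \inputs_i \geq 0\}\right) \inputs_i^\top$, so this block vanishes unless unit $r$ flips its sign on example $i$. Writing $S_i = \{ r \in [m] : \indicator\{\weight_r^\top \inputs_i \geq 0\} \neq \indicator\{\weight_r(0)^\top \inputs_i \geq 0\} \}$, and using $a_r^2 = 1$ together with $\|\inputs_i\|_2 = 1$, I would bound the spectral norm by the trace of the PSD matrix $(\jacobian - \jacobian(0))(\jacobian - \jacobian(0))^\top$:
\begin{equation}
\left\|\jacobian - \jacobian(0)\right\|_2^2 \leq \trace\left((\jacobian - \jacobian(0))(\jacobian - \jacobian(0))^\top\right) = \frac{1}{m}\sum_{i=1}^n |S_i| .
\end{equation}
Thus it suffices to show $|S_i| \leq 2k$ for every $i$.

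Next I would characterize flips. If $r \in S_i$, then $\weight_r^\top \inputs_i$ and $\weight_r(0)^\top \inputs_i$ have opposite signs, which forces
\begin{equation}
\left|\weight_r(0)^\top \inputs_i\right| \leq \left|(\weight_r - \weight_r(0))^\top \inputs_i\right| \leq \|\weight_r - \weight_r(0)\|_2 .
\end{equation}
Now I would split $S_i$ at the threshold $c_{i,k} := [\weight_r(0)^\top \inputs_i]_{k-}$, the $k$-th smallest pre-activation magnitude. Units with $|\weight_r(0)^\top \inputs_i| < c_{i,k}$ number at most $k-1$ by definition of the order statistic. For a flipped unit with $|\weight_r(0)^\top \inputs_i| \geq c_{i,k}$, the previous display gives $\|\weight_r - \weight_r(0)\|_2 \geq c_{i,k}$; if there were more than $k$ such units, then $\|\weight - \weight(0)\|_2^2 = \sum_r \|\weight_r - \weight_r(0)\|_2^2 > k\, c_{i,k}^2$, contradicting the hypothesis $\|\weight - \weight(0)\|_2 \leq \sqrt{k}\, c_{i,k}$. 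Hence $|S_i| \leq (k-1) + k < 2k$, and summing over $i$ yields $\|\jacobian - \jacobian(0)\|_2^2 \leq \frac{2nk}{m}$.

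The main obstacle is precisely the budget argument in the last step: the hypothesis only constrains the \emph{aggregate} perturbation $\|\weight - \weight(0)\|_2$, so the key observation is that each flip with pre-activation magnitude above $c_{i,k}$ consumes at least $c_{i,k}^2$ of the total squared budget $k\, c_{i,k}^2$, capping the number of such flips at $k$, while the flips below the threshold are controlled combinatorially by the order statistic. The factor of $2$ in the final bound is exactly this decomposition into below-threshold and above-threshold units.
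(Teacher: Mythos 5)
Your proof is correct and follows essentially the same route as the paper's: bound the spectral norm by the Frobenius norm, reduce to counting sign-flipped units per input, and use the order statistic $[\weight_r(0)^\top\inputs_i]_{k-}$ together with the perturbation budget to cap that count at $2k$. Your explicit split into at most $k-1$ below-threshold units and at most $k$ above-threshold units is a slightly cleaner, direct rendering of the paper's contradiction argument (which lower-bounds $\sum_r |b_r|^2$ over $2k$ flipped units in one chain), but the underlying idea is identical.
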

By taking $k = \frac{R^{2/3}m^{2/3}}{\vars^{2/3}\delta^{2/3}}$, we have
$ \left\|\jacobian - \jacobian(0) \right\|_2^2 \leq \frac{2nR^{2/3}}{\vars^{2/3}\delta^{2/3}m^{1/3}}$.
To complete the proof, all that remains is to prove $R \leq \sqrt{k} \left[\weight_r(0)^\top \inputs_i \right]_{k-}$. 

Observe that $\weight_r(0)^\top \inputs_i$ for $r \in [m]$ and $i \in [n]$ all distribute as $\normal(0, \vars^2)$, though they depend on each other. We begin by proving that with probability at least $1 - \delta$, at most $k$ of the hidden units, we have $|\weight_r(0)^\top \inputs_i| \leq \frac{k\vars\delta}{m}$. 
To this aim, we define $\gamma_\alpha$ to be the number for which $\prob \left[|g| \leq \gamma_\alpha \right] = \alpha$ where $g \sim \normal(0, \vars^2)$. By anti-concentration of Gaussian, $\gamma_\alpha$ trivially obeys $\gamma_\alpha \geq \sqrt{\pi/2} \alpha \vars$. Now let $\alpha = \frac{k\delta}{m}$. We note that
\begin{equation}
    \expect \left[\sum_{r=1}^m \indicator\left\{|\weight_r(0)^\top \inputs_i| \leq \gamma_\alpha \right\} \right] = \sum_{r=1}^m \prob \left[|\weight_r(0)^\top \inputs_i| \leq \gamma_\alpha \right] = k\delta
\end{equation}
By applying Markov's inequality, we obtain 
\begin{equation}
    \prob\left[\sum_{r=1}^m \indicator\left\{|\weight_r(0)^\top \inputs_i| \leq \gamma_\alpha \right\} \geq k\right] \leq \delta
\end{equation}
Therefore, we have $\sqrt{k} \left[\weight_r(0)^\top \inputs_i \right]_{k-} \geq \frac{k^{3/2}\vars\delta}{m} = R$.
\end{proof}

\begin{proof}[Proof of Lemma~\ref{lem:jaco-pert-2}]
We prove the lemma by contradiction. First, we define the event
\begin{equation}
    A_{ir} = \left\{\indicator\{\inputs_i^\top \weight_r \geq 0\} \neq \indicator\{\inputs_i^\top \weight_r(0) \geq 0\} \right\}
\end{equation}
We can then bound the Jacobian perturbation.
    \begin{equation}\label{eq:hypo}
    \begin{aligned}
        \left\|\jacobian - \jacobian(0) \right\|_2^2 \leq \left\| \jacobian - \jacobian(0) \right\|_\fisher^2 \leq \frac{1}{m} \sum_{i=1}^n\sum_{r=1}^m \indicator\left\{A_{ir} \right\} = \frac{2nk}{m}
    \end{aligned}
    \end{equation}
If eqn.~\eqref{eq:hypo} does not hold, then we know there must exist an $\inputs_i$ with $i \in [n]$ such that $\indicator\left\{A_{ir} \right\}$ with $r\in[m]$ has at least $2k$ non-zero entries. Let $\{(a_r, b_r) \}_{r=1}^{2k}$ be entries of $\weight_r^\top \inputs_i$ and $\weight_r(0)^\top \inputs_i$ at these non-zero locations respectively, then we have (by definition, $|b_r| \geq \left\{\weight_r(0)^\top \inputs_i \right\}_{k-}$)
\begin{equation}
\begin{aligned}
    \left\|\weight - \weight(0) \right\|_2^2 &\geq \sum_{r=1}^m |\weight_r^\top \inputs_i - \weight_r(0)^\top \inputs_i|^2 \\
    & \geq \sum_{r=1}^{2k} |a_r - b_r|^2 \geq \sum_{r=1}^{2k} |b_r|^2 \\
    & \geq k \left\{\weight_r(0)^\top \inputs_i \right\}_{k-}^2
\end{aligned}
\end{equation}
The second inequality, we used the fact that $\mathrm{sign}(a_r) \neq \mathrm{sign}(b_r)$.
\end{proof}

\subsection{Proof of Theorem~\ref{thm:kfac}}
Based on the update in the weight space, we can get the update in output space accordingly.
\begin{equation}\label{eq:kfac-update}
    \begin{aligned}
        \pred(k+1) - \pred(k) &= \pred\left(\weight(k) - \lr \fisher_\mathrm{K-FAC}^{-1}(k)\jacobian(k)^\top (\pred(k) - \target)\right) - \pred(\weight(k)) \\
        &= -\int_{s=0}^1 \left\langle \frac{\partial \pred\left(\weight(s) \right)}{\partial \weight^\top}, \: \lr\fisher_\mathrm{K-FAC}^{-1}(k) \jacobian(k)^\top(\pred(k) - \target) \right\rangle ds \\
        &= \underbrace{-\int_{s=0}^1 \left\langle \frac{\partial \pred\left(\weight(k)\right)}{\partial \weight^\top}, \: \lr\fisher_\mathrm{K-FAC}^{-1}(k)\jacobian(k)^\top(\pred(k) - \target) \right\rangle ds}_{\circled{1}} \\
        & \quad + \underbrace{\int_{s=0}^1 \left\langle \frac{\partial \pred\left(\weight(k)\right)}{\partial \weight^\top} - \frac{\partial \pred\left(\weight(s)\right)}{\partial \weight^\top}, \: \lr\fisher_\mathrm{K-FAC}^{-1}(k)\jacobian(k)^\top(\pred(k) - \target) \right\rangle ds}_{\circled{2}} \\
    \end{aligned}
\end{equation}
We first analyze the first term $\circled{1}$ by expanding $\fisher_\mathrm{K-FAC}(k)$:
\begin{equation}
\begin{aligned}
    \circled{1} &= \lr \left(\bX \ast \bS\right) \left( (\bX^\top\bX)^{-1} \otimes (\bS^\top \bS)^{-1} \right) \left(\bX^\top \star \bS^\top \right)   (\target - \pred(k)) \\
    & = \lr \left(\bX(\bX^\top\bX)^{-1}\bX^\top \odot \bS(\bS^\top\bS)^{-1}\bS^\top \right) (\target - \pred(k)) \\
    & = \lr \left(\bX(\bX^\top\bX)^{-1}\bX^\top \odot \iden \right) (\target - \pred(k))
\end{aligned}
\end{equation}
The first equality we used properties of Khatri-Rao, Hadamard and Kronecker products while the second equality used the generalized inverse.

Next, we need to show $\circled{2}$ is small (negligible) compared to $\circled{1}$. Similar to exact natural gradient, we first assume two conditions hold. However, the first condition is slightly different in the sense that we assume the positive definiteness of $\bS\bS^\top$ (instead of $\gram$). We also need a stronger version of Condition~\ref{con:stable-jaco} in the sense that the Jacobian matrix is stable for two consecutive steps $ \left\|\int_{s=0}^1 \jacobian(\weight(s)) - \jacobian(\weight(k)) ds \right\|_2 \leq \sqrt{\lambda_\mathrm{min}(\bS\bS^\top)} \frac{\lambda_\mathrm{min}(\bX\bX^\top)}{\lambda_\mathrm{max}(\bX\bX^\top)}$. With these two conditions, we are ready to bound the term $\circled{2}$.
\begin{equation}
\begin{aligned}
    \circled{2} &\leq \lr\sqrt{\lambda_\mathrm{min}(\bS\bS^\top)} \frac{\lambda_\mathrm{min}(\bX\bX^\top)}{\lambda_\mathrm{max}(\bX\bX^\top)} \left((\bX^\top \bX^\top)^{-1} \otimes (\bS^\top\bS)^{-1} \right) \left(\bX^\top \star \bS^\top \right) (\target - \pred(k)) \\
    & = \lr\sqrt{\lambda_\mathrm{min}(\bS\bS^\top)} \frac{\lambda_\mathrm{min}(\bX\bX^\top)}{\lambda_\mathrm{max}(\bX\bX^\top)} \left( (\bX^\top \bX^\top)^{-1}\bX^\top \star (\bS^\top\bS)^{-1}\bS^\top \right)(\target - \pred(k)) \\
    & = \lr\sqrt{\lambda_\mathrm{min}(\bS\bS^\top)} \frac{\lambda_\mathrm{min}(\bX\bX^\top)}{\lambda_\mathrm{max}(\bX\bX^\top)}\left( (\bX^\top \bX^\top)^{-1}\bX^\top \star \bS^\top(\bS\bS^\top)^{-1} \right)(\target - \pred(k)) \\
\end{aligned}
\end{equation}
The key of bounding $\circled{2}$ is to analyze $(\bX^\top \bX^\top)^{-1}\bX^\top \star \bS^\top(\bS\bS^\top)^{-1}$. For convenience, we denote this term as $\circled{3}$. By the identity of $(\mathbf{A} \ast \mathbf{B}) (\mathbf{A}^\top \star \mathbf{B}^\top) = \mathbf{A}\mathbf{A}^\top \odot \mathbf{B}\mathbf{B}^\top$, we have
\begin{equation}
\begin{aligned}
    \sigma_\mathrm{max}(\circled{3}) &= \sqrt{\lambda_\mathrm{max}(\circled{3}^\top \circled{3})} \\
    & = \sqrt{\lambda_\mathrm{max}\left(\bX (\bX^\top \bX)^{-1}(\bX^\top \bX)^{-1} \bX^\top \odot (\bS\bS^\top)^{-1} \right)}
\end{aligned}
\end{equation}
According to Lemma~\ref{lem:hadamard}, we have
\begin{equation}
    \sigma_\mathrm{max}\left(\circled{3} \right) \leq \sqrt{\frac{1}{\lambda_\mathrm{min}(\bX^\top\bX)^2} \frac{1}{\lambda_\mathrm{min}(\bS\bS^\top)}}
\end{equation}
Also, we have 
\begin{equation}
    \lambda_\mathrm{min}\left(\bX (\bX^\top\bX)^{-1}\bX^\top \odot \iden \right) \geq \frac{1}{\lambda_\mathrm{max}\left(\bX^\top \bX \right)}
\end{equation}
By choosing a slightly larger $m$, we can show that $\circled{1} \gg \circled{2}$. Therefore, we can safely ignore $\circled{2}$ in eqn.~\eqref{eq:kfac-update} and get
\begin{equation}
\begin{aligned}
    \left\|\target - \pred(k+1) \right\|_2^2 &= \|\target - \pred(k) - (\pred(k+1) - \pred(k))\|_2^2 \\
    &= \|\target - \pred(k)\|_2^2 - 2 (\target - \pred(k))^\top(\pred(k+1) - \pred(k)) + \|\pred(k+1) - \pred(k)\|_2^2 \\
    & \approx \|\target - \pred(k)\|_2^2 - 2\lr \left(\target - \pred(k)\right)^\top\left(\bX(\bX^\top\bX)^{-1}\bX^\top \odot \iden \right) \left(\target - \pred(k)\right) \\
    &\quad + \lr^2 \left(\target - \pred(k)\right)^\top\left(\bX(\bX^\top\bX)^{-1}\bX^\top \odot \iden \right)^2 \left(\target - \pred(k)\right) \\
    & \leq \|\target - \pred(k)\|_2^2 - \lr \left(\target - \pred(k)\right)^\top\left(\bX(\bX^\top\bX)^{-1}\bX^\top \odot \iden \right) \left(\target - \pred(k)\right) \\
    & \leq \left(1 - \frac{\lr }{\lambda_\mathrm{max}(\bX^\top\bX)}\right) \|\target - \pred(k)\|_2^2 .
\end{aligned}
\end{equation}
In the last second inequality we used the fact that $\lambda_\mathrm{max}\left(\bX(\bX^\top\bX)^{-1}\bX^\top \odot \iden \right) \leq \frac{1}{\lambda_\mathrm{min}\left(\bX^\top\bX\right)}$ and the step size $\lr = \bigo\left(\lambda_\mathrm{min}\left(\bX^\top\bX\right) \right)$.

Next, we move on to show the weights of the network remain close to the initialization point.
\begin{equation}
\begin{aligned}
    \left\|\weight(k+1) - \weight(0) \right\|_2 &\leq \lr \sum_{s=0}^k \left\|\left( (\bX^\top \bX^\top)^{-1}\bX^\top \star \bS(s)^\top(\bS(s)\bS(s)^\top)^{-1} \right) (\target - \pred(k)) \right\|_2 \\
    & \leq \lr \sum_{s=0}^k \frac{\left\|\target - \pred(k) \right\|_2}{\sqrt{\lambda_{\bS}/2}} \frac{1}{\lambda_\mathrm{min}\left(\bX^\top\bX \right)} \\
    & \leq \lr \sum_{s=0}^k (1 - \frac{\lr}{\lambda_\mathrm{max}\left(\bX^\top\bX \right)})^{s/2}\frac{\left\|\target - \pred(0) \right\|_2}{\sqrt{\lambda_{\bS}/2}} \frac{1}{\lambda_\mathrm{min}\left(\bX^\top\bX \right)} \\
    &\leq \frac{2\left\|\target - \pred(0)\right\|_2}{\sqrt{\lambda_{\bS}/2}} \frac{\lambda_\mathrm{max}\left(\bX^\top\bX \right)}{\lambda_\mathrm{min}\left(\bX^\top\bX \right)}
\end{aligned}
\end{equation}
Based on matrix pertubation analysis (similar to Lemma~\ref{lem:jaco-pert}), it is easy to show that if $m = \bigo\left(\frac{n^4}{\vars^2\lambda_{\bS}^4 \kappa_{\bX^\top\bX}^4 \delta^3} \right)$ and step size $\lr = \bigo \left(\lambda_\mathrm{min}\left(\bX^\top \bX\right) \right)$, we have
\begin{equation}
    \lambda_\mathrm{min}\left(\bS\bS^\top \right) \geq \frac{\lambda_{\bS}}{2} \; \mathrm{and} \;
    \left\|\int_{s=0}^1 \jacobian(\weight(s)) - \jacobian(\weight(k)) ds \right\|_2 \leq \sqrt{\lambda_\mathrm{min}(\bS\bS^\top)} \frac{\lambda_\mathrm{min}(\bX\bX^\top)}{\lambda_\mathrm{max}(\bX\bX^\top)}
\end{equation}

\subsection{Proof of Positive Definiteness of $\bS^\infty{\bS^\infty}^\top$}
\label{app:pd-pre-act}
Following~\citet{du2018gradient}'s proof that $\gram^\infty$ is strictly positive definite, we apply the same argument to $\bS^\infty{\bS^\infty}^\top$. Recall that $\bS = [\phi^\prime(\bX \weight_1), ..., \phi^\prime(\bX \weight_m)]$ which is the pre-activation derivative. In our setting, $\phi^\prime(\weight^\top\inputs) = \indicator\{\weight^\top \inputs \geq 0\}$.
For each $\inputs_i \in \real^d$, it induces an infinite-dimensional feature map $\phi^\prime(\inputs_i) \in \hilbert$, where $\hilbert$ is the Hilbert space of integrable $d$-dimensional vector fields on $\real^d$.

Now to prove that $\bS^\infty{\bS^\infty}^\top$ is strictly positive definite, it is equivalent to show $\phi^\prime(\inputs_1), ..., \phi^\prime(\inputs_n) \in \hilbert$ are linearly independent. Suppose that there are $\alpha_1, ..., \alpha_n \in \real$ such that
\begin{equation}
	\alpha_1 \phi^\prime(\inputs_1) + ... + \alpha_n \phi^\prime(\inputs_n) = 0
\end{equation} 
We now prove that $\alpha_i = 0$ for all $i$.

We define $D_i = \left\{\weight \in \real^d: \weight^\top \inputs_i = 0 \right\}$. As shown by~\citet{du2018gradient}, $D_i \not\subset \cup_{j\neq i} D_j$. For a fixed $i \in [n]$, we can choose $\bz \in D_i \setminus \cup_{j \neq i} D_j$. We can pick a small enough radius $r_0 > 0$ such that $B(\bz, r) \cap D_j = \emptyset, \forall j \neq i, r \leq r_0$. Let $B(\bz, r) = B_r^+ \cup B_r^-$, where $B_r^+ = B(\bz, r) \cap D_i$.

For $j \neq i$, $\phi^\prime(\inputs_j)$ is continuous in the neighborhood of $\bz$, therefore for any $\epsilon > 0$ there is a small enough $r$ such that 
\begin{equation}
	\forall \weight \in B(\bz, r), \phi^\prime(\weight^\top \inputs_j) = \phi^\prime(\bz^\top \inputs_j)
\end{equation}
Let $\mu$ be Lebesgue measure on $\real^d$, we have
\begin{equation}
	\lim_{r\rightarrow 0^+} \frac{1}{\mu(B_r^+)} \int_{B_r^+ } \phi^\prime(\weight^\top\inputs_j) d\weight = \lim_{r\rightarrow 0^+} \frac{1}{\mu(B_r^-)} \int_{B_r^-} \phi^\prime(\weight^\top\inputs_j) d\weight = \phi^\prime(\bz^\top \inputs_j)
\end{equation}
Now recall that $\sum_i \alpha_i \phi^\prime (x_i) \equiv 0$, we have
\begin{equation}
\begin{aligned}
	0 &= \lim_{r\rightarrow 0^+} \frac{1}{\mu(B_r^+)} \int_{B_r^+ } \sum_j \alpha_j \phi^\prime(\weight^\top\inputs_j) d\weight - \lim_{r\rightarrow 0^+} \frac{1}{\mu(B_r^-)} \int_{B_r^-} \sum_j \alpha_j \phi^\prime(\weight^\top\inputs_j) d\weight \\
	& = \sum_j \alpha_j \left(\lim_{r\rightarrow 0^+} \frac{1}{\mu(B_r^+)} \int_{B_r^+ } \phi^\prime(\weight^\top\inputs_j) d\weight - \lim_{r\rightarrow 0^+} \frac{1}{\mu(B_r^-)} \int_{B_r^-}  \phi^\prime(\weight^\top\inputs_j) d\weight \right) \\
	& = \sum_j \alpha_j \delta_{ij} = \alpha_i
\end{aligned}
\end{equation}
where $\delta_{ij}$ is the Kronecker delta. We complete the proof.

\subsection{Proof of Theorem~\ref{thm:bound-lambda}}
It has been shown that the Gram matrix for infinite width networks has the following form~\citep{xie2016diverse, arora2019fine}:
\begin{equation}
    \gram_{ij}^\infty = \left(\frac{\pi - \arccos(\inputs_i^\top\inputs_j)}{2\pi} \right) \inputs_i^\top \inputs_j \triangleq h(\inputs_i, \inputs_j)
\end{equation}
We note that any function defined on the unit sphere has a spherical harmonic decomposition:
\begin{equation}\label{eq:harmonic}
    h(\inputs_i, \inputs_j) = \sum_{u=1}^\infty \gamma_u \phi(\inputs_i) \phi(\inputs_j)
\end{equation}
where $\phi_u(\inputs): \sphere^{d-1} \rightarrow \real$ is a spherical harmonic basis. In eqn.~\eqref{eq:harmonic}, we sort the spectrum $\gamma_u$ by magnitude. As shown by~\citet{xie2016diverse}, $\gamma_n = \Omega\left( n^{\beta - 1} \right)$, where $\beta \in (0, 0.5)$. Then we can also define the truncated function:
\begin{equation}
   \left[\gram_{ij}^\infty\right]^n = h^{n}(\inputs_i, \inputs_j) = \sum_{u=1}^n \gamma_u \phi(\inputs_i) \phi(\inputs_j)
\end{equation}
Due to the fact that $\gram^\infty - \left[\gram^\infty\right]^n$ is PSD, we can then bound the smallest eigenvalue of $\left[\gram^\infty\right]^n$. Define a matrix $\mathbf{K} \in \real^{n \times n}$ whose rows are
\begin{equation}
    \mathbf{K}^i = \left[ \sqrt{\gamma_1}\phi_1(\inputs_i), ...,  \sqrt{\gamma_n}\phi_n(\inputs_i)\right]
\end{equation}
for $1 \leq i \leq n$. It is easy to show $
    \left[\gram^\infty\right]^n = \mathbf{K} \mathbf{K}^\top$
and $\lambda_\mathrm{min} (\mathbf{K} \mathbf{K}^\top) = \lambda_\mathrm{min} (\mathbf{K}^\top \mathbf{K})$. So we only need to bound $\lambda_\mathrm{min} (\mathbf{K}^\top \mathbf{K})$. Observe that $\mathbf{K}^\top \mathbf{K}$ is the sum of $n$ random matrices ${\mathbf{K}^i}^\top \mathbf{K}^i$. Based on the properties of spherical harmonic basis $\expect \left[\phi_i(\inputs)\phi_j(\inputs) \right] = \delta_{ij}$, we have $\lambda_\mathrm{min}\left(\expect\left[{\mathbf{K}^i}^\top \mathbf{K}^i \right]\right) = n\gamma_n$. Observe that all random matrices $\mathbf{K}^\top \mathbf{K}$ are PSD and upper-bounded in the sense
\begin{equation}
    \|{\mathbf{K}^i}^\top \mathbf{K}^i \|_2 \leq \trace({\mathbf{K}^i}^\top \mathbf{K}^i ) = \sum_{u=1}^n \gamma_u \phi_u(\inputs_i)^2 \leq h(\inputs_i, \inputs_i) = \frac{1}{2}
\end{equation}
Therefore, the matrix Chernoff bound gives
\begin{equation}
    \prob \left[ \lambda_\mathrm{min}(\left[\gram^\infty\right]^n) \leq (1 - \frac{1}{2}) \lambda_\mathrm{min}\left(\expect\left[\mathbf{K}^\top \mathbf{K} \right]\right) \right] \leq n \exp \left(- \frac{1}{2^2}\lambda_\mathrm{min}\left(\expect\left[\mathbf{K}^\top \mathbf{K} \right]\right) \right)
\end{equation}
According to~\citet{xie2016diverse}, $\gamma_n$ decays slower than $\bigo \left(1/n \right)$. This completes the proof.

\section{Proofs for Section~\ref{sec:sec-5}}
\subsection{Proof of Theorem~\ref{thm:generalization}}\label{app:thm-generalization}
Similar to~\citet{neyshabur2018the, arora2019fine}, we analyze the generalization error based on Rademacher Complexity~\citep{bartlett2002rademacher}. Based on Rademacher complexity, we have the following generalization bound.

\begin{lem}[Generalization bound with Rademacher Complexity~\citep{mohri2018foundations}]\label{lem:rade-bound}
Suppose the loss function $\ell(\cdot, \cdot)$ is bounded in $[0, c]$ and is $\rho$-Lipschitz in the first argument. Then with probability at least $1 - \delta$ over sample $\tdata$ of size $n$:
\begin{equation}
    \sup_{f\in\mathcal{F}} \left\{ \loss_\data(f) - \loss_\tdata(f) \right\} \leq 2\rho \rade_\tdata(\mathcal{F}) + 3c \sqrt{\frac{\log(2/\delta)}{2n}}
\end{equation}
\end{lem}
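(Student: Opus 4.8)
The statement is the standard data-dependent Rademacher generalization bound of~\citep{mohri2018foundations}, and the plan is to reconstruct its proof from three classical ingredients: McDiarmid's inequality, symmetrization, and the contraction principle. Introduce the induced loss class $\mathcal{G} = \{ (\inputs, y) \mapsto \ell(f(\inputs), y) : f \in \mathcal{F} \}$, whose members take values in $[0, c]$ by hypothesis, and write $\Phi(\tdata) = \sup_{f \in \mathcal{F}} \left( \loss_\data(f) - \loss_\tdata(f) \right)$ for the quantity we must bound, which is exactly the left-hand side of the claim.

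First I would verify that $\Phi$ has the bounded-differences property: since $\loss_\tdata(f)$ is an average of $n$ terms each lying in $[0, c]$, replacing a single training example changes $\Phi(\tdata)$ by at most $c/n$ uniformly. McDiarmid's inequality then gives, with probability at least $1 - \delta/2$, the estimate $\Phi(\tdata) \leq \expect[\Phi(\tdata)] + c\sqrt{\log(2/\delta)/(2n)}$. Next I would bound the expectation by the population Rademacher complexity through the usual ghost-sample symmetrization, obtaining $\expect[\Phi(\tdata)] \leq 2\,\expect_{\tdata}[\rade_\tdata(\mathcal{G})]$.

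To convert the population Rademacher complexity into the empirical one appearing in the claim, I would apply McDiarmid a second time to the map $\tdata \mapsto \rade_\tdata(\mathcal{G})$, which again has bounded differences of at most $c/n$; this yields $\expect_{\tdata}[\rade_\tdata(\mathcal{G})] \leq \rade_\tdata(\mathcal{G}) + c\sqrt{\log(2/\delta)/(2n)}$ with probability at least $1 - \delta/2$. A union bound over the two failure events and chaining the three estimates gives, with probability at least $1 - \delta$, a bound of the form $\Phi(\tdata) \leq 2\,\rade_\tdata(\mathcal{G}) + 3c\sqrt{\log(2/\delta)/(2n)}$; here the constant $3$ is precisely $2 + 1$, the factor $2$ from symmetrization multiplying the second concentration term and then adding the first.

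The last step is to peel the loss off the class. Since $\ell$ is $\rho$-Lipschitz in its first argument, the Ledoux--Talagrand contraction lemma gives $\rade_\tdata(\mathcal{G}) \leq \rho\, \rade_\tdata(\mathcal{F})$, and substituting produces the stated inequality. All three tools are classical, so I do not anticipate a genuine obstacle; the only place requiring care is tracking the $[0, c]$ scaling consistently through the two McDiarmid applications so that the constants come out as exactly $3c$ with the $\delta/2$ split, and applying the contraction lemma in the form that keeps the $\rho$ factor outside the supremum defining $\rade_\tdata$.
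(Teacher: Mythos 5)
This lemma is quoted verbatim from \citet{mohri2018foundations} and the paper offers no proof of its own, so the relevant comparison is with the textbook argument it cites — and your reconstruction is exactly that argument: bounded differences plus McDiarmid applied twice (once to the supremum, once to the empirical Rademacher complexity), ghost-sample symmetrization in between, a union bound giving the $3c\sqrt{\log(2/\delta)/(2n)}$ term, and Talagrand's contraction lemma to replace $\rade_\tdata(\mathcal{G})$ by $\rho\,\rade_\tdata(\mathcal{F})$. The proposal is correct and the constants are accounted for properly; nothing further is needed.
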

Therefore, to get the generalization bound, we only need to calculate the Rademacher complexity of a certain function class. Lemma~\ref{lem:restricted-class} suggests that the learned function $f(\weight, \ba)$ from NGD is in a restricted class of neural networks whose weights are close to the initialization $\weight(0)$. The following lemma bounds the Rademacher complexity of this function class.
\begin{lem}[Rademacher Complexity for a Restricted Class Neural Nets~\citep{arora2019fine}]\label{lem:rade-comp}
For given $A, B >0$, with probability at least $1 - \delta$ over the random initialization, the following function class
\begin{equation}\label{eq:fun-class}
    \mathcal{F}_{A,B} = \left\{f(\weight, \ba): \forall r\in[m], \|\weight_r - \weight_r(0) \|_2 \leq A, \|\weight - \weight(0)\|_2 \leq B \right\}
\end{equation}
has empirical Rademacher complexity bounded as
\begin{equation}
    \rade_\tdata(\mathcal{F}_{A,B}) \leq \frac{B}{\sqrt{2n}} \left(1+\left(\frac{2\log\frac{2}{\delta}}{m} \right)^{1/4} \right) + 2A^2\sqrt{m} + A\sqrt{2\log\frac{2}{\delta}}
\end{equation}
\end{lem}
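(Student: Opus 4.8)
The plan is to follow the standard two-layer ReLU argument of \citet{arora2019fine}, decomposing the network's deviation from initialization into a linearized ``frozen-activation'' piece and a small remainder arising from neurons whose activation sign flips. First I would expand the empirical Rademacher complexity $\rade_\tdata(\mathcal{F}_{A,B}) = \frac{1}{n}\expect_{\bm\epsilon}\left[\sup_{\weight}\sum_i \epsilon_i f(\weight,\ba,\inputs_i)\right]$ with $\epsilon_i$ i.i.d.\ Rademacher, and subtract the fixed initialization network: since $\sum_i\epsilon_i f(\weight(0),\ba,\inputs_i)$ is independent of the supremum variable and has zero $\bm\epsilon$-expectation, it suffices to bound the complexity of the difference class $\{\inputs\mapsto f(\weight,\ba,\inputs)-f(\weight(0),\ba,\inputs)\}$.

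Next, for each neuron $r$ and input $\inputs_i$ I would write $\relu(\weight_r^\top\inputs_i)-\relu(\weight_r(0)^\top\inputs_i)$ as the frozen-activation linear term $\indicator_{r,i}(\weight_r-\weight_r(0))^\top\inputs_i$, with $\indicator_{r,i}=\indicator\{\weight_r(0)^\top\inputs_i\ge 0\}$, plus a remainder $\iota_{r,i}$. The crucial elementary facts are: $\iota_{r,i}=0$ whenever the sign of the pre-activation does not change; $|\iota_{r,i}|\le|(\weight_r-\weight_r(0))^\top\inputs_i|\le A$ when it does; and a sign change forces $|\weight_r(0)^\top\inputs_i|\le A$. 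By subadditivity of the supremum I then bound $\rade_\tdata(\mathcal{F}_{A,B})$ by the Rademacher complexity of the linear part plus that of the remainder, handling the two separately.

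For the linear part, writing $\delta\weight=\weight-\weight(0)$ with $\|\delta\weight\|_2\le B$, the term is linear in $\delta\weight$, so the supremum evaluates to $B\sqrt{\sum_r\|\mathbf v_r\|_2^2}$ with $\mathbf v_r=\frac{a_r}{\sqrt m}\sum_i\epsilon_i\indicator_{r,i}\inputs_i$; Jensen's inequality and $\expect_{\bm\epsilon}[\epsilon_i\epsilon_j]=\delta_{ij}$ reduce the $\bm\epsilon$-expectation to $\frac{B}{n}\sqrt{\sum_i\frac1m\sum_r\indicator_{r,i}}$, using $\|\inputs_i\|_2=1$ and $a_r^2=1$. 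A concentration bound over the random initialization (Hoeffding, or bounded differences applied to the whole square-root to avoid a union factor) shows $\frac1m\sum_r\indicator_{r,i}\le\frac12+\bigo(\sqrt{\log(1/\delta)/m})$, which produces a bound of the stated form $\frac{B}{\sqrt{2n}}\left(1+(2\log\tfrac2\delta/m)^{1/4}\right)$ (the $\tfrac12$ activation fraction gives the $1/\sqrt2$, and square-rooting the deviation gives the quarter power). For the remainder I would bound it \emph{uniformly} in $\weight$: $|(\text{remainder})_i|\le\frac{A}{\sqrt m}N_i$, where $N_i=|\{r:|\weight_r(0)^\top\inputs_i|\le A\}|$ counts the flippable neurons; Gaussian anti-concentration bounds $\expect N_i$ and Hoeffding controls its deviation, yielding $\frac{A}{\sqrt m}N_i\le 2A^2\sqrt m+A\sqrt{2\log\tfrac2\delta}$. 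Since this bound contains no supremum and $N_i$ is independent of $\weight$, the remainder's Rademacher complexity is at most $\frac1n\sum_i\sup_\weight|(\text{remainder})_i|\le 2A^2\sqrt m+A\sqrt{2\log\tfrac2\delta}$, completing the sum of the three terms.

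I expect the main obstacle to be the remainder analysis: one must control the activation-pattern flips uniformly over the entire ball $\|\weight-\weight(0)\|_2\le A$ per neuron, which is precisely why the flip event is recast as the initialization-only condition $|\weight_r(0)^\top\inputs_i|\le A$, and then obtain the high-probability count bound on $N_i$ through Gaussian anti-concentration together with Hoeffding. A secondary bookkeeping point is to combine the two high-probability events — the activation-fraction concentration feeding the linear term and the flippable-count concentration feeding the remainder — by a union bound so that the final estimate holds with probability at least $1-\delta$ over the random initialization.
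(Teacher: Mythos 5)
The paper provides no proof of this lemma at all—it is imported verbatim (as Lemma 5.4) from \citet{arora2019fine}—and your reconstruction is exactly the standard argument from that source: the same decomposition of $f(\weight,\ba,\cdot)-f(\weight(0),\ba,\cdot)$ into a frozen-activation linear term plus a sign-flip remainder, the same Cauchy--Schwarz/Jensen treatment of the linear part with McDiarmid applied to the aggregate $\sum_{i,r}\indicator_{r,i}$ to get the $(2\log\frac{2}{\delta}/m)^{1/4}$ correction, and the same uniform bound $\frac{A}{\sqrt m}N_i$ on the remainder via the observation that a flip forces $|\weight_r(0)^\top\inputs_i|\le A$, controlled by Gaussian anti-concentration plus Hoeffding. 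The one caveat is that anti-concentration for $\weight_r(0)\sim\normal(\mathbf{0},\nu^2\iden)$ actually gives a middle term scaling as $A^2\sqrt{m}/\nu$ (Arora et al.\ write it as $2R^2\sqrt{m}/\kappa$), so the bound as stated here, with the initialization scale dropped, holds only for $\nu=\Theta(1)$—an imprecision inherited from the paper's statement, not a gap in your argument.
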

With Lemma~\ref{lem:rade-bound} and~\ref{lem:rade-comp} at hand, we are only left to bound the distance of the weights to their initialization. Recall the update rule for $\weight$:
\begin{equation}
    \weight(k+1) = \weight(k) - \lr \jacobian(k)^\top \gram(k)^{-1} (\pred(k) - \target)
\end{equation}
The following lemma upper bounds the distance for each hidden unit.
\begin{lem}\label{lem:single-weight}
If two conditions hold for $s = 0, ..., k$, then we have
\begin{equation}
    \|\weight_r(k+1) - \weight_r(0)\|_2 \leq \frac{4\sqrt{n}\|\target - \pred(0)\|_2}{\sqrt{m}\lambda_0} = \bigo\left( \frac{n}{\lambda_0 m^{1/2}\delta^{1/2}} \right)
\end{equation}
\end{lem}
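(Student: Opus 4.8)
The plan is to bound the displacement of a \emph{single} hidden unit by telescoping over iterations, exploiting the very particular structure of the Jacobian for the network in eqn.~\eqref{eq:network}. First I would write out the per-unit update. Differentiating $f$ with respect to $\weight_r$ gives $\partial \pred_i / \partial \weight_r = \frac{1}{\sqrt{m}} a_r \inputs_i \indicator\{\weight_r^\top \inputs_i \geq 0\}$, so the block of $\jacobian(s)^\top \gram(s)^{-1}(\pred(s) - \target)$ corresponding to $\weight_r$ is
\begin{equation}
    \weight_r(s+1) - \weight_r(s) = -\frac{\lr a_r}{\sqrt{m}} \sum_{i=1}^n \inputs_i \, \indicator\{\weight_r(s)^\top \inputs_i \geq 0\} \, \left[\gram(s)^{-1}(\pred(s) - \target)\right]_i .
\end{equation}
The crucial feature is the explicit $1/\sqrt{m}$ prefactor, which is exactly what forces each individual weight to move far less than the aggregate quantity bounded in Lemma~\ref{lem:restricted-class}.

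Next I would bound one step. Using $|a_r| = 1$ and $\|\inputs_i\|_2 = 1$ (Assumption~\ref{ass:data}), the inequality $\|\bm{v}\|_1 \leq \sqrt{n} \|\bm{v}\|_2$ applied to $\bm{v} = \gram(s)^{-1}(\pred(s) - \target)$, and $\|\gram(s)^{-1}\|_2 = 1/\lambda_\mathrm{min}(\gram(s))$, I obtain
\begin{equation}
    \|\weight_r(s+1) - \weight_r(s)\|_2 \leq \frac{\lr \sqrt{n}}{\sqrt{m}} \, \frac{\|\pred(s) - \target\|_2}{\lambda_\mathrm{min}(\gram(s))} .
\end{equation}
Here I invoke the two assumed conditions for $s = 0, \dots, k$: the linear convergence $\|\pred(s) - \target\|_2 \leq (1-\lr)^{s/2}\|\pred(0) - \target\|_2$ (Theorem~\ref{thm:d-convergence}) and the uniform Gram lower bound $\lambda_\mathrm{min}(\gram(s)) \geq \tfrac{1}{2}\lambda_0$, which follows from Lemma~\ref{lem:full-rank} together with Lemma~\ref{lem:g_bound}.

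Then I would sum over $s$ by the triangle inequality, $\|\weight_r(k+1) - \weight_r(0)\|_2 \leq \sum_{s=0}^k \|\weight_r(s+1) - \weight_r(s)\|_2$, and collapse the resulting geometric series via $\sum_{s=0}^\infty (1-\lr)^{s/2} = \frac{1}{1 - \sqrt{1-\lr}} \leq \frac{2}{\lr}$. The factor $\lr$ cancels, leaving a bound independent of $k$:
\begin{equation}
    \|\weight_r(k+1) - \weight_r(0)\|_2 \leq \frac{4\sqrt{n} \, \|\pred(0) - \target\|_2}{\sqrt{m} \, \lambda_0} .
\end{equation}
Finally I would substitute the high-probability estimate $\|\pred(0) - \target\|_2 = \bigo(\sqrt{n/\delta})$ (obtained by Markov's inequality in the proof of Theorem~\ref{thm:d-convergence}) to recover the stated $\bigo(n/(\lambda_0 m^{1/2}\delta^{1/2}))$ rate.

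The content of the argument lies in two places: the cancellation of $\lr$ through the geometric sum, which is what guarantees the bound does not degrade as $k \to \infty$, and the $1/\sqrt{m}$ prefactor in the per-row Jacobian, which together ensure each hidden unit barely moves. The one step to handle carefully is the Cauchy--Schwarz conversion of the $\ell_1$ sum over the $n$ data points into an $\ell_2$ norm, since that is where the favorable $\sqrt{n}$ (rather than $n$) dependence enters; everything else is routine bookkeeping given the two conditions.
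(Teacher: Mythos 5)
Your proposal is correct and follows essentially the same route as the paper's proof: a per-iteration bound of the form $\|\weight_r(s+1)-\weight_r(s)\|_2 \leq \frac{2\lr\sqrt{n}}{\sqrt{m}\lambda_0}\|\pred(s)-\target\|_2$ (with the $\sqrt{n}$ coming from the $\ell_1$-to-$\ell_2$ conversion and the $2/\lambda_0$ from the Gram lower bound), followed by a geometric sum that cancels the $\lr$. The only cosmetic difference is that the paper first rewrites $(1-\lr)^{s/2}$ as $(1-\lr/2)^s$ before summing, whereas you sum the square-root series directly; both give the same $2/\lr$ factor.
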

To analyze the whole weight vector, we start with ideal case -- infinite width network. In that case, both $\jacobian$ and $\gram$ are constant matrix throughout the training, and the function error decay exponentially. It is easy to show that the distance is given by
\begin{equation}\label{eq:ideal-dist}
    \|\weight(\infty) - \weight(0)\|_2 = \sqrt{(\target - \pred(0))^\top \left(\gram^\infty\right)^{-1} (\target - \pred(0))}
\end{equation}
In the case of finite wide networks, $\jacobian$ and $\gram$ would change along the weights, but we can bound the changes and show the norms are small if the network are wide enough. Therefore, the distance is dominated by eqn.~\eqref{eq:ideal-dist}. From Lemma~\ref{lem:restricted-class}, we know $\|\weight(k+1) - \weight(0)\|_2 = \bigo\left(\sqrt{\frac{n}{\lambda_0 \delta }}\right)$. According to Lemma~\ref{lem:jaco-pert}, it is easy to show that $\|\jacobian(k) - \jacobian(0) \|_2 = \bigo\left(\frac{n^{2/3}}{\vars^{1/3}\lambda_0^{1/6} m^{1/6} \delta^{1/2}} \right)$ and $\|\gram(k) - \gram(0) \|_2 = \bigo \left( \frac{n^{4/3}}{\vars^{2/3}\lambda_0^{1/3} m^{1/3} \delta } \right)$. 
With these bounds at hand, we are ready to bound $\|\weight(\infty) - \weight(0) \|_2$ for finite wide networks.

\begin{lem}\label{lem:gen-dist}
    Under the same setting as Theorem~\ref{thm:d-convergence}, with probability at least $1 - \delta$ over the random initialization, we have
    \begin{equation}
    \|\weight - \weight(0)\|_2 \leq \sqrt{\target^\top (\gram^\infty)^{-1} \target} + \bigo \left( \sqrt{\frac{n}{\lambda_0 \delta}} \vars \right) + \frac{\mathrm{poly}(n, \frac{1}{\lambda_0}, \frac{1}{\delta})}{m^{1/4}}
    \end{equation}
\end{lem}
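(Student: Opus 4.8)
The plan is to compare the true NGD trajectory against an idealized one in which the Jacobian and Gram matrix are frozen at their initial values, evaluate the idealized displacement in closed form, and then pay three small error terms to pass from it to the claimed bound. First I would write the total displacement as the telescoping sum of the updates,
\begin{equation}
    \weight(\infty) - \weight(0) = -\lr\sum_{s=0}^\infty \jacobian(s)^\top \gram(s)^{-1}\left(\pred(s) - \target\right).
\end{equation}
In the idealized dynamics where $\jacobian(s)\equiv\jacobian(0)$ and $\gram(s)\equiv\gram(0)$, the output-space update is exactly linear, $\pred(s) - \target = (1-\lr)^s(\pred(0)-\target)$, so the geometric series telescopes and, using $\gram(0)=\jacobian(0)\jacobian(0)^\top$, the idealized displacement has norm exactly $\sqrt{(\target-\pred(0))^\top\gram(0)^{-1}(\target-\pred(0))}$ -- this is the identity already recorded in eqn.~\eqref{eq:ideal-dist}, now specialized to $\gram(0)$ rather than $\gram^\infty$.

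Next I would account for three sources of error by repeated triangle inequalities. (i) \emph{Finite-width trajectory error.} Replacing $\jacobian(s)^\top\gram(s)^{-1}$ by $\jacobian(0)^\top\gram(0)^{-1}$ in each summand, and the true residual by its idealized counterpart, the per-step error is controlled by the drifts $\|\jacobian(s)-\jacobian(0)\|_2=\bigo(m^{-1/6})$ and $\|\gram(s)-\gram(0)\|_2=\bigo(m^{-1/3})$ derived above from Lemma~\ref{lem:jaco-pert}, together with the linear-convergence bound $\|\pred(s)-\target\|_2\leq(1-\lr)^{s/2}\|\pred(0)-\target\|_2$ from Theorem~\ref{thm:d-convergence}; summing the resulting geometric series (finite since $\lr=\bigo(1)$) produces an error that decays polynomially in $m$. (ii) \emph{Gram concentration.} Replacing $\gram(0)^{-1}$ by $(\gram^\infty)^{-1}$ inside the quadratic form costs, via $|\sqrt a-\sqrt b|\leq\sqrt{|a-b|}$ and $\lambda_\mathrm{min}(\gram(0)),\lambda_\mathrm{min}(\gram^\infty)\geq\tfrac34\lambda_0$, a factor proportional to $\|\gram(0)-\gram^\infty\|_2^{1/2}\|\target-\pred(0)\|_2$, which the matrix-concentration estimate behind Lemma~\ref{lem:full-rank} again bounds by a polynomially-decaying-in-$m$ quantity; (i) and (ii) together give the stated $\mathrm{poly}(n,1/\lambda_0,1/\delta)/m^{1/4}$ term. (iii) \emph{Dropping $\pred(0)$.} By the triangle inequality in the $(\gram^\infty)^{-1}$-norm, $\sqrt{(\target-\pred(0))^\top(\gram^\infty)^{-1}(\target-\pred(0))}$ differs from $\sqrt{\target^\top(\gram^\infty)^{-1}\target}$ by at most $\sqrt{\pred(0)^\top(\gram^\infty)^{-1}\pred(0)}\leq\|\pred(0)\|_2/\sqrt{\lambda_0}$; since $\expect\|\pred(0)\|_2^2=\bigo(n\vars^2)$, Markov's inequality gives $\|\pred(0)\|_2=\bigo(\sqrt{n/\delta}\,\vars)$ with probability $1-\delta$, yielding the $\bigo(\sqrt{n/(\lambda_0\delta)}\,\vars)$ term.

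Finally I would assemble the three bounds and take a union bound over the underlying high-probability events (the initial Gram eigenvalue via Lemma~\ref{lem:full-rank}, the Jacobian-perturbation event of Lemma~\ref{lem:jaco-pert}, and the Markov bound on $\|\pred(0)\|_2$) to obtain the claimed inequality. The main obstacle is step (i): unlike the clean continuous-time identity, the true discrete trajectory never freezes $\jacobian$ and $\gram$, so I must carefully track the drift of the preconditioned map $\jacobian(s)^\top\gram(s)^{-1}=\jacobian(s)^+$ along the \emph{entire} optimization path -- via a perturbation bound for the pseudoinverse in terms of $\|\jacobian(s)-\jacobian(0)\|_2$ and $\lambda_\mathrm{min}(\gram(0))$ -- and sum these drifts against the geometrically decaying residual. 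Once those perturbation estimates are in hand, steps (ii) and (iii) are routine bookkeeping.
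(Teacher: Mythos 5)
Your proposal is correct and follows essentially the same route as the paper's proof: telescope the updates, isolate the frozen-Jacobian/frozen-Gram geometric series as the main term, and pay separate errors for the Jacobian drift, the Gram-inverse drift along the trajectory (the paper's Lemma~\ref{lem:norm-inverse}, via a Neumann-series expansion), the $\gram(0)\to\gram^\infty$ concentration, and the $\pred(0)$ contribution. The only cosmetic difference is that the paper peels off $\pred(0)$ at the start by writing $\pred(s)-\target=-(1-\lr)^s\target+\zeta(s)$ (Lemma~\ref{lem:norm-pred-error}), whereas you carry $\target-\pred(0)$ through the main computation and drop $\pred(0)$ at the end by the triangle inequality in the $(\gram^\infty)^{-1}$-norm; both yield the same three terms.
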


Finally, we know that for any sample $\tdata$ drawn from data distribution $\data$, with probability at least $1 - \delta / 3$ over the random initialization, the following hold simutaneously:
\begin{enumerate}
    \item Optimization succeeds (Theorem~\ref{thm:d-convergence}):
        \begin{equation}
            \left\| \pred(k) - \target \right\|_2^2 \leq (1 - \lr)^k \cdot \bigo\left( \frac{n}{\delta} \right) \leq \frac{n\epsilon^2}{4}
        \end{equation}
        This implies an upper bound on the training error:
        \begin{equation}
        \begin{aligned}
            \loss_\tdata(f) &= \frac{1}{n} \sum_{i=1}^n \left[ \ell(u_i(k), y_i) \right] \leq \frac{1}{n} \sum_{i=1}^n | u_i(k) - y_i | \\
            &\leq \frac{1}{\sqrt{n}} \left\| \pred(k) - \target \right\|_2 = \frac{\epsilon}{2}
        \end{aligned}
        \end{equation}
    \item The learned function $f(\weight, \ba)$ belongs to the restricted function class~\eqref{eq:fun-class}.
    \item The function class $\mathcal{F}_{A,B}$ has Rademacher complexity bounded as
        \begin{equation}
        \begin{aligned}
            \rade_\tdata\left( \mathcal{F}_{A,B} \right) &\leq \sqrt{\frac{\target^\top (\gram^\infty)^{-1}\target}{2n}} + \bigo \left(\frac{\vars}{\sqrt{\lambda_0 \delta}} \right) + \frac{\mathrm{poly}(n, \frac{1}{\lambda_0}, \frac{1}{\delta})}{\vars^{1/2}m^{1/4}} \\
            &= \sqrt{\frac{\target^\top (\gram^\infty)^{-1}\target}{2n}} + \frac{\epsilon}{4}
        \end{aligned}
        \end{equation}
\end{enumerate}

Also, with the probability at least $1 - \delta/3$ over the sample $\tdata$, we have
\begin{equation}
    \sup_{f\in\mathcal{F}} \left\{ \loss_\data(f) - \loss_\tdata(f) \right\} \leq 2 \rade_\tdata(\mathcal{F}) + 3 \sqrt{\frac{\log(6/\delta)}{2n}}
\end{equation}
Taking a union bound, we have know that with probability at least $1 - \frac{2}{3}\delta$ over the sample $\tdata$ and the random initialization, we have
\begin{equation}
    \sup_{f\in\mathcal{F}} \left\{ \loss_\data(f) - \loss_\tdata(f) \right\} \leq \sqrt{\frac{2\target^\top (\gram^\infty)^{-1}\target}{n}} + 3\sqrt{\frac{\log(6/\delta)}{2n}} + \frac{\epsilon}{2}
\end{equation}
which implies
\begin{equation}
   \loss_\data(f) \leq \sqrt{\frac{2\target^\top (\gram^\infty)^{-1}\target}{n}} + 3\sqrt{\frac{\log(6/\delta)}{2n}} + \epsilon
\end{equation}

\subsection{Technical Proofs for Generalization Analysis}
\begin{proof}[Proof of Lemma~\ref{lem:single-weight}]
    From Theorem~\ref{thm:d-convergence}, we have 
    \begin{equation}
        \left\|\target - \pred(k) \right\|_2 \leq \sqrt{(1 - \lr)^k} \left\|\target - \pred(0) \right\|_2 \leq (1 - \frac{\lr}{2})^k \left\|\target - \pred(0) \right\|_2
    \end{equation}
    Recall the natural gradient update rule:
    \begin{equation}
        \weight(k+1) = \weight(k) - \lr \jacobian(k)^\top \gram(k)^{-1} (\pred(k) - \target)
    \end{equation}
    By $\lambda_{\mathrm{min}}(\gram(k)) \geq \frac{\lambda_0}{2}$, we have $\|\gram(k)^{-1} (\pred(k) - \target)\|_2 \leq \frac{2}{\lambda_0}\|\pred(k) - \target\|_2$ and then
    \begin{equation}
        \left\|\weight_r(k+1) - \weight_r(k)\right\|_2 \leq \frac{2\lr \sqrt{n}}{\sqrt{m }\lambda_0} \left\|\pred(k) - \target \right\|_2
    \end{equation}
    Therefore, we have
    \begin{equation}
    \begin{aligned}
        \left\|\weight_r(k) - \weight_r(0)\right\|_2 &\leq \sum_{s=0}^{k-1}\left\|\weight_r(s+1) - \weight_r(s)\right\|_2 \leq \sum_{s=0}^{k-1} \frac{2\lr \sqrt{n}}{\sqrt{m }\lambda_0} \left\|\pred(s) - \target \right\|_2 \\
        &\leq \frac{2\lr \sqrt{n}}{\sqrt{m }\lambda_0} \sum_{s=0}^{k-1} (1- \frac{\lr}{2})^s \left\|\pred(0) - \target \right\|_2 \\
        & \leq \frac{2\lr \sqrt{n}}{\sqrt{m }\lambda_0} \frac{2}{\lr} \left\|\pred(0) - \target \right\|_2 = \bigo \left(\frac{n}{\lambda_0 m^{1/2}\delta^{1/2}} \right)
    \end{aligned}
    \end{equation}
\end{proof}

\begin{proof}[Proof of Lemma~\ref{lem:gen-dist}]
To bound the norm of distance, we first decompose the total distance into the sum of each weight update.
\begin{equation}\label{eq:decomp}
\begin{aligned}
    \weight(k) - \weight(0)
    &= \sum_{s=0}^{k-1} \left(\weight(s+1) - \weight(s) \right) \\
    &= - \sum_{s=0}^{k-1}\lr \jacobian(s)^\top \gram(s)^{-1} (\pred(s) - \target)
\end{aligned}
\end{equation}
We then analyze the term $\pred(k) - \target$, which evolves as follows.
\begin{lem}\label{lem:norm-pred-error}
Under the same setting as Theorem~\ref{thm:d-convergence}, with probability at least $1 - \delta$ over the random initialization, we have
\begin{equation}\label{eq:output-dyn}
    \pred(k) - \target = -(1 - \lr)^k \target + \mathbf{\zeta}(k)
\end{equation}
where $\| \mathbf{\zeta}(k) \|_2 = \bigo \left( (1-\lr)^k \sqrt{\frac{n}{\delta}}\vars + k \left(1- \frac{\lr}{2}\right)^{k-1}\lr\frac{n^{7/6}}{\vars^{1/3}\lambda_0^{2/3}m^{1/6}\delta} \right)$
\end{lem}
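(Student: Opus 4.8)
The plan is to track the prediction error $\pred(k) - \target$ by reusing the output-space recursion established in the proof of Theorem~\ref{thm:ngd}, separating the clean geometric decay from the accumulated discretization error. Recall from eqn.~\eqref{eq:update} that a single NGD step satisfies $\pred(k+1) - \pred(k) = \lr(\target - \pred(k)) + \error(k)$, where the perturbation term $\error(k)$ (the quantity $\circled{1}$ in that derivation) obeys $\|\error(k)\|_2 \leq \lr C \|\pred(k) - \target\|_2$ with $C = \bigo(m^{-1/6})$, as certified by Condition~\ref{con:stable-jaco} in the proof of Theorem~\ref{thm:d-convergence}. Rearranging yields the affine recursion $\pred(k+1) - \target = (1-\lr)(\pred(k) - \target) + \error(k)$.

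First I would unroll this recursion to obtain
\begin{equation}
    \pred(k) - \target = (1-\lr)^k\bigl(\pred(0) - \target\bigr) + \sum_{s=0}^{k-1} (1-\lr)^{k-1-s}\,\error(s) .
\end{equation}
Splitting $(1-\lr)^k(\pred(0)-\target) = -(1-\lr)^k\target + (1-\lr)^k\pred(0)$ immediately isolates the claimed leading term $-(1-\lr)^k\target$ and identifies the residual as $\mathbf{\zeta}(k) = (1-\lr)^k\pred(0) + \sum_{s=0}^{k-1}(1-\lr)^{k-1-s}\error(s)$. It then remains to bound the two pieces of $\mathbf{\zeta}(k)$, which correspond exactly to the two terms in the stated bound.

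For the first piece I would control $\|\pred(0)\|_2$ by a second-moment computation: since $a_r$ is zero-mean and independent of $\weight_r$, we have $\expect[\pred_i(0)] = 0$ and $\expect[\pred_i(0)^2] = \expect_{g\sim\normal(0,\vars^2)}[\relu(g)^2] = \vars^2/2$, so $\expect\|\pred(0)\|_2^2 = \bigo(n\vars^2)$; Markov's inequality then gives $\|\pred(0)\|_2 = \bigo(\sqrt{n/\delta}\,\vars)$ with probability $1-\delta$, yielding the term $(1-\lr)^k\sqrt{n/\delta}\,\vars$. For the second piece I would combine the per-step bound $\|\error(s)\|_2 \leq \lr C\,\|\pred(s)-\target\|_2$ with the linear convergence $\|\pred(s)-\target\|_2 \leq (1-\lr/2)^s\|\pred(0)-\target\|_2$ and $\|\pred(0)-\target\|_2 = \bigo(\sqrt{n/\delta})$, substituting the explicit $C = \bigo(n^{2/3}\vars^{-1/3}\lambda_0^{-2/3}m^{-1/6}\delta^{-1/2})$ implied by the Jacobian-perturbation estimate $\|\jacobian(s)-\jacobian(0)\|_2 = \bigo(n^{2/3}\vars^{-1/3}\lambda_0^{-1/6}m^{-1/6}\delta^{-1/2})$ of Lemma~\ref{lem:jaco-pert}, applied at the radius of Lemma~\ref{lem:restricted-class}. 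Bounding $(1-\lr)^{k-1-s}\leq(1-\lr/2)^{k-1-s}$ so that the two decay factors combine into a uniform $(1-\lr/2)^{k-1}$ independent of $s$, the geometric sum collapses to $k(1-\lr/2)^{k-1}$ times the per-step magnitude $\lr\,n^{7/6}\vars^{-1/3}\lambda_0^{-2/3}m^{-1/6}\delta^{-1}$, which is precisely the second term.

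The main obstacle is the careful bookkeeping of the constant $C$ through the Jacobian-perturbation analysis: one must substitute the correct radius from Lemma~\ref{lem:restricted-class}, propagate the $m^{-1/6}$ scaling with the right powers of $n$, $\vars$, $\lambda_0$ and $\delta$, and verify that the resulting exponents assemble to $n^{7/6}\vars^{-1/3}\lambda_0^{-2/3}m^{-1/6}\delta^{-1}$. A secondary subtlety is the mismatch of decay rates — the recursion decays at rate $(1-\lr)$ while the per-step error bound decays at $(1-\lr/2)$ — which forces the crude estimate $(1-\lr)^{k-1-s}\le(1-\lr/2)^{k-1-s}$ and produces the linear-in-$k$ prefactor; one must also ensure that all the relevant high-probability events (the Gram lower bound of Lemma~\ref{lem:full-rank}, the Jacobian stability of Lemma~\ref{lem:jaco-pert}, and the Markov bound on $\|\pred(0)\|_2$) hold simultaneously via a union bound.
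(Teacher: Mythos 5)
Your proposal is correct and follows essentially the same route as the paper's own proof: the paper likewise writes the one-step update as $\pred(k+1)-\pred(k) = \lr(\target-\pred(k)) + \mathbf{\xi}(k)$ with $\|\mathbf{\xi}(k)\|_2 = \bigo\bigl(\lr(1-\tfrac{\lr}{2})^{k}\tfrac{n^{7/6}}{\vars^{1/3}\lambda_0^{2/3}m^{1/6}\delta}\bigr)$, unrolls the recursion, splits off $-(1-\lr)^k\target$, bounds $(1-\lr)^k\pred(0)$ by the same second-moment/Markov argument, and collapses the geometric sum into the $k(1-\lr/2)^{k-1}$ prefactor exactly as you describe. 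Your bookkeeping of the constant $C$ (including the extra $\lambda_0^{-1/2}$ from $\|\jacobian(k)^\top\gram(k)^{-1}\|_2$) reproduces the paper's exponents, so no gaps.
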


Plugging eqn.~\eqref{eq:output-dyn} into eqn.~\eqref{eq:decomp}, we have
\begin{equation}
    \weight(k) - \weight(0) = \sum_{s=0}^{k-1} \lr \jacobian(s)^\top \gram(s)^{-1}(1-\lr)^s \target - \lr\jacobian(s)^\top \gram(s)^{-1}\mathbf{\zeta}(s)
\end{equation}

The RHS term in above equation is considered perturbation and we can upper bound their norm easily. By Lemma~\ref{lem:norm-pred-error}, we have 
\begin{equation}\label{eq:error-1}
    \|\jacobian(s)^\top \gram(s)^{-1}\zeta(s) \|_2 = \bigo \left( (1-\lr)^s \sqrt{\frac{n}{\lambda_0 \delta}}\vars + s \left(1- \frac{\lr}{2}\right)^{s-1} \lr \frac{n^{7/6}}{\vars^{1/3}\lambda_0^{7/6}m^{1/6}\delta} \right)
\end{equation}
Plugging eqn.~\eqref{eq:error-1} into eqn.~\eqref{eq:decomp}, we have
\begin{equation}\label{eq:error-decomp}
\begin{aligned}
    \weight(k) - \weight(0) &= \sum_{s=0}^{k-1} \lr \jacobian(s)^\top  \gram(s)^{-1}(1-\lr)^s\target + \error_1 \\
    & = \sum_{s=0}^{k-1} \lr \jacobian(s)^\top (\gram^\infty)^{-1}(1-\lr)^s\target + \error_1 + \error_2 \\
    &= \sum_{s=0}^{k-1} \lr \jacobian(0)^\top  (\gram^\infty)^{-1}(1-\lr)^s\target + \error_1 + \error_2 + \error_3 \\
\end{aligned}
\end{equation}
For $\error_1$, we have $\|\error_1\|_2 = \bigo\left( \sqrt{\frac{n}{\lambda_0 \delta}}\vars + \frac{n^{7/6}}{\vars^{1/3}\lambda_0^{7/6}m^{1/6}\delta}\right)$ by using the following inequality:
\begin{equation}
    \sum_{s=0}^{k-1} s \left(1 - \frac{\lr}{2} \right)^{s-1} \leq \sum_{s=0}^{\infty} s \left(1 - \frac{\lr}{2} \right)^{s-1} = \frac{4}{\lr^2}
\end{equation}
For $\error_2$, we have
\begin{equation}\label{eq:error-2}
\begin{aligned}
    \|\error_2 \|_2 &\leq \sum_{s=0}^{k-1} \lr (1-\lr)^s\|\jacobian(s)\|_2 \| \gram(s)^{-1} - (\gram^\infty)^{-1} \|_2 \|\target \|_2 
\end{aligned}
\end{equation}
In eqn~\eqref{eq:error-2}, we need first bound $\| \gram(s)^{-1} - (\gram^\infty)^{-1} \|_2 \|\target \|_2$. The following lemma bounds this norm by expanding the inverse with an infinite series.

\begin{lem}\label{lem:norm-inverse}
Under the same setting as Theorem~\ref{thm:d-convergence}, with probability at least $1 - \delta$ over the random initialization, we have
\begin{equation}
    \| \gram(k)^{-1} - \gram(0)^{-1} \|_2 \approx \| \gram(k)^{-1} - (\gram^\infty)^{-1} \|_2 = \bigo\left(\frac{n^{4/3}}{\vars^{2/3}\lambda_0^{7/3} m^{1/3} \delta} \right)
\end{equation}
\end{lem}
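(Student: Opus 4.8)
The plan is to control $\|\gram(k)^{-1} - \gram(0)^{-1}\|_2$ by a resolvent expansion and reduce everything to two quantities that are already under control: a uniform lower bound on the smallest eigenvalues of the Gram matrices, and the Jacobian perturbation bound of Lemma~\ref{lem:jaco-pert}. Writing $\Delta = \gram(k) - \gram(0)$, I would expand the inverse as a Neumann series,
$$\gram(k)^{-1} - \gram(0)^{-1} = -\sum_{j=1}^\infty \gram(0)^{-1}\left[\Delta\,\gram(0)^{-1}\right]^j ,$$
which is valid provided $\|\gram(0)^{-1}\|_2\|\Delta\|_2 < 1$. Summing the resulting geometric series yields
$$\|\gram(k)^{-1} - \gram(0)^{-1}\|_2 \leq \frac{\|\gram(0)^{-1}\|_2^2\,\|\Delta\|_2}{1 - \|\gram(0)^{-1}\|_2\|\Delta\|_2} ,$$
so for sufficiently large width $m$ the estimate is dominated by the leading term $\|\gram(0)^{-1}\|_2^2\|\Delta\|_2$. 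Equivalently one could invoke the resolvent identity $\gram(k)^{-1} - \gram(0)^{-1} = -\gram(k)^{-1}\Delta\,\gram(0)^{-1}$ directly, which gives the same order.

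Next I would bound the three factors. For the inverse norms, Lemma~\ref{lem:full-rank} gives $\lambda_\mathrm{min}(\gram(0)) \geq \tfrac{3}{4}\lambda_0$ with high probability, while Lemma~\ref{lem:contradiction} guarantees $\lambda_\mathrm{min}(\gram(k)) \geq \tfrac{4}{9}\lambda_\mathrm{min}(\gram(0))$ throughout training; hence $\|\gram(0)^{-1}\|_2 = \bigo(1/\lambda_0)$ and $\|\gram(k)^{-1}\|_2 = \bigo(1/\lambda_0)$. For $\|\Delta\|_2$, I would split $\Delta = \jacobian(0)(\jacobian(k)-\jacobian(0))^\top + (\jacobian(k)-\jacobian(0))\jacobian(0)^\top + (\jacobian(k)-\jacobian(0))(\jacobian(k)-\jacobian(0))^\top$ and bound each term by the triangle inequality, using $\|\jacobian(0)\|_2 = \bigo(\sqrt{n})$ (which follows from $\trace(\gram(0)) = \bigo(n)$) together with the perturbation estimate $\|\jacobian(k)-\jacobian(0)\|_2 = \bigo(n^{2/3}\vars^{-1/3}\lambda_0^{-1/6}m^{-1/6}\delta^{-1/2})$ obtained by evaluating Lemma~\ref{lem:jaco-pert} at the training radius of Lemma~\ref{lem:restricted-class}. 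This gives $\|\Delta\|_2 = \bigo(n^{4/3}\vars^{-2/3}\lambda_0^{-1/3}m^{-1/3}\delta^{-1})$, and multiplying by $\|\gram(0)^{-1}\|_2^2 = \bigo(1/\lambda_0^2)$ produces the claimed rate $\bigo(n^{4/3}\vars^{-2/3}\lambda_0^{-7/3}m^{-1/3}\delta^{-1})$.

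For the approximate equality with $(\gram^\infty)^{-1}$, I would note that $\E[\gram(0)] = \gram^\infty$, so by the same matrix-Chernoff argument used in Lemma~\ref{lem:full-rank} one obtains $\|\gram(0) - \gram^\infty\|_2 = \bigo(m^{-1/2})$ up to $n$ and $\delta$ factors. Applying the resolvent identity once more gives $\|\gram(0)^{-1} - (\gram^\infty)^{-1}\|_2 = \bigo(\lambda_0^{-2}\|\gram(0)-\gram^\infty\|_2)$, and since this concentration term decays faster in $m$ than the $m^{-1/3}$ training-perturbation term, it is absorbed into the stated rate; the triangle inequality then yields the same order for $\|\gram(k)^{-1} - (\gram^\infty)^{-1}\|_2$, justifying the $\approx$.

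The main obstacle is controlling $\|\Delta\|_2 = \|\gram(k)-\gram(0)\|_2$, which requires propagating the Jacobian perturbation of Lemma~\ref{lem:jaco-pert} carefully: one must take $m$ large enough that the Neumann series converges (i.e. $\|\gram(0)^{-1}\|_2\|\Delta\|_2 < 1$), and ensure that the eigenvalue lower bound on $\gram(0)$, the radius bound on $\|\weight(k)-\weight(0)\|_2$, and the Jacobian perturbation all hold simultaneously under a single union bound whose total failure probability is $\delta$.
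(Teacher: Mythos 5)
Your overall architecture --- a resolvent/Neumann expansion of the inverse, the eigenvalue lower bounds from Lemmas~\ref{lem:full-rank} and~\ref{lem:contradiction} giving $\|\gram(0)^{-1}\|_2,\ \|\gram(k)^{-1}\|_2 = \bigo(1/\lambda_0)$, and matrix concentration for $\|\gram(0)-\gram^\infty\|_2$ --- is sound, and the one-step resolvent identity is arguably cleaner than the paper's route (the paper writes $\gram^{-1} = \alpha\sum_{s\ge 0}(\iden-\alpha\gram)^s$ and solves a termwise recursion $e(s+1)\le(1-\alpha\lambda_0)e(s)+\alpha E(1-\tfrac{1}{2}\alpha\lambda_0)^s$, arriving at the same $E/\lambda_0^2$ factor). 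The genuine gap is in your bound on $\|\Delta\|_2=\|\gram(k)-\gram(0)\|_2$. In the decomposition $\Delta = \jacobian(0)(\jacobian(k)-\jacobian(0))^\top + (\jacobian(k)-\jacobian(0))\jacobian(0)^\top + (\jacobian(k)-\jacobian(0))(\jacobian(k)-\jacobian(0))^\top$, bounding the cross terms by submultiplicativity gives $\|\jacobian(0)\|_2\,\|\jacobian(k)-\jacobian(0)\|_2 = \bigo\bigl(\sqrt{n}\cdot n^{2/3}\vars^{-1/3}\lambda_0^{-1/6}m^{-1/6}\delta^{-1/2}\bigr)$, which decays only as $m^{-1/6}$ and therefore dominates the quadratic term $\|\jacobian(k)-\jacobian(0)\|_2^2=\bigo(m^{-1/3})$ for large $m$. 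Your claimed $\|\Delta\|_2=\bigo(n^{4/3}\vars^{-2/3}\lambda_0^{-1/3}m^{-1/3}\delta^{-1})$ silently keeps only the quadratic term; as written, the triangle-inequality route delivers a strictly weaker $m^{-1/6}$ rate and hence a weaker version of the lemma.

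To recover the stated $m^{-1/3}$ rate you must exploit the entrywise structure of the ReLU Gram matrix rather than generic operator-norm bounds, exactly as in the flip-counting argument of Lemma~\ref{lem:jaco-pert-2}. Letting $A_{ir}$ be the event that the activation pattern of unit $r$ on input $\inputs_i$ flips, each entry satisfies $|\gram_{ij}(k)-\gram_{ij}(0)| \le \frac{1}{m}\sum_{r}\left(\indicator\{A_{ir}\}+\indicator\{A_{jr}\}\right)$, and since at most $\bigo(k)$ units flip per input (with $k = R^{2/3}m^{2/3}\vars^{-2/3}\delta^{-2/3}$ and $R$ the training radius from Lemma~\ref{lem:restricted-class}), one gets $\|\gram(k)-\gram(0)\|_2 \le \|\gram(k)-\gram(0)\|_{\fisher} = \bigo(nk/m) = \bigo\bigl(n^{4/3}\vars^{-2/3}\lambda_0^{-1/3}m^{-1/3}\delta^{-1}\bigr)$ --- the same quantity that controls $\|\jacobian(k)-\jacobian(0)\|_{\fisher}^2$, which is why the Gram perturbation inherits the $m^{-1/3}$ decay even though the Jacobian perturbation itself only decays as $m^{-1/6}$. (Equivalently, you may keep your three-term decomposition but bound the cross terms entrywise by $\frac{1}{m}\sum_r\indicator\{A_{jr}\}=\bigo(k/m)$ instead of by the product of spectral norms.) With that replacement, your resolvent argument goes through and yields the paper's conclusion.
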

It is easy to show that both $\|\jacobian \|_2$ and $\| \target\|_2$ are $\bigo \left(\sqrt{n}\right)$. Plugging them back into eqn.~\eqref{eq:error-2}, we have
\begin{equation}
 \|\error_2\|_2 = \bigo \left(\frac{n^{7/3}}{\vars^{2/3}\lambda_0^{7/3} m^{1/3} \delta} \right)
\end{equation}
Similarly, we can bound $\error_3$ as follows,
\begin{equation}\label{eq:error-3}
\begin{aligned}
    \|\error_3 \|_2 \leq \sum_{s=0}^{k-1} \lr (1-\lr)^s\|\jacobian(s) - \jacobian(0)\|_2 \| (\gram^\infty)^{-1} \|_2 \|\target \|_2  = \bigo\left(\frac{n^{7/6}}{\vars^{1/3}\lambda_0^{7/6}m^{1/6}\delta^{1/2}} \right)
\end{aligned}
\end{equation}

We also bound the first term in eqn.~\eqref{eq:error-decomp}:
\begin{equation}\label{eq:core-term}
\begin{aligned}
&\quad \left\| \sum_{s=0}^{k-1} \lr \jacobian(0)^\top  (\gram^\infty)^{-1}(1-\lr)^s\target \right\|_2^2 \\
&\leq \left\|\jacobian(0)^\top  (\gram^\infty)^{-1}\target \right\|_2^2 \\
&= \target^\top (\gram^\infty)^{-1} \jacobian(0) \jacobian(0)^\top (\gram^\infty)^{-1} \target \\
&\leq \target^\top (\gram^\infty)^{-1} \target + \|\gram(0) - \gram^\infty \|_2 \|(\gram^\infty)^{-1} \|_2^2 \|\target \|_2^2 \\
&= \target^\top (\gram^\infty)^{-1} \target + \bigo \left( \frac{n^2 \sqrt{\log\frac{n}{\delta}}}{\lambda_0^{2} m^{1/2} } \right)
\end{aligned}
\end{equation}

Combining bounds~\eqref{eq:error-1}, \eqref{eq:error-2}, \eqref{eq:error-3} and \eqref{eq:core-term}, we have
\begin{equation}
\begin{aligned}
\|\weight - \weight(0)\|_2 &\leq \sqrt{y^\top (\gram^\infty)^{-1} \target} + \bigo\left(\sqrt{\frac{n^2 \sqrt{\log\frac{n}{\delta}}}{\lambda_0^{2} m^{1/2} }} \right) \\
& \quad + \bigo\left( \sqrt{\frac{n}{\lambda_0 \delta}}\vars + \frac{n^{7/6}}{\vars^{1/3}\lambda_0^{7/6}m^{1/6}\delta}\right) + \bigo\left(\frac{n^{7/3}}{\vars^{2/3}\lambda_0^{7/3} m^{1/3} \delta} \right) \\
& =  \sqrt{\target^\top (\gram^\infty)^{-1} \target} + \bigo \left( \sqrt{\frac{n}{\lambda_0 \delta}}\vars \right) + \frac{\mathrm{poly}(n, \frac{1}{\lambda_0}, \frac{1}{\delta})}{\vars^{1/3}m^{1/6}}
\end{aligned}
\end{equation}
This finishs the proof.
\end{proof}

\begin{proof}[Proof of Lemma~\ref{lem:norm-pred-error}]
Recall that in eqn.~\eqref{eq:update}, we have
\begin{equation}\label{eq:update-2}
\begin{aligned}
    \pred(k+1) - \pred(k) 
    &\leq \lr \left(\target - \pred(k)\right) + \lr \left[\jacobian(k+1) - \jacobian(k)\right] \jacobian(k)^\top\gram(k)^{-1} (\target - \pred(k)) \\
    &= \lr \left(\target - \pred(k)\right) + \mathbf{\xi}(k) 
\end{aligned}
\end{equation}
where $\left\|\mathbf{\xi}(k)\right\|_2 = \bigo \left(\lr \left(1-\frac{\lr}{2}\right)^{k} \frac{n^{7/6}}{\vars^{1/3}\lambda_0^{2/3}m^{1/6}\delta}  \right)$. Applying eqn.~\eqref{eq:update-2} recursively, we get
\begin{equation}
\begin{aligned}
    \pred(k) - \target &= (1-\lr)^k (\pred(0) - \target) + \sum_{s=0}^{k-1}(1-\lr)^s \mathbf{\xi}(k-1-s) \\
    & = -(1-\lr)^k \target + (1-\lr)^k \pred(0) + \sum_{s=0}^{k-1}(1-\lr)^s \mathbf{\xi}(k-1-s)
\end{aligned}
\end{equation}
For the second term, we have $\|(1 - \lr)^k \pred(0) \|_2 = \bigo \left((1-\lr)^k \sqrt{\frac{n}{\delta}} \vars\right)$.
For the last term, we have
\begin{equation}
\begin{aligned}
    \left\| \sum_{s=0}^{k-1}(1-\lr)^s \mathbf{\xi}(k-1-s) \right\|_2 &\leq \sum_{s=0}^{k-1} \lr \left(1 - \frac{\lr}{2}\right)^{k-1} \bigo \left(\frac{n^{7/6}}{\vars^{1/3}\lambda_0^{2/3}m^{1/6}\delta} \right)\\
    & =  \bigo \left(k\lr \left(1-\frac{\lr}{2}\right)^{k-1} \frac{n^{7/6}}{\vars^{1/3}\lambda_0^{2/3}m^{1/6}\delta} \right)
\end{aligned}
\end{equation}
\end{proof}

\begin{proof}[Proof of Lemma~\ref{lem:norm-inverse}]
Notice that $\gram^{-1} = \alpha \sum_{s=0}^{\infty} \left(\iden - \alpha \gram \right)^s$, as long as $\alpha$ is small enough so that $\iden - \alpha \gram$ is positive definite. Therefore, instead of bounding $\|\gram(s)^{-1} - (\gram^\infty)^{-1} \|_2$ directly, we can upper bound the following quantity:
\begin{equation}\label{eq:norm-series}
    \left\|\sum_{s=0}^\infty \left( \iden - \alpha\gram(k)\right)^s - \left( \iden - \alpha\gram^\infty\right)^s \right\|_2 \leq \sum_{s=0}^\infty \left\| \left( \iden - \alpha\gram(k)\right)^s - \left( \iden - \alpha\gram^\infty\right)^s \right\|_2
\end{equation}
Let $e(s)$ denote $\left\| \left( \iden - \alpha\gram(k)\right)^s - \left( \iden - \alpha\gram^\infty\right)^s \right\|_2$, we then have the following recursion:
\begin{equation}
    e(s+1) \leq \left\|\iden - \alpha\gram^\infty \right\|_2 e(s) + \left\|\alpha\gram(k) - \alpha\gram^\infty \right\|_2 \left\|\left(\iden - \alpha\gram(k)\right)^s \right\|_2
\end{equation}
Recall that $\lambda_\mathrm{min}\left( \gram^\infty \right) = \lambda_0 > 0$ and $\lambda_\mathrm{min}\left( \gram(k) \right) \geq \frac{1}{2}\lambda_0$, we have
\begin{equation}\label{eq:recursion}
    e(s+1) \leq (1-\alpha \lambda_0) e(s) + \left\|\alpha\gram(k) - \alpha\gram^\infty \right\|_2 \left(1 - \frac{1}{2}\alpha \lambda_0\right)^s
\end{equation}
Also we can easily bound the deviation of $\gram(k)$ from $\gram^\infty$ as follows,
\begin{equation}\label{eq:dist-inf}
\begin{aligned}
    \left\| \gram(k) - \gram^\infty \right\|_2 &\leq \left\| \gram(k) - \gram(0) \right\|_2 + \left\| \gram(0) - \gram^\infty \right\|_2 \\
    &= \bigo\left(\frac{n^{4/3}}{\vars^{2/3}\lambda_0^{1/3} m^{1/3} \delta } \right) + \bigo \left( \frac{n \sqrt{\log\frac{n}{\delta}}}{ m^{1/2}} \right) \\
    & = \bigo\left(\frac{n^{4/3}}{\vars^{2/3}\lambda_0^{1/3} m^{1/3} \delta } \right) \triangleq E
\end{aligned}
\end{equation}
Plugging eqn.~\eqref{eq:dist-inf} into eqn.~\eqref{eq:recursion}, we have 
\begin{equation}
    e(s+1) \leq (1-\alpha \lambda_0) e(s) + \alpha E \left(1 - \frac{1}{2}\alpha \lambda_0\right)^s
\end{equation}
With basic techniques of series theory and the fact $e(1) = \alpha E$, we have the following result:
\begin{equation}\label{eq:series}
    e(s) \leq \alpha E \left[\frac{\left(1-\frac{1}{2}\alpha \lambda_0 \right)^s}{\frac{1}{2}\alpha\lambda_0} + \left( 2-\frac{1}{\frac{1}{2}\alpha\lambda_0}\right) \left( 1 - \alpha\lambda_0 \right)^{s-1}  \right], \; \forall s \geq 1
\end{equation}
Plugging eqn.~\eqref{eq:series} back into eqn.~\eqref{eq:norm-series}, we get
\begin{equation}
\begin{aligned}
    & \quad \left\|\sum_{s=0}^\infty \left( \iden - \alpha\gram(k)\right)^s - \left( \iden - \alpha\gram^\infty\right)^s \right\|_2 \\
    &\leq \sum_{s=1}^\infty \alpha E \left[\frac{\left(1-\frac{1}{2}\alpha \lambda_0 \right)^s}{\frac{1}{2}\alpha\lambda_0} + \left( 2-\frac{1}{\frac{1}{2}\alpha\lambda_0}\right) \left( 1 - \alpha\lambda_0 \right)^{s-1}  \right] \\
    &= \frac{2E}{\alpha \lambda_0^2} = \bigo\left( \frac{n^{4/3}}{\vars^{2/3}\lambda_0^{7/3} m^{1/3} \delta  \alpha} \right)
\end{aligned}
\end{equation}
and 
\begin{equation}
    \left\|\gram(k)^{-1} - (\gram^\infty)^{-1} \right\|_2 = \bigo \left( \frac{n^{4/3}}{\vars^{2/3}\lambda_0^{7/3} m^{1/3} \delta} \right)
\end{equation}
This completes the proof.
\end{proof}

\section{Asymptotic analysis}
As shown by~\citet{lee2019wide}, infinitely wide neural networks are linearized networks in the sense that the first-order Taylor expansion is accurate and the training dynamics of wide neural networks are well captured by linearized models in practice. Assume linearity (i.e., the Jacobian matrix $\jacobian$ is constant over $\weight$), we have the following result:
\begin{equation}
\begin{aligned}
    \pred(k+1) - \pred(k) &= \pred\left(\weight(k) - \lr \jacobian(k)^\top \gram(k)^{-1} (\pred(k) - \target)\right) - \pred(\weight(k)) \\
    &= -\int_{s=0}^\lr \left\langle \frac{\partial \pred\left(\weight(k)\right)}{\partial \weight^\top}, \jacobian(k)^\top \gram(k)^{-1}(\pred(k) - \target) \right\rangle ds \\
    & \quad + \int_{s=0}^\lr \left\langle \underbrace{\frac{\partial \pred\left(\weight(k)\right)}{\partial \weight^\top} - \frac{\partial \pred\left(\weight(s)\right)}{\partial \weight^\top}}_{= \mathbf{0}}, \jacobian(k)^\top \gram(k)^{-1}(\pred(k) - \target) \right\rangle ds \\
    & = -\int_{s=0}^\lr \left\langle \jacobian(k), \jacobian(k)^\top \gram(k)^{-1}(\pred(k) - \target) \right\rangle ds 
     = \lr \left(\target - \pred(k)\right)
\end{aligned}
\end{equation}
So it is easy to show that
\begin{equation}
    \target - \pred(k+1) = (1 - \lr) \left(\target - \pred(k) \right)
\end{equation}
which means exact natural gradient descent can converge with one iteration if we take $\lr = 1$, demonstrating the effectiveness of natural gradient descent.

Moreover, under the linearized network, we can conveniently analyze the trajectories of GD and NGD. Notably, we analyze the paths taken by GD and NGD in both output space and weight space. The dynamics with infinitesimal step size are summarized as follow.
\begin{equation}
\begin{aligned}[c]
    \frac{d}{dt}\left(\target - \pred(t)\right) &=  - \gram \left(\target - \pred(t) \right) \\
    \frac{d}{dt}\weight_\mathrm{GD}(t) & = \jacobian^\top \left(\target - \pred(t) \right) \\
\end{aligned}
\qquad
\begin{aligned}[c]
    \frac{d}{dt}\left(\target - \pred(t)\right) &= - \left(\target - \pred(t) \right) \\
    \frac{d}{dt}\weight_\mathrm{NGD}(t) & = \jacobian^\top \gram^{-1} \left(\target - \pred(t) \right) \\
\end{aligned}
\end{equation}
By standard matrix differential equation theory, we have
\begin{equation}
\begin{aligned}[c]
    &\weight_\mathrm{GD}(t) = \jacobian^\top \gram^{-1}\left(\iden - \exp(-\gram t)  \right)\left(\target - \pred(0) \right) + \weight(0) \\
    &\weight_\mathrm{NGD}(t) = \left(1 - \exp(- t) \right)\jacobian^\top \gram^{-1}\left(\target - \pred(0) \right) + \weight(0)
\end{aligned}
\end{equation}
By some manipulations, we can show $\weight_\mathrm{GD}(\infty) = \weight_\mathrm{NGD}(\infty)$,
which means that gradient descent and natural gradient descent converge to the same point, though these two paths are typically different. Notably, the limiting distance $\weight(\infty) - \weight(0)$ (for both GD and exact NGD) converges to min-norm least squares solution. 

\end{document}